\newtheorem{feature}{Characteristic}
\newtheorem{theorem}{Theorem}
\newtheorem{proposition}{Proposition}
\newtheorem{conjecture}{Conjecture}
\newtheorem{definition}{Definition}
\newtheorem{example}{Example}
\newtheorem{fact}{Fact}
\newcommand{\qed} {\hfill{$\Box$}}
\newenvironment{proof}[1]{\noindent{\bf Proof#1: }}{\qed\medskip}
\newcommand{\eat}[1]{}
\newcommand{\ind}{\mathit{indegree}}
\newcommand{\arggraph}{\ensuremath{\mathds{A}}\xspace}
\newcommand{\arggraphprime}{\ensuremath{\mathds{A^\prime}}\xspace}
\newcommand{\argset}{\ensuremath{\mathcal{A}}\xspace}
\newcommand{\attack}{\ensuremath{\mathcal{R}_a}\xspace}
\newcommand{\support}{\ensuremath{\mathcal{R}_s}\xspace}
\newcommand{\bs}{\ensuremath{\texttt{S}}\xspace}
\newcommand{\ads}{\ensuremath{ADS_{\bs}}\xspace}
\newcommand{\accdegr}[3]%
	{\ensuremath{\texttt{Deg}
	^{\ifthenelse{\equal{#1}{}}{\bs}{#1}}%
	_{\ifthenelse{\equal{#2}{}}{\arggraph}{#2}}
	{\ifthenelse{\equal{#3}{}}{ }{(#3)}}
	\xspace}}
\newcommand{\accdegrvec}[2]%
	{\ensuremath{\texttt{Deg}
	^{\ifthenelse{\equal{#1}{}}{\bs}{#1}}%
	_{\ifthenelse{\equal{#2}{}}{\arggraph}{#2}}
	\xspace}}
\newcommand{\maxd}{\ensuremath{\texttt{Max}_\texttt{S}}\xspace}
\newcommand{\mind}{\ensuremath{\texttt{Min}_\texttt{S}}\xspace}
\newcommand{\neutrald}{\ensuremath{\texttt{Neutral}_\texttt{S}}\xspace}	
\newcommand{\defaultarggraph}{$\arggraph  =\langle {\argset, G, w}\rangle $\xspace}
\newcommand{\alternativearggraph}{	$\arggraph ^\prime =\langle {\argset ^\prime, G^\prime, w^\prime}\rangle $\xspace}
\newcommand{\attackers}[2]{\ensuremath{\texttt{Att}_{\ifthenelse{\equal{#1}{}}
				{\arggraph}{#1}}({#2})}}
\newcommand{\defenders}[2]{\ensuremath{\texttt{Def}_{\ifthenelse{\equal{#1}{}}
				{\arggraph}{#1}}({#2})}}
\newcommand{\supporters}[2]{\ensuremath{\texttt{Sup}_{\ifthenelse{\equal{#1}{}}
				{\arggraph}{#1}}({#2})}}
\newcommand{\wasa}{\textsc{wasa}\xspace}
\newcommand{\bwsa}{\textsc{bwsa}\xspace}
\newcommand{\backers}[2]{\ensuremath{\texttt{Back}_{\ifthenelse{\equal{#1}{}}
				{\arggraph}{#1}}({#2})}}
\newcommand{\detractors}[2]{\ensuremath{\texttt{Detr}_{\ifthenelse{\equal{#1}{}}
				{\arggraph}{#1}}({#2})}}
\newcommand{\parent}[2]{\ensuremath{\texttt{Parent}_{\ifthenelse{\equal{#1}{}}
				{\arggraph}{#1}}({#2})}}
\newcommand{\semdir}{\ensuremath{dir}}
\newcommand{\fsemdir}{\ensuremath{f^{\semdir}}\xspace}
\newcommand{\semsdir}{\ensuremath{sdir}\xspace}
\newcommand{\semrsig}{\ensuremath{rsig}\xspace}
\newcommand{\fsemrsig}{\ensuremath{f^{\semrsig}}\xspace}
\newcommand{\semrd}{\ensuremath{rd}}
\newcommand{\fsemrd}{\ensuremath{f^{\semrd}}}
\newcommand{\norm}[1]{\interleave #1\interleave}
\newcommand{\rownorm}[1]{\interleave #1\interleave_\infty}
\newcommand{\matr}[1]{\left(\begin{smallmatrix}#1\end{smallmatrix}\right)}
\renewcommand{\Pr}{\mathit{Pr}}
\begin{document}
\title{Bipolar Weighted Argumentation Graphs}

\author{Till Mossakowski \and Fabian Neuhaus \and
Otto-von-Guericke Universit\"at Magdeburg, Germany}

\date{}
\maketitle
\begin{abstract}
This paper discusses the semantics of weighted argumentation graphs that are bipolar, i.e.\ 
contain both attacks and support graphs. The work builds on previous work by 
Amgoud, Ben-Naim et. al.
 \cite{DBLP:conf/ijcai/AmgoudB16, amgoudevaluation}, which presents and compares several semantics 
for  argumentation graphs that
 contain only supports or only attacks relationships, respectively.  	
\end{abstract}

\section{Introduction}

In \cite{argSpaceExpl} we presented a prototype of a system that enables users to explore 
arguments for a given topic. This involves these  steps:
	\begin{enumerate}
		\item \emph{Argument identification.}
		 In the first step, arguments concerning a given topic are 	identified in a given text and
		 attacking and supporting relationships between
		  the propositions are established. The result is
		 an argumentation graph. In the future we hope to use argumentation mining techniques to 
		 automate this step. At this time, this is done manually by marking up some text. 
		 \item \emph{Initial plausibility assessment.} The propositions (represented as the nodes 
		 of the argumentation graph)
		  are assigned an initial plausibility  based on Web
		  searches. 
		 \item \emph{Acceptability degree calculation.} The acceptability degree of the propositions are calculated. This calculation depends on the plausibility assessment 
		 of the propositions and the attacks and supports between them. 
		 \item \emph{Recommendation.} Based on the acceptability degree calculation
		 the system determines an answer and the argument for it. 
		 \item \emph{User Interaction.} The user is able to explore the argument graph and 
		 may manually change the plausibility rating of any proposition. The system
		   recalculates the acceptability degrees and its recommendation. 
	\end{enumerate}

In this paper we focus on the third step: the calculation of acceptability degrees of the 
propositions of the arguments. For the sake of this paper we treat the propositions
as arguments in an argument network.\footnote{
This is a significant simplification, since it only allows us to represent convergent arguments 
in the sense of \cite{Walton2009}. Further, this approach does not consider that the strength of
a support or attack may be weighted as well; see section \ref{sec:future}.
}

Abstract argumentation has been extensively studied since Dung's pioneering
work \cite{DBLP:journals/ai/Dung95} on argumentation graphs featuring
an attack relation between arguments. While the Dung framework uses a
classical logic approach in the sense that arguments can be only true
or false, the framework has been generalised to gradual (or
rank-based, or weighted) argumentation graphs that assign real numbers
as weights to arguments.  These have been widely discussed, in
particular in
\cite{Cayrolc05,DBLP:conf/ictai/CayrolL11,DBLP:conf/sum/AmgoudB13a},
and also for the bipolar case (involving both attack and support
relations) \cite{DBLP:conf/ecsqaru/CayrolL05}. However, these works do
not considering initial weightings (also called initial plausibilities).  
Given that the result of the
initial plausibility assessment provide a continuous initial
weighting, the most relevant previous work is on the evaluation
argumentation graphs with support relationships by Leila Amgoud and
Jonathan Ben-Naim \cite{DBLP:conf/ijcai/AmgoudB16} and an unpublished
work by the same authors and two other authors on the evaluation of
argumentation graphs with attack relationships
\cite{amgoudevaluation}.

Since for our purposes we need to consider bipolar \cite{DBLP:journals/ijar/CayrolL13} argumentation graphs that contain both attack and support 
relationships, we need to generalise  the results of \cite{DBLP:conf/ijcai/AmgoudB16} and \cite{amgoudevaluation}. 
The result is a novel acceptability semantics for weighted argumentation graphs that contain both 
 attacks and supports between arguments. The outline of the paper follows the presentations in
 \cite{DBLP:conf/ijcai/AmgoudB16, amgoudevaluation}. 
  In section \ref{sec:basics} we introduce the basic notions. In section \ref{sec:characteristics} 
 we discuss the characteristics an acceptability semantics should have. These characteristics are characterised 
axiomatically. We discuss how these axioms relate to the axioms from 
 \cite{DBLP:conf/ijcai/AmgoudB16, amgoudevaluation}.  
  
In section \ref{sec:semantics} we discuss a semantics that meets the characteristics 
from section \ref{sec:characteristics}, show that it converges,
study some properties and derive a variant with
weights in the interval $(0,1)$.
In section \ref{sec:comparison} 
we compare our approach to  \cite{DBLP:conf/ijcai/AmgoudB16, amgoudevaluation} in more detail. Since a na\"{i}ve combination of the semantics \cite{DBLP:conf/ijcai/AmgoudB16, amgoudevaluation} to a bipolar one fails, we discuss two suitable modifications. Finally, in 
\ref{sec:future} we discuss some limitations of our approach and future work. 
 
The two main contributions of this work are as follows. Firstly, we
generalise the axiomatic framework of \cite{DBLP:conf/ijcai/AmgoudB16,
  amgoudevaluation} in various directions (bipolarity, unboundedness,
multi-graph characteristics), as well as strengthen it (partly much
stronger characteristics, as well as new ones like
Continuity). Secondly, we design an unbounded bipolar semantics for
weighted argumentation graphs that meets the developed
characteristics. This semantics meets the requirements that emerged
when developing a prototype of a system that enables users to explore
arguments for a given topic.

\section{Basic Concepts}\label{sec:basics}

In  \cite{DBLP:conf/ijcai/AmgoudB16, amgoudevaluation} argumentation graphs are represented as a 
set of weighted nodes, which represent the attacks, and a set of vertices, which represent an attack 
relationship or a support relationship, respectively. We choose an alternative representation.   An argument graph consists of three elements: a vector of arguments $\argset =a_1, \ldots a_n$, a matrix $G$ that determines the 
attack and support relationships between the arguments, and a weighting $w$ of the arguments which provides initial plausibilities.  
More specifically, $G$ is a square matrix of order $n$, which elements are either $-1, 0$ or $1$. 
Given a matrix $G$, if the element $g_{ij} = 1$, then this is intended to represent that the argument $a_j$ supports the
argument $a_i$; if $g_{ij} = -1$, then argument $a_j$ attacks $a_i$; and if $g_{ij} = 0$, then $a_j$ 
does neither support nor attack $a_i$. 
The vector $w$ assigns to each argument a real number to represent its initial plausibility. 
The larger
the $w(a)$ for some argument $a$ is, the larger its initial plausibility.  

Note that our approach  deviates
from the approach in \cite{DBLP:conf/ijcai/AmgoudB16, amgoudevaluation}, where
only values in the interval $[0,1]$ are considered. 
In this paper we decided to allow $\mathbb{R}$ as the value space, since 
there 
is no a priori reason why weights should always be restricted to 
the interval $[0,1]$, and, thus, we aim to support the 
more general case. 
For example, we plan to use hit counts as initial
plausibilities, which can be used directly without any normalisation. 
 $0$ is is the neutral value (neither plausible nor implausible).
Negative values denote implausibility (e.g.\ consider hit counts 
contradicting the argument). 
In section~\ref{sec:frenchSection} we show that the support argumentation graphs 
in \cite{DBLP:conf/ijcai/AmgoudB16} ranging in the interval $[0,1]$ may be considered as a special instance of \wasa{}. Within this approach $0$ is  the neutral value, and, hence, implausibility
cannot be expressed.
In section~\ref{sec:sigmoid-direct}  we show how our proposed semantics \label{br:sigmoiddirect}
may be adopted to support initial plausibilities and acceptability degrees in  
the interval $(0,1)$, which enables a direct comparison
to the semantics in  \cite{DBLP:conf/ijcai/AmgoudB16, amgoudevaluation}. 
One difference, though,  is that within this approach $\frac{1}{2}$ is the neutral value and that a value in the interval $(\frac{1}{2}, 0)$ expresses implausibility.

\begin{definition}[Weighted Attack/Support Argumentation Graph]
	A weighted  attack/support argumentation graph (\wasa) 
	is a triple \defaultarggraph{}, 	where 
	\begin{itemize}
		\item \argset is a vector of size $n$ (for some $n\in \mathbb{N}^+$), where all components of \argset are pairwise distinct. 
		\item $G=\{g_{ij}\}$ is a square matrix of order $n$ where $g_{ij}\in \{-1,0,1\}$, 
		\item $w$ is a vector in $\mathbb{R}^n$. 
	\end{itemize}	
\end{definition}
If \defaultarggraph{} is a \wasa and \arggraph is of size $n$, then $\arggraph$ consists of $n$ 
arguments. 

\begin{example}\label{ex:arggraph}
\

\begin{minipage}{0.5 \textwidth}
\[
\arggraph = \left \langle 
\left(\begin{smallmatrix}
a_1 \\
a_2 \\
a_3 \\
a_4
\end{smallmatrix}\right), 
\left(\begin{smallmatrix}
0 & 0 &  0 &  0 \\ 
1 & 1 & -1 & -1 \\ 
1 & 1 &  0 & -1 \\ 
0 & 0 &  0 &  0 
\end{smallmatrix}\right), 
\left(\begin{smallmatrix}
0.5 \\
2 \\
2 \\
1 
\end{smallmatrix}\right)
\right \rangle 
\]		
\end{minipage}
\begin{minipage}{0.3 \textwidth}
$$\xymatrix{
\underset{0.5}{a_1} \ar[r]\ar[d]   &  \underset{2}{a_2}  \ar@(r,u)\ar[dl]<0.5ex> \\
\underset{2}{a_3} \ar@{-{*}}[ur]<0.5ex>   &   \underset{1}{a_4}  \ar@{-{*}}[l] \ar@{-{*}}[u]
}$$
\end{minipage}	
\end{example}

The \wasa \arggraph in Example \ref{ex:arggraph} consists of four arguments, namely $a_1, a_2, a_3, a_4$. 
The second component of \arggraph determines that $a_1$ and $a_4$ are neither attacked nor supported. $a_1$ and $a_2$ support both $a_2$ and $a_3$. 
$a_2$ is attacked by $a_3$ and $a_4$, and $a_3$ is attacked by $a_4$. 
The third
component of $\arggraph$ assigns initial plausibilities to the arguments, namely the weights   $w(a_1)= 0.5, w(a_2) = 
2, w(a_3) = 2$ and $w(a_4) = 1$.
Example \ref{ex:arggraph} also contains a graphical representation of \arggraph, which represents 
support relationships as connections with an arrow head 
and attacks as connections with a round head. 

A \wasa is a representation of a set of arguments, their attack and support relationships, and the initial plausibility of the arguments. The question that this paper needs to address is: How do we calculate the acceptability of the arguments based on their initial plausibility and their relations? Following the terminology in
\cite{DBLP:conf/ijcai/AmgoudB16, amgoudevaluation}, an 
answer to this question is called an acceptability semantics:

\begin{definition}[Acceptability Semantics]
An acceptability semantics is a function \bs transforming any \wasa \defaultarggraph into a vector $\accdegr{}{}{}{}$ in $ \mathbb{R}^n$, where $n$ is the number of arguments in \arggraph.
For any argument $a_i$ in \argset, $\accdegr{}{}{a_i}$ is called the acceptability degree of $a_i$.
	\end{definition}

Obviously, there are many possible acceptability semantics. Example \ref{ex:nihlism} defines
the acceptability semantics $S^G$ that may have been embraced by the Greek sophist Gorgias, 
who believed that knowledge and communication is impossible. 
\begin{example}[Gorgias Semantics]
	\label{ex:nihlism}
	$S^{G}$ is the function such that for, any \wasa \arggraph that consists of $n$ arguments, 
	$S^{G}(\arggraph)$ is a vector of size $n$ such that $S^{G}(\arggraph) =
\left(\begin{smallmatrix}
0 \\
\vdots \vspace{5pt}\\
0 \\
\end{smallmatrix}\right )$.
\end{example}


According the Georgias Semantics any argument is equally acceptable and unacceptable, thus,  $\accdegr{S^G}{}{a} = 0$ for any argument $a$ in any \wasa  $\arggraph$.

Most people would probably agree that $S^{G}$ does not provide us with a useful tool for analysing 
argumentations. However, it raises the questions what requirements a suitable 
acceptability semantics should meet. We will discuss this question in the section \ref{sec:characteristics}. 


	
\section{Notation and auxiliary definitions}
	Unless otherwise specified \arggraph is a \wasa such that \defaultarggraph. If $a_1, \ldots, a_n$  are the components of  $\argset$  we denote by 
	\begin{itemize}
		\item 	\attackers{}{a_i} the set of all attackers of $a_i$ in 
		\arggraph, that is $\attackers{}{a_i} = \{ a_j  | \ g_{ij} = -1\} $; 
		\item 	\supporters{}{a_i} the set of all supporters of $a_i$ in \arggraph, 
		that is $\supporters{}{a_i} = \{ a_j  | \ g_{ij} = 1\} $;

		\item $\backers{}{a_i}$ and  $\detractors{}{a_i}$  are the sets of the backers of $a_i$ and the set of the detractors of $a_i$. They are defined recursively as 	the set of all arguments that directly or indirectly support $a_i$ (e.g., attacking an attacker of $a_i$) and, respectively,  	the set of all arguments that directly or indirectly attack $a_i$ (e.g., supporting an attacker of $a_i$). 
		Thus, $\backers{}{a_i}$ and $\detractors{}{a_i}$ are the minimal sets such that the following equations hold:
		\begin{multline}
			\backers{}{a_i} = \supporters{}{a_i} \cup \{ a_j | \ \exists 
			 x: a_j \in \supporters{}{x} \wedge x \in  \backers{}{a_i}\} \notag\\
			 %
			 \cup   \{ a_j | \ \exists 
			 			 x: a_j \in \attackers{}{x} \wedge x \in  \detractors{}{a_i}\} \notag
		\end{multline}
		\begin{multline}
			\detractors{}{a_i} = \attackers{}{a_i} \cup \{ a_j | \ \exists 
			 x: a_j \in \attackers{}{x} \wedge x \in  \backers{}{a_i}\} \notag\\
			 %
			 \cup   \{ a_j | \ \exists 
			 			 x: a_j \in \supporters{}{x} \wedge x \in  \detractors{}{a_i}\} \notag
		\end{multline}
		%

              \item $\parent{}{a_i}$ is the $i$th matrix row
                $(g_{i1},\ldots,g_{in})$ of $G$. It contains the
                parents of $a_i$ in the argument graph and hence
                combines the information of $\supporters{}{a_i}$ and
                $\attackers{}{a_i}$ in one vector.
\end{itemize}

\begin{definition}[Influence]
Given a \wasa \defaultarggraph and a vector $v\in\mathbb{R}^n$ (e.g.\
$v$ could be $w$), the \emph{influence} of $v$ on $a_i$ is defined as
the number
$$ \sum_{b\in \supporters{}{a_i}}v(b)\ - 
					\sum_{c\in \attackers{}{a_i}} v(c)
= \parent{}{a_i}v$$
The influence of $v$ in general is computed as the vector of the
individual influences:
$$ \left(  
 \begin{smallmatrix}
   \parent{}{a_1}v\\
   \vdots\\
   \parent{}{a_n}v
 \end{smallmatrix}\right)= Gv$$
\end{definition}
Note that supporters and attackers cancel each other out when
computing the influence. Moreover, support by an implausible argument
(weighted negatively) behaves like an attack, an vice verse, an attack
by an implausible argument behaves like a support. This is called
reverse impact, see Characteristics~\ref{f:rev-impact} below.

\begin{definition}[Isomorphism]
  Let   \defaultarggraph and \alternativearggraph be two \wasa, such that:
\[ 
	\argset = 
	\left(\begin{smallmatrix}
	a_1 \\
	\vdots\\
	a_n
	\end{smallmatrix}\right),
	\argset ^\prime = \left(\begin{smallmatrix}
  	a_1^\prime \\
  	\vdots\\
  	a_n^\prime
  	\end{smallmatrix}\right), 
    G =
  	\left(  \begin{smallmatrix}
	     g_{1 1} &  \cdots & g_{1 n} \\
	     \vdots  & \ddots & \vdots  \\
	    g_{n 1}  & \cdots & g_{n n}
	    \end{smallmatrix}\right) , 
    G^\prime =
     \left(  \begin{smallmatrix}
       g_{1 1}^\prime &  \cdots & g_{1 n}^\prime \\
       \vdots  & \ddots & \vdots  \\
      g_{n 1}^\prime  & \cdots & g_{n n}^\prime 
      \end{smallmatrix}\right)
\]

%

  %

%
  An isomorphism from  \arggraph
	to $\arggraph^\prime$ is a bijective function $f$ from \argset to $\argset^\prime$ such
	that the following holds, for any $a_i, a_j$,
	\begin{itemize}
		\item 	$ w( a_{i} ) = w^\prime(f( a_{i} ))$,
		\item if $f(a_i) = a^\prime _k$ and $f(a_j) = a^\prime _m$, 
				then $g_{i j} = g_{k m}^\prime  $.
	\end{itemize}
\end{definition}

\begin{definition}[Union]
  Let   \defaultarggraph and \alternativearggraph be two \wasa 
   such that $\argset$ and $\argset^\prime$ do not share a component and   
\[ 
	\argset = 
	\left(\begin{smallmatrix}
	a_1 \\
	\vdots\\
	a_n
	\end{smallmatrix}\right),
	\argset ^\prime = \left(\begin{smallmatrix}
  	a_1^\prime \\
  	\vdots\\
  	a_m^\prime
  	\end{smallmatrix}\right), 
    G =
  	\left(  \begin{smallmatrix}
	     g_{1 1} &  \cdots & g_{1 n} \\
	     \vdots  & \ddots & \vdots  \\
	    g_{n 1}  & \cdots & g_{n n}
	    \end{smallmatrix}\right) , 
    G^\prime =
     \left(  \begin{smallmatrix}
       g_{1 1}^\prime &  \cdots & g_{1 m}^\prime \\
       \vdots  & \ddots & \vdots  \\
      g_{m 1}^\prime  & \cdots & g_{m m}^\prime 
      \end{smallmatrix}\right),
\]

\[ 
	w = 
	\left(\begin{smallmatrix}
	w_1 \\
	\vdots\\
	w_n
	\end{smallmatrix}\right),
	w ^\prime = \left(\begin{smallmatrix}
  	w_1^\prime \\
  	\vdots\\
  	w_m^\prime
  	\end{smallmatrix}\right)
\]

   The union $\arggraph	\oplus \arggraph ^\prime = \langle \argset ^{\dagger}, G ^{\dagger}, w^{\dagger}\rangle $  of \arggraph and \arggraphprime is defined as follows:
   \[
	\argset^{\dagger} = 
	\left(\begin{smallmatrix}
	a_1 \\
	\vdots\\
	a_n\\
  	a_1^\prime \\
  	\vdots\\
  	a_m^\prime
	\end{smallmatrix}\right),
    G 	^{\dagger}=
  	\left(  \begin{smallmatrix}
	     g_{1 1} &  \cdots & g_{1 n}& 0 &  \cdots  & 0\\
	     \vdots  & \ddots & \vdots  &  \vdots &  \ddots  &  \vdots\\
	    g_{n 1}  & \cdots & g_{n n} & 0 &  \cdots  & 0\\ 
      0&\cdots & 0 & g_{1 1}^\prime &  \cdots & g_{1 m}^\prime \\
      \vdots & \ddots &\vdots &  \vdots  & \ddots & \vdots  \\
      0 &\cdots & 0 & g_{m 1}^\prime  & \cdots & g_{m m}^\prime 	  
	    \end{smallmatrix}\right) , 	
 	w^{\dagger} = 
 	\left(\begin{smallmatrix}
 	w_1 \\
 	\vdots\\
 	w_n\\
   	w_1^\prime \\
   	\vdots\\
   	w_m^\prime
 	\end{smallmatrix}\right)
   \] 
To improve readability we will in the rest of the paper use a more compact notation, which does not 
list the individual components but refers to the matrixes that are merged. ("0" represent a zero matrix of appropriate dimensions, that is $n \times m$ and $m\times n$, respectively.)
   \[
	\argset^{\dagger} = 
	\left(\begin{smallmatrix}
	\argset \\
	\argset^\prime
	\end{smallmatrix}\right),
    G 	^{\dagger}=
  	\left(  \begin{smallmatrix}
		G & 0 \\
		0 & G^\prime
	    \end{smallmatrix}\right) , 	
 	w^{\dagger} = 
 	\left(\begin{smallmatrix}
 	w \\
 	w^\prime\\
 	\end{smallmatrix}\right)
   \] 
   	
\end{definition}
\section{Characteristics of Acceptability  Semantics
}	\label{sec:characteristics}
There are many possible acceptability semantics that one 
may consider for bipolar argumentation graphs. 
As Example \ref{ex:nihlism} illustrates, some of them are not useful. 
Thus, the question arises, which characteristics an 
acceptability semantics should have to be any good? 

In 
\cite{DBLP:conf/ijcai/AmgoudB16, amgoudevaluation} the authors enumerate several 
desirable characteristics, which they state axiomatically. These characteristics are distinguished between
mandatory and optional. In \cite{amgoudevaluation} eleven mandatory characteristics are discussed, 
\cite{DBLP:conf/ijcai/AmgoudB16} contains eleven similar%
\footnote{Technically, the characteristics are all different, since 
argument graphs  in \cite{DBLP:conf/ijcai/AmgoudB16} 
involve (only) support relationships and the argument graphs  in \cite{amgoudevaluation}
involve (only) attack relationships. Further, there are additional technical differences in their 
axiomatisations. However, for the purposes of this paper we disregard these differences.}
characteristics and two additional ones (Monotony, Boundedness).
%

%
%
%

\begin{table}
\centering
\begin{small}
\begin{tabular}{|l|l|l|}\hline
	    Support Graphs in  \cite{DBLP:conf/ijcai/AmgoudB16}  & Attack Graphs in \cite{amgoudevaluation} & Our notion\\ \hline
\multicolumn{3}{|c|}{Mandatory characteristics}\\ \hline
     Anonymity &  Anonymity  &  Anonymity\\ 
	 Independence &  Independence &  Independence\\ 
	 Equivalence &  Equivalence  &  Equivalence\\ 
	 Non-dilution &  Directionality &  Directionality\\ 
	 Minimality &  Maximality  & Conservativity\\ 
	Coherence & Proportionality & Initial Monotony \\ 
	 Dummy &  Neutrality &  Neutrality \\ 
	 	Monotony & -- & Parent Monotony\\ 
	Strengthening &  Weakening & Impact\\ 
	 Counting &  Counting & Impact \\
	 	Reinforcement &  Reinforcement &  Reinforcement \\ 
	Strengthening Soundness & Weakening Soundness & Causality\\ 
	Boundedness & -- & Stickiness \\ 
	-- & -- & Neutralisation\\ %
		-- & -- & Continuity \\  %
		-- & -- &  Interchangeability \\ \hline %
\multicolumn{3}{|c|}{Optional characteristics}\\ \hline
		-- & -- &  Linearity \\ 
		-- & -- &  Reverse impact \\ 
	-- & -- & Boundedness \\ \hline

\end{tabular}
\end{small}

\caption{Characteristics of acceptability semantics in  \cite{DBLP:conf/ijcai/AmgoudB16, amgoudevaluation}
 }
	\label{tab:characteristics}
\end{table}

In this section we discuss these characteristics and define them within our framework. 
Table \ref{tab:characteristics} provides an overview over the mandatory characteristics in 
\cite{DBLP:conf/ijcai/AmgoudB16, amgoudevaluation}  and maps them to the terminology that we 
use in this paper.

The definition of the mandatory characteristics in 
\cite{DBLP:conf/ijcai/AmgoudB16, amgoudevaluation}  within our framework
involves different kind of changes. First, the definitions in \cite{DBLP:conf/ijcai/AmgoudB16, amgoudevaluation} assume that weightings and acceptability degrees are within the interval $[0, 1]$, whereas  
we allow arbitrary real numbers. 
Second, we need to account for the fact that a \wasa may contain both attack and support  relationships. 
Third, because of the way the characteristics were formulated in  \cite{DBLP:conf/ijcai/AmgoudB16, amgoudevaluation} some of them allowed for unintended semantics. These three points will be detailed below. We formulated the axioms in a way that captures the intended characteristic in a more general way. 

The definition of the characteristics depends on three parameters. The
first one is a \emph{neutral acceptability degree}
$\neutrald$. Attacks or supports by arguments with the neutral
acceptability degree will have no effect. If not stated otherwise, we
assume $\neutrald=0$. The other two parameters are the minimum and
maximum acceptability degrees. They can be derived from the
acceptability degree space as follows: Let \bs be an acceptability
semantics. Its \emph{acceptability degree space} $\ads = \{x \ | \ x =
\accdegr{}{}{a} \mbox{ \ for some \wasa \defaultarggraph and $a$ in
  $\argset$}\}$.  If there is some $x\in \ads $ such that $x \geq y$
for all $y\in \ads $, then $\ads $ is bounded from above and its
\emph{maximum acceptability degree} $\maxd = x$. Otherwise, $\maxd$ is
undefined.  If there is some $x\in \ads$ such that $x \leq y$ for all
$y\in \ads$, then $\ads$ bounded from below and its \emph{minimum
  acceptability degree} $\mind = x$. Otherwise, $\mind$ is undefined.

The neutral value also plays a role in ``neutralising'' arguments as follows:
\begin{definition}[Isolation]
	Let \defaultarggraph be a \wasa such that $a_1, \ldots, a_n$ are the components of
	\argset{}.  The \emph{isolation}  $\arggraph |_{a_i}$  of $a_i$ in \arggraph  $(1\leq i\leq n)$
	is defined as follows 
 \[
 \arggraph |_{a_i} = 
 \left \langle \argset,
 \left(  \begin{smallmatrix}
 			       g_{1 1}^\prime &  \cdots & g_{1 n}^\prime \\
 			       \vdots  & \ddots & \vdots  \\
 			      g_{n 1}^\prime  & \cdots & g_{n n}^\prime
 			      \end{smallmatrix}\right) 
				  , 
				   \left( \begin{smallmatrix}
				  	w_{1}^\prime \\
				   	\vdots \\
				   	w_{n}^\prime  
				  \end{smallmatrix}\right)
				 \right \rangle \ \mathit{where}
 		\begin{cases}
 			g_{j k}^\prime = \neutrald & \mathit{if} \ j = i \mathit{\ or \ } k = i\\
 			g_{j k}^\prime = g_{j k} & \mathit{otherwise} \\
		 	w_j^\prime  =\neutrald & \mathit{if} \ a_j = a_i \\
			w_j^\prime  = w_j & \mathit{otherwise}
 		\end{cases}
 \]
If $c_1, \ldots , c_n $ are arguments in \argset, then  $\arggraph |_{c_1, \ldots , c_n}$ is defined as 
$(\ldots((\arggraph |_{c_1})|_{c_2})\ldots )|_{c_n}$.

		%
		%
		%

\end{definition}

\subsection{Mandatory Characteristics}\label{sec:mandatory}


\emph{Anonymity} implies that the identity of an argument (or its internal structure) has no impact on 
an acceptability degree semantics. \emph{Independence} requires that the acceptability degree of 
an argument is influenced only by  arguments that are (directly or indirectly) connected to it.  
These definitions are, modulo trivial changes, identical with the 
corresponding definitions in  \cite{DBLP:conf/ijcai/AmgoudB16, amgoudevaluation}.
\begin{feature}[Anonymity]
	A semantics \bs satisfies \emph{Anonymity} iff, for any two \wasa 
	\defaultarggraph and \alternativearggraph and 	
	for any isomorphism $f$ from \arggraph to $\arggraph^\prime$, the following property holds: 
	for any $ a$  in \argset,  
	$ \accdegr{}{}{a}= \accdegr{}{\arggraph^\prime}{f(a)}$.
\end{feature}

\begin{feature}[Independence]
A semantics \bs satisfies \emph{Independence} iff, for any two \wasa 
	\defaultarggraph and \alternativearggraph such that 
    $\argset $ and $ \argset ^\prime$ do not share a component, the following property 
	holds: for any $a$ in $\argset$, $\accdegr{}{}{a} = \accdegr{}{\arggraph \oplus \arggraph 
	 ^\prime}{a}$. 
	
\end{feature}

\emph{Equivalence} requires that if two arguments start out with the 
same  initial plausibility and if they share the same degree of attack and support, they have
the same acceptability degree. This is  in the same spirit as Anonymity, the major difference is 
that Anonymity compares arguments across different \wasa{}, while  Equivalence is about the arguments within one \wasa{}.
The definition is a straight forward combination of the corresponding definitions in  \cite{DBLP:conf/ijcai/AmgoudB16, amgoudevaluation}.
\begin{feature}[Equivalence]
A semantics \bs satisfies \emph{Equivalence} iff, for any weigh\-ted argumentation graph 
\defaultarggraph{} and for any $a, b$ in $\argset $, if 
	\begin{itemize}
		\item $w(a) = w(b)$, 
		\item there exists a bijective function $f$ from  \attackers{}{a} to \attackers{}{b} 
		such that $\forall x \in \attackers{}{a}$, $\accdegr{}{}{x}= \accdegr{}{}{f(x)}$, 
		\item there exists a bijective function $g$ from  \supporters{}{a} to \supporters{}{b} 
		such that $\forall x \in \supporters{}{a}$, $\accdegr{}{}{x}= \accdegr{}{}{g(x)}$, 
	\end{itemize}
		then $\accdegr{}{}{a} = \accdegr{}{}{b}  $. 	
\end{feature}


\emph{Directionality} captures the idea that attack and support are directed relationships, that is 
the attacker (supporter) influences the acceptability degree of the attacked (supported), but 
not vice versa. Thus, assume \arggraph is a \wasa and one adds a new attack  
(or support) relationship from $a_i$ to $a_j$, then this should only affect the acceptability of 
$a_j$ and arguments that $a_j$ directly or indirectly attacks or supports. To put it in a different 
way, all arguments (other than $a_j$) that do not have $a_j$ as backer or detractor, should 
not be affected by adding the new attack (support, respectively) relationship and their acceptability degree should not change.   

\begin{feature}[Directionality]
	A semantics \bs satisfies Directionality iff, for any two \wasa 
	\defaultarggraph and 
    $\arggraph^\prime =\langle {\argset , G ^\prime, w  
}\rangle $ the following holds: if  
	 $G$ and $G^\prime$ are of order $n$ (for some $n\in \mathbb{N}^+ $)  and 
	 there exists $i,j\in \mathbb{N}^+ $  such that 
	\begin{itemize}
		\item $g_{ji} = 0$,
		\item $g^\prime_{ji} \neq 0$,
		\item for any $k, m$: if $k\neq i$ or $m\neq j$, then $g_{mk} = g^\prime_{mk}$,
	\end{itemize}
then for all $x$ in $\argset$: if $x \neq a_j$ and $a_j\not \in \backers{}{x}$ and 
$a_j\not \in \detractors{}{x}$, then $\accdegr{}{}{x} = \accdegr{}{\arggraph^\prime}{x}$.

%
%
%

\end{feature}
Our definition of Directionality translates the  corresponding definition in \cite{amgoudevaluation} into our our matrix-based approach. 
Both are more general than Non-dilution in \cite{DBLP:conf/ijcai/AmgoudB16}.

Conservativity expresses that, given any lack of supports or attacks, the acceptability degree 
of an argument should be identical to its initial plausibility. It combines  Minimality in \cite{DBLP:conf/ijcai/AmgoudB16}  with  Maximality in \cite{amgoudevaluation}. 

\begin{feature}[Conservativity]
A semantics \bs satisfies \emph{Conservativity}
 iff for any \wasa \defaultarggraph, for any argument $a$ in $ \argset$, if 
$\attackers{}{a} = \supporters{}{a} = \emptyset$, then $\accdegr{}{}{a} = w(a)$.
\end{feature}

In \cite{DBLP:conf/ijcai/AmgoudB16}  the Coherence axiom is explained as follows: 
``[\ldots ] the impact  of support is proportional to the basic strength of its target''. The same 
characteristic is 
named  `Proportionality'  in  \cite{amgoudevaluation}. However, neither
of the axiomatisations  
in \cite{DBLP:conf/ijcai/AmgoudB16} and
 \cite{amgoudevaluation} represents proportionality in its usual sense. They 
rather require that an increase in the weights leads to an increase of the acceptability degree.
Since it is about monotony in the initial plausibility, we call it
\emph{Initial Monotony} (in contrast to Parent Monotony introduced later on).

\begin{feature}[Initial Monotony]
  A semantics \bs satisfies \emph{Initial Monotony} iff, for any
  \wasa \defaultarggraph and for any arguments $a,b$ in $\argset$, if
\begin{itemize}
\item $\parent{}{a}=\parent{}{b}$, and
\item $w(a)>w(b)$,
\end{itemize}
then
$$\accdegr{}{}{a}>\accdegr{}{}{b} \text{ or }\accdegr{}{}{a}=\accdegr{}{}{b}=\maxd \text{ or }\accdegr{}{}{a}=\accdegr{}{}{b}=\mind.$$
(Note that the equation $\accdegr{}{}{a}=\accdegr{}{}{b}=\maxd$ is taken to
be false if $\maxd$ does not exist, and similarly for $\mind$.)
\end{feature}

%
%
%

The initial plausibilities and the acceptability degrees of arguments are expressed as real numbers. Numbers (much) greater than $\neutrald$ represent high  plausibility and a high acceptability, 
respectively, of an argument.  Numbers (much) less than $\neutrald$ represent high implausibility and a strong inadequateness, respectively. 
$\neutrald$ plays a special role as the middle ground. An initial plausibility of $\neutrald$ means that the 
argument is neither plausible nor implausible, and an acceptability degree of $\neutrald$ means that 
within the given \wasa there is neither grounds for accepting nor for rejecting the argument.

\emph{Neutrality} expresses that, given an argument $a$ with an  acceptability degree of $\neutrald$, 
one can remove all attack and support relationships that $a$ is involved in, since $a$ has no 
impact on the acceptability degrees of rest of the arguments. Together with Independence this 
implies that arguments with an acceptability degree of $\neutrald$ can be eliminated from a \wasa without 
changing the acceptability degrees of the other arguments. 

\begin{feature}[Neutrality] \label{feat:neutrality}
	A semantics \bs satisfies Neutrality iff, for any \wasa
	\defaultarggraph the following holds: if there is an argument $a$ in \argset such that 
	$\accdegr{}{}{a}= \neutrald$, 
	then $\accdegr{}{}{} = \accdegr{}{\arggraph |_{a}}{}  $.
 \end{feature}

Our definition of Neutrality is wider applicable than the corresponding notions in \cite{amgoudevaluation} and \cite{DBLP:conf/ijcai/AmgoudB16}.
 Example \ref{ex:neutrality} (with $\neutrald=0$) illustrates 
one difference to \cite{DBLP:conf/ijcai/AmgoudB16, amgoudevaluation}: for any semantics $S$ that exemplifies 
Conservativity and Neutrality,  $\accdegr{}{}{ a_2}  = 1$. 
(Because 
of Conservativity $\accdegr{}{}{ a_1}= 0  $, hence 
neutrality implies that $\accdegr{}{}{ a_2} = \accdegr{}{\arggraphprime}{ a_2} $, and thus, by 
Conservativity,  $ \accdegr{}{\arggraphprime}{ a_2} = 1$.)
 In  \cite{DBLP:conf/ijcai/AmgoudB16, amgoudevaluation} the Neutrality and Minimality (or
Maximality, respectively) would not entail that $\accdegr{}{}{ a_2}  = 1$, because 
Neutrality is defined in a way that only compares acceptability degrees within one argumentation graph.

\begin{example}\label{ex:neutrality}
\ \newline
$\arggraph \: = \xymatrix{
\underset{0}{a_1} \ar[r]\   &  \underset{1}{a_2}   
}$
\quad

\noindent $\arggraphprime = \xymatrix{
\underset{0}{a_1} \   &  \underset{1}{a_2}   
}$

\end{example}


\emph{Parent Monotony} requires that, for any given argument $a$ in a \wasa, if one weakens or removes attackers of $a$ or strengthens or adds supporters of $a$, then this leads to a stronger or equal acceptability degree of $a$.  
Our formalisation of Parent Monotony significantly generalises the notion of Monotony in \cite{DBLP:conf/ijcai/AmgoudB16}.

\begin{feature}[Parent Monotony]
	A semantics \bs satisfies \emph{Parent Monotony} iff, for any two \wasa 
		\defaultarggraph and \alternativearggraph
     and  any argument $a$ which is both in $\argset $ and in $ \argset ^\prime$,  if
	\begin{enumerate}
			\item $w(a) = w^\prime (a)$ \label{monotony:w}
			\item 		$ \attackers{\arggraph ^\prime}{a} \subseteq \attackers{}{a}$ and
			 		$\supporters{}{a} \subseteq \supporters{\arggraph ^\prime }{a} $, \label{monotony:subset}
			\item 	for all $x \in \attackers{\arggraph ^\prime}{a}$, 
					$\accdegr{}{\arggraph ^\prime }{x} \leq \accdegr{}{}{x} $, \label{monotony:att}
    		\item for all $x \in \supporters{}{a}$, 		
 				$\accdegr{}{}{x} \leq \accdegr{}{\arggraph ^\prime}{x} $,  \label{monotony:deg}				
	\end{enumerate}
		then $\accdegr{}{}{a}  \leq \accdegr{}{\arggraph ^\prime}{a} $.%
\end{feature}

\emph{Impact} requires that adding a new supporting argument (with positive acceptability degree) strengthens (and thus has an impact on) the supported argument. Further, the opposite is true for adding a 
new attacking argument. Impact generalises Weakening and Counting in \cite{amgoudevaluation} and Strengthening and Counting
in \cite{DBLP:conf/ijcai/AmgoudB16}.\footnote{In \cite{DBLP:conf/ijcai/AmgoudB16} 
Strengthening  requires that the acceptability degree of a supported argument is higher than its 
initial plausibility. Counting requires that any additional support increases the acceptability 
more. Impact covers both given that Conservativity entails that in the absence of attackers and 
supporters the acceptability degree of an argument is identical to its initial plausibility. 
The analog is true for Strengthening and Counting in \cite{amgoudevaluation}.}
\begin{feature}[Impact]
	A semantics \bs satisfies \emph{Impact}
	 iff, for any \wasa 
		\defaultarggraph  
	 and 
	any arguments $a, b$ in $\argset$ such that  $\accdegr{}{}{b} > \neutrald$:	
	\begin{itemize}
		\item  If
		\begin{enumerate}
			\item  $ b \in \attackers{}{a}$ and
			\item $b\not\in \backers{}{a}$ 
		\end{enumerate}  
	then $\accdegr{}{}{a} < \accdegr{}{\arggraph|_{b} }{a} $  or  $\accdegr{}{}{a}=\accdegr{}{\arggraph|_{b}}{b}=\mind$,
		\item  If  
		\begin{enumerate}
			\item $ b \in \supporters{}{a}$ and
			\item $b\not\in \detractors{}{a}$,
		\end{enumerate}
	then $ \accdegr{}{\arggraph|_{b} }{a} < \accdegr{}{}{a} $ or $\accdegr{}{}{a}=\accdegr{}{\arggraph|_{b}}{a}=\maxd$.
	\end{itemize}

\end{feature}

\emph{Reinforcement} requires that if an attacker of an argument is weakened or 
a supporter is strengthened, then, ceteris paribus, the acceptability degree of the 
argument increases. Dually, if an attacker of an argument is strengthened  
or a supporter is weakened, then, ceteris paribus, the acceptability degree of 
the argument decreases. Reinforcement generalises the corresponding axioms in \cite{DBLP:conf/ijcai/AmgoudB16, amgoudevaluation} by considering both attacks and supports 
and by comparing acceptability degrees of arguments in different \wasa{}. Impact and 
Reinforcement together correspond to a kind of `Strict Parent Monotony'.

\begin{feature}[Reinforcement]
	A semantics \bs satisfies \emph{Reinforcement} iff, for any two \wasa 
	\defaultarggraph and \alternativearggraph and argument $a$ 
	 such that: 
		 $a$ 	is both in $\argset$ and $\argset^\prime$, and
		 $w(a) = w^\prime(a)$,
		 		$\attackers{}{a} = \attackers{\arggraph ^\prime}{a} $, and
			 		$\supporters{}{a} = \supporters{\arggraph ^\prime}{a} $		
	the following holds:\\ 				
	\begin{enumerate}
		\item If 
		\begin{itemize}
			\item for all $x \in \attackers{}{a}$, 
					$\accdegr{}{}{x} \leq \accdegr{}{\arggraph ^\prime}{x} $,
			\item for all $x \in \supporters{}{a}$, 		
					$\accdegr{}{}{x} \geq \accdegr{}{\arggraph ^\prime}{x} $,
					
			\item there is some $b$ such that either 
				\begin{itemize}
					\item 			$b \in \attackers{}{a}$ and 
					$\accdegr{}{}{b} < \accdegr{}{\arggraph ^\prime}{b} $ or 
					\item 	
					$b \in \supporters{}{a}$ and 
								$\accdegr{}{}{b} > \accdegr{}{\arggraph ^\prime}{b} $,
				\end{itemize}
	\end{itemize}	
	then $\accdegr{}{}{a} > \accdegr{}{\arggraph ^\prime}{a}$ or $\accdegr{}{}{a}=\accdegr{}{\arggraph ^\prime}{a}=\maxd$.
	
	\item If 
	\begin{itemize}
		\item for all $x \in \attackers{}{a}$, 
				$\accdegr{}{}{x} \geq \accdegr{}{\arggraph ^\prime}{x} $,
		\item for all $x \in \supporters{}{a}$, 		
				$\accdegr{}{}{x} \leq \accdegr{}{\arggraph ^\prime}{x} $,
				
		\item there is some $b$ such that either 
			\begin{itemize}
				\item 			$b \in \attackers{}{a}$ and 
				$\accdegr{}{}{b} > \accdegr{}{\arggraph ^\prime}{b} $ or 
				\item 	
				$b \in \supporters{}{a}$ and 
							$\accdegr{}{}{b} < \accdegr{}{\arggraph ^\prime}{b} $,
			\end{itemize}
	\end{itemize}			
		then $\accdegr{}{}{a} < \accdegr{}{\arggraph ^\prime}{a}$ or $\accdegr{}{}{a}=\accdegr{}{\arggraph ^\prime}{a}=\mind$.
	\end{enumerate}
			
\end{feature}

Strengthening Soundness in \cite{DBLP:conf/ijcai/AmgoudB16} and Weakening Soundness in 
\cite{amgoudevaluation} express that any difference between an initial plausibility and 
the acceptability degree of an argument is  caused by some supporting (attacking, respectively) argument. We call this characteristic \emph{Causality}. 

\begin{feature}[Causality]
	A semantics \bs satisfies \emph{Causality} iff, for any \wasa 
		\defaultarggraph 
     and  any argument $a$ in $\argset $,  if
	 $\accdegr{}{}{a} \neq w(a)$, then there exists an argument $b$ in $\argset$ such that
	 $\accdegr{}{}{b} \neq \neutrald$ and $b \in \attackers{}{a} \cup \supporters{}{a}$.  
\end{feature}
\begin{theorem}\label{thm:Causality}
Conservativity and Neutrality together imply Causality. 	
\end{theorem}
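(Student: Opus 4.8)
The plan is to prove the contrapositive. I fix a \wasa \defaultarggraph and an argument $a$ in $\argset$, and I assume that the conclusion of Causality fails, i.e.\ that \emph{every} parent of $a$ is neutral: $\accdegr{}{}{b}=\neutrald$ for all $b\in\attackers{}{a}\cup\supporters{}{a}$. The goal is then to derive $\accdegr{}{}{a}=w(a)$. The underlying idea is to strip away all of $a$'s parents one at a time by repeated Isolation, to use Neutrality to guarantee that none of these operations disturbs any acceptability degree, and to finish with Conservativity once $a$ has been rendered parentless.

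Concretely, I would first establish an iterated-isolation lemma. Enumerate the parents as $\attackers{}{a}\cup\supporters{}{a}=\{b_1,\ldots,b_k\}$. Since $\accdegr{}{}{b_1}=\neutrald$, Neutrality gives $\accdegr{}{}{}=\accdegr{}{\arggraph|_{b_1}}{}$, so the \emph{entire} degree vector is unchanged; in particular $\accdegr{}{\arggraph|_{b_1}}{b_2}=\accdegr{}{}{b_2}=\neutrald$, which is exactly the premise needed to apply Neutrality again and isolate $b_2$ inside $\arggraph|_{b_1}$. Iterating, at each step the next argument $b_j$ still carries the degree $\neutrald$ in the graph produced so far, so every step preserves the whole vector and altogether $\accdegr{}{}{}=\accdegr{}{\arggraph|_{b_1,\ldots,b_k}}{}$. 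The one point that needs genuine checking here is that the premise of Neutrality survives each step; it does, precisely because the conclusion of the preceding step preserved all degrees.

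Writing $\arggraph^\star$ for the fully isolated graph $\arggraph|_{b_1,\ldots,b_k}$, I would then read off its structure from the definition of Isolation: isolating $b_j$ overwrites the whole row and column of $b_j$ in $G$ with $\neutrald=0$, and in particular zeroes the entry linking $b_j$ to $a$. Hence after isolating $b_1,\ldots,b_k$ every incoming edge of $a$ has been deleted, so $\attackers{\arggraph^\star}{a}=\supporters{\arggraph^\star}{a}=\emptyset$. As long as $a$ is distinct from each $b_j$, Isolation leaves $w(a)$ untouched, so Conservativity applied in $\arggraph^\star$ yields $\accdegr{}{\arggraph^\star}{a}=w(a)$; combined with the degree preservation above this is $\accdegr{}{}{a}=w(a)$, as required.

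I expect the main obstacle to be the \emph{reflexive} case, in which $a$ attacks or supports itself and therefore occurs among its own parents $\{b_1,\ldots,b_k\}$ --- a situation the framework explicitly permits (cf.\ the self-support $g_{22}=1$ of $a_2$ in Example~\ref{ex:arggraph}). There the hypothesis already forces $\accdegr{}{}{a}=\neutrald$, but isolating $a$ also resets $w(a)$ to $\neutrald$, so Conservativity only delivers $\accdegr{}{\arggraph^\star}{a}=\neutrald$ rather than $w(a)$. Closing this case therefore seems to require either excluding self-loops or the extra observation that under these hypotheses $w(a)$ itself equals $\neutrald$; verifying that last point from Conservativity and Neutrality alone is the delicate step I would scrutinise most carefully.
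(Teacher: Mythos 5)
Your proof takes essentially the same route as the paper's: enumerate the parents of $a$, isolate them all via iterated application of Neutrality (noting that each step preserves the full degree vector, so the premise of Neutrality survives), and then apply Conservativity in the fully isolated graph where $a$ has no attackers or supporters. The self-loop worry you raise at the end is genuine --- isolating $a$ itself also resets $w(a)$ to $\neutrald$, so Conservativity no longer delivers $\accdegr{}{}{a}=w(a)$ --- but the paper's own proof silently passes over exactly this case, so your version is no weaker than the original.
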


The idea behind \emph{Boundedness} in \cite{DBLP:conf/ijcai/AmgoudB16} is the following: 
if an  argument $a$ has  an acceptability degree of 1 and $b$'s support is stronger than $a$'s 
support, then the acceptability degree of $b$ is also 1. 
The name of this characteristic is somewhat of a misnomer, since its definition does not entail that 
the argument degree space is bounded in the usual mathematical sense. E.g., consider a semantics 
\bs such that \ads is the open interval $(0,1)$. Since \bs assigns the acceptability degree of 1 to no argument, 
the condition for Boundedness in \cite{DBLP:conf/ijcai/AmgoudB16} is met trivially, but  the open interval $(0,1)$ is not `bounded' in the usual mathematical sense of the word.

To avoid any possible confusion, we are going to define Boundedness in its traditional sense \ 
and rename the characteristic from \cite{DBLP:conf/ijcai/AmgoudB16} into \emph{Stickiness}.
Note that Stickiness presupposes that the maximum acceptability degree is 1. Thus, Stickiness
is not true for arbitrary \wasa. In section \ref{sec:frenchSection} we introduce  Stickiness for a subset of \wasa and show that it is entailed by the mandatory characteristics in this section;  
see  page \pageref{feat:stickiness}. Since Boundedness is not a 
mandatory characteristic we discuss it in section \ref{sec:optional}.

Up to this point all characteristics in this section are (more or less loosely) based on the 
mandatory characteristics that are discussed in 
\cite{DBLP:conf/ijcai/AmgoudB16, amgoudevaluation}. We add several new mandatory characteristics, namely 
 Neutralisation, Continuity, and Interchangeability.

\emph{Neutralisation} is concerned with the relationship between attacks and supports. If 
 argument $a_m$ is attacked by $a_l$ and supported by $a_k$ and the acceptability degrees of $a_l$ and $a_k$ are identical, then $a_k$ and $a_l$ neutralise each other (with respect to $a_m$). 
 Hence, if one removes both the attack from $a_l$ on   $a_m$ and the support of $a_k$ for $a_m$, then, ceteris paribus, the acceptability degrees of the arguments in the \wasa remain unchanged.

\begin{feature}[Neutralisation] 
  	A semantics \bs satisfies \emph{Neutralisation} iff, for any \wasa
  	\defaultarggraph and any components $a_k, a_l, a_m$ in $\argset$, if 
	\begin{itemize}
		\item  $a_k \in \attackers{}{a_m}$, 
		\item $a_l \in \supporters{}{a_m}$,  		
		\item  $\accdegr{}{}{a_k} = \accdegr{}{}{a_l}   $,
		\item $\arggraph^ \prime  =\langle {\argset , G^\prime, w  }\rangle $ and
			\[ 
			    G^\prime =
			     \left(  \begin{smallmatrix}
			       g_{1 1}^\prime &  \cdots & g_{1 m}^\prime \\
			       \vdots  & \ddots & \vdots  \\
			      g_{m 1}^\prime  & \cdots & g_{m m}^\prime 
			      \end{smallmatrix}\right)
				  \mathit{where}
				\begin{cases}
 				   	g^\prime_{ij} = 0 & \mathit{if \ } j= m \mathit{\ and \ } i = k \\
 				   	g^\prime_{ij} = 0 & \mathit{if \ } j= m \mathit{\ and \ } i = l \\
 				   	g^\prime_{ij} = g_{ij} & \mathit{otherwise}\\
				\end{cases}
			\]

	\end{itemize}
	then  $\accdegr{}{}{} = \accdegr{}{\arggraph ^\prime}{}$.
\end{feature}

Given a semantics $S$ that meets Neutralisation and Conservativity, the argument $a_2$ in Example \ref{ex:neutralisation}  has an acceptability degree  $\accdegr{}{}{a_2} = w(a_2) = 3$. Because the attack of $a_1$ and the support of $a_3$ neutralise each other.

\begin{example}\label{ex:neutralisation}
$ \xymatrix{
\underset{4}{a_1}\ar@{-{*}}[r] \   &  \underset{3}{a_2}   &  \underset{4}{a_3}   \ar[l]\
}$
\end{example}

\emph{Continuity} requires that the acceptability degree of an argument is a 
continuous function of the initial plausibility. The main motivation for adding this mandatory 
characteristic is to exclude semantics that show chaotic behaviour, where  small differences in 
the initial plausibility leads widely divergent acceptability degrees.  
\begin{feature}[Continuity]
	A semantics \bs satisfies Continuity iff, for any sequence of \wasa\!\!\!\! s
		$\arggraph_n  =\langle {\argset, G, w_n }\rangle $, if
$$\lim_{n\to\infty}w_n = w$$
then for $\arggraph  =\langle {\argset, G, w }\rangle $
$$\lim_{n\to\infty}\accdegr{}{\arggraph_n}{} = \accdegr{}{}{}$$
 \end{feature}

\emph{Interchangeability} requires that arguments with the same acceptability degree may be substituted for each other in attacking and supporting relationships without affecting 
the acceptability degrees of the arguments in the \wasa{}. In other words, for the purpose of calculating 
the acceptability degree of an argument $a$ the identity of the 
supporting and attacking arguments is 
not important, only their acceptability degrees matter.

\begin{feature}[Interchangeability]
	A semantics \bs satisfies Interchangeability iff, for any  any \wasa
  	\defaultarggraph and $\arggraphprime = \langle \argset , G^\prime , w\rangle$, 
	if
	 \begin{itemize}
		\item $a_i, a_j, a_k$ are in the vector \argset, 
		\item $\accdegr{}{}{a_j} = \accdegr{}{}{a_k}$,
		\item \[    G^\prime =
     \left(  \begin{smallmatrix}
       g_{1 1}^\prime &  \cdots & g_{1 n}^\prime \\
       \vdots  & \ddots & \vdots  \\
      g_{n 1}^\prime  & \cdots & g_{n n}^\prime 
      \end{smallmatrix}\right), where 
	  \begin{cases}
	  	g^\prime _{i j} = g_{i k } & \\
	  	g^\prime _{i k} = g_{i j } & \\
		g^\prime _{l m}	= g _{l m}	& \text{otherwise}
	  \end{cases}
\]
	\end{itemize}
then $\accdegr{}{}{} = \accdegr{}{\arggraph ^\prime}{}  $	.

\end{feature}	

Example \ref{ex:interchangeability} illustrates Interchangeability.  The difference between 
\arggraph and \arggraphprime is the direction of attack and support for $a_2$. Since $a_1$ and 
$a_4$  have the same acceptability degree, it follows from Interchangeability that $\accdegr{}{}{} = \accdegr{}{\arggraph ^\prime}{}$.
\begin{example} \label{ex:interchangeability}
	\hspace{1em}
	$\arggraph = $
	\begin{minipage}{0.3 \textwidth}
	$$\xymatrix{
	\underset{0.5}{a_1} \ar[r]\ar[d]   &  \underset{2}{a_2}   \\
	\underset{1}{a_3}   &   \underset{0.5}{a_4}  \ar@{-{*}}[l] \ar@{-{*}}[u]
	}$$
	\end{minipage}	
	\hspace{1em}
		$\arggraphprime = $
	\begin{minipage}{0.3 \textwidth}
	$$\xymatrix{
	\underset{0.5}{a_1}  \ar@{-{*}}[r] \ar[d]   &  \underset{2}{a_2}   \\
	\underset{1}{a_3}   &   \underset{0.5}{a_4}  \ar@{-{*}}[l] \ar[u]
	}$$
	\end{minipage}		
\end{example}

\subsection{Optional Characteristics}\label{sec:optional}
As we pointed out in the discussion of Initial Monotony, the characteristic of 
Proportionality is defined in \cite{amgoudevaluation} in a
 non-standard way. Hence, the question arises whether we can introduce a notion of proportionality 
 in its usual sense? The answer is that proportionality is most likely not a useful concept for 
 an acceptability semantics, since the acceptability degree of an argument depends on two variables: namely, its initial plausibility and the \wasa it is part of. However, instead 
 we can consider \emph{Linearity}. 
 Given a set of arguments and attack and support
relationships between them, Linearity requires that the
acceptability degree of an argument is a linear function of its initial
plausibility.


\begin{feature}[Linearity] \label{feat:linearity}
  A semantics \bs satisfies \emph{Linearity} iff, for any
  \wasa \defaultarggraph  and for any argument $a$ in $ \argset$, there are
  constants $c_1$ and $c_2$ such that for all $w'$ that agree with $w$
  except possibly on $a$,
$$\accdegr{}{\langle {\argset, G, w'}\rangle}{a}=c_1+c_2w'(a)$$
\end{feature}
In contrast to the other characteristics that we discussed in this section, we do not consider 
Linearity a mandatory characteristic of an acceptability semantics. The same is true for Boundedness and Reverse Impact.

A semantics \bs is \emph{bounded} if its acceptability degree space \ads has a maximum and a 
minimum element.

\begin{feature}[Boundedness]
	A semantics \bs satisfies \emph{Boundedness} iff	 \ads is bounded from above 
	and bounded from below. 
\end{feature}

Reverse impact means that the effect of an attack relation
can be supporting or vice versa. For example, assume that $b$ is a discredited argument, which is 
strongly rejected by the audience. If $b$ attacks the argument $a$, then $a$ may actually be 
considered more acceptable by the audience because of the attack. 

\begin{feature}[Reverse impact]\label{f:rev-impact}
	A semantics \bs satisfies \emph{Reverse impact} 
	 iff, for any \wasa 
		\defaultarggraph  
	 and 
	any argument $a$ in $\argset$ there is some argument $b$ in $\argset$ such that:
	\begin{itemize}
		\item  If
		\begin{enumerate}
			\item  $ b \in \attackers{}{a}$ and
			\item $b\not\in \backers{}{a}$ 
		\end{enumerate}  
	then $\accdegr{}{}{a} > \accdegr{}{\arggraph|_{b} }{a} $  or $\accdegr{}{}{a}=\accdegr{}{\arggraph|_{b}}{a}=\maxd$ or $\accdegr{}{}{a}=\accdegr{}{\arggraph|_{b}}{b}=\mind$,
		\item  If  
		\begin{enumerate}
			\item $ b \in \supporters{}{a}$ and
			\item $b\not\in \detractors{}{a}$,
		\end{enumerate}
	then $ \accdegr{}{\arggraph|_{b} }{a} > \accdegr{}{}{a} $ or $\accdegr{}{}{a}=\accdegr{}{\arggraph|_{b}}{a}=\maxd$ or $\accdegr{}{}{a}=\accdegr{}{\arggraph|_{b}}{b}=\mind$.
	\end{itemize}

\end{feature}

\subsection{WASA vs. Support Argumentation Graphs}\label{sec:frenchSection}

In the discussion of Characteristic \ref{feat:neutrality}, we explained our motivation for 
deviating from definitions of the characteristics within \cite{DBLP:conf/ijcai/AmgoudB16, amgoudevaluation} by defining these characteristics by comparison across different \wasa{}.  
We claimed that a cross-graph comparison is applicable to a wider range of examples, but 
we have not discussed the  relationship between the approaches in more detail.  In this 
section we show that our approach is more general. For this purpose we will focus on the definitions of support argumentation graphs in 
\cite{DBLP:conf/ijcai/AmgoudB16}. In a first step we show how the basic definitions in 
\cite{DBLP:conf/ijcai/AmgoudB16} can be represented within our framework by definitions 
\ref{def:bwsa} and \ref{def:boundedAcceptabilitySemantics}. In a second step we show how that the  
 mandatory axioms in \cite{DBLP:conf/ijcai/AmgoudB16}  are entailed by our mandatory characteristics. A third step is relegated to section~\ref{sec:rsig}
below, namely the proof that all our  mandatory axioms hold for the
aggregation-based semantics in \cite{DBLP:conf/ijcai/AmgoudB16}.

\medskip\noindent
We first restrict the general setting to that in
\cite{DBLP:conf/ijcai/AmgoudB16}:
\begin{definition}[Bounded Weighted Support Argumentation Graph]\label{def:bwsa}\ \hfill
		
\noindent A $[0,1]$-Bounded Weighted Support Argumentation Graph (\bwsa) is a \wasa \defaultarggraph such that \begin{itemize}
	\item $g_{ij}\in \{0,1\}$, for any component  $g_{ij}$ of $G$, 
	\item $w$ is a vector in the interval $[0,1]$. 
\end{itemize} 
\end{definition}

Any support argumentation graph in \cite{DBLP:conf/ijcai/AmgoudB16} is a \bwsa. Since there are no 
attacking relationships within \bwsa, it follows that, for any argument $a$ in an \bwsa \arggraph, 
$\attackers{}{a} = \detractors{}{a} = \emptyset$.

We use \bwsa to define a corresponding semantic notion in Definition \ref{def:boundedAcceptabilitySemantics}.  
\begin{definition}[Bounded Acceptability Semantics]\label{def:boundedAcceptabilitySemantics}
A $[0,1]$-bounded acceptability semantics \bs is an acceptability semantics \bs such that $\accdegr{}{}{a} \in [0,1]$, for  any \bwsa \defaultarggraph and any argument $a \in \argset$.
\end{definition}
Note that  any semantics for support argumentation graphs in \cite{DBLP:conf/ijcai/AmgoudB16}  corresponds to a bounded acceptability semantics that is restricted to  \bwsa.

In the following we show that the mandatory axioms in \cite{DBLP:conf/ijcai/AmgoudB16} for bounded acceptability semantics are 
are entailed by the mandatory characteristics for acceptability semantics in section \ref{sec:mandatory}. In some cases this is quite trivial, since the characteristics in section 
\ref{sec:mandatory} are based on \cite{DBLP:conf/ijcai/AmgoudB16}. For this reason we consider 
only two  examples.

The Dummy axiom in \cite{DBLP:conf/ijcai/AmgoudB16} corresponds to our notion of Neutrality and is defined by comparing the acceptability degrees within one argument graph. 
\begin{feature}[Dummy]\label{feat:dummy}
	A bounded acceptability semantics \bs satisfies \emph{Dummy} iff, for any \bwsa and $a, b$ in 
	\argset such that 
	\begin{itemize}
		\item $w(a) = w(b)$,
		\item $\supporters{}{a} = \supporters{}{b} \cup \{ x \} $, such that
		$\accdegr{}{}{x} = \neutrald$, 
	\end{itemize}
	then  $\accdegr{}{}{a} = \accdegr{}{}{b}$.

\end{feature}

\begin{theorem}\label{thm:Dummy}
Any bounded acceptability semantics \bs that satisfies Neutrality and Parent Monotony, satisfies 
 Dummy. 
\end{theorem}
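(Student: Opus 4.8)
The plan is to first use Neutrality to absorb the extra neutral supporter $x$, and then to use Parent Monotony in both directions to force the two acceptability degrees to coincide.

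First I would record the shape of the hypotheses in the \bwsa{} setting: here $\neutrald=0$, there are no attackers, $w(a)=w(b)$, and $\supporters{}{a}=\supporters{}{b}\cup\{x\}$ with $\accdegr{}{}{x}=0$. From the set equation one reads off $x\notin\supporters{}{b}$, so the union is disjoint and $\supporters{}{a}\setminus\{x\}=\supporters{}{b}$; I would dispose of the degenerate cases $x=a$ and $x=b$ separately. Since $\accdegr{}{}{x}=0=\neutrald$, Neutrality gives $\accdegr{}{}{}=\accdegr{}{\arggraph|_{x}}{}$, i.e.\ every argument keeps its degree when $x$ is isolated; in particular $\accdegr{}{}{a}=\accdegr{}{\arggraph|_{x}}{a}$, $\accdegr{}{}{b}=\accdegr{}{\arggraph|_{x}}{b}$, and each $s\in\supporters{}{b}$ carries the same degree in \arggraph{} and in $\arggraph|_{x}$. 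Isolation zeroes row and column $x$ and sets $w_x=0$, so one checks directly that in $\arggraph|_{x}$ the supporter set of $a$ loses exactly $x$ while that of $b$ is untouched; hence in $\arggraph|_{x}$ the arguments $a$ and $b$ have the same weight and literally the same supporter set $S:=\supporters{}{b}$, whose members carry equal degrees. It therefore suffices to prove $\accdegr{}{\arggraph|_{x}}{a}=\accdegr{}{\arggraph|_{x}}{b}$.

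For this equality I would apply Parent Monotony twice, once to obtain $\accdegr{}{\arggraph|_{x}}{a}\le\accdegr{}{\arggraph|_{x}}{b}$ and once for the reverse. In each direction the tracked argument is moved into the other's environment by an auxiliary graph that equips it with the supporter set $S$; because $w(a)=w(b)$, because the supporter sets agree, and because the corresponding supporter degrees are equal, all of the hypotheses \ref{monotony:w}--\ref{monotony:deg} of Parent Monotony hold as equalities, so each application returns its inequality with no slack. Combining the two inequalities yields $\accdegr{}{\arggraph|_{x}}{a}=\accdegr{}{\arggraph|_{x}}{b}$, and together with the reduction above this gives $\accdegr{}{}{a}=\accdegr{}{}{b}$, as required.

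The hard part is exactly this last step: Parent Monotony is stated for a single argument across two graphs, whereas the genuine content of Dummy is the comparison of two \emph{distinct} arguments, $a$ and $b$, sitting in identical parent-environments. The crux is to build the auxiliary graph so that the degree of the tracked argument there can be identified with $\accdegr{}{\arggraph|_{x}}{b}$ (respectively $\accdegr{}{\arggraph|_{x}}{a}$) while every Parent Monotony hypothesis collapses to an equality; this is where the symmetry between $a$ and $b$ after isolation must be exploited. Two smaller but genuine pitfalls accompany it. One may remove the neutral supporter only through Isolation, since the framework provides no bare single-edge deletion, so the bookkeeping has to be carried out on $\arggraph|_{x}$ rather than on a graph in which just the edge from $x$ to $a$ has been erased. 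And the degenerate cases $x\in\{a,b\}$ need a direct argument, because there isolating $x$ also changes the weight or the supporters of the very argument under comparison.
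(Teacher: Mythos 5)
Your proposal follows the same skeleton as the paper's proof: use Neutrality to pass from \arggraph to $\arggraph|_{x}$ without changing any degree, observe that in $\arggraph|_{x}$ the arguments $a$ and $b$ have equal weights, empty attacker sets (a \bwsa has no attacks) and literally the same supporter set, and conclude that their degrees agree. The only divergence is in how that final single-graph, two-argument comparison is discharged. The paper simply asserts that Parent Monotony entails $\accdegr{}{\arggraph|_{x}}{a}=\accdegr{}{\arggraph|_{x}}{b}$ --- reading the characteristic, with both graphs taken to be $\arggraph|_{x}$ and the two parent-environments coinciding degree-for-degree, as delivering both inequalities at once; no auxiliary graphs are built. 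You instead treat this step as the crux and propose to transplant the tracked argument into the other's environment via auxiliary graphs, in the style of the paper's proof of Thm.~\ref{thm:stickiness}. That is a defensible (arguably more literal) reading of Parent Monotony as a strictly cross-graph, single-argument principle, but it carries a cost you acknowledge without resolving: identifying the degree of the tracked argument in the auxiliary graph with $\accdegr{}{\arggraph|_{x}}{b}$ requires Independence and Anonymity (and something like Directionality or Interchangeability to detach the argument from the rest of $\arggraph|_{x}$), none of which appear in the theorem's hypothesis list, which grants only Neutrality and Parent Monotony. As sketched, your route therefore proves a weaker statement with extra hypotheses; to stay within the stated ones you must either accept the paper's single-graph application of Parent Monotony or observe that the step is really an instance of Equivalence. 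Your side remarks --- that the removal of $x$ must go through Isolation rather than single-edge deletion, and that $x\in\{a,b\}$ are degenerate cases needing separate treatment --- are correct, and are points the paper's own proof passes over silently.
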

Note that boundedness or the restriction to $[0,1]$ is not needed for
Thm.~\ref{thm:Dummy}; all that is needed is that there are no attacks.

\emph{Stickiness}  (which is called boundedness in  \cite{DBLP:conf/ijcai/AmgoudB16}) expresses 
that if  argument $b$ has an acceptability degree of 1 and argument $a$ has some stronger support than $a$ has also an acceptability degree of 1. 

\begin{feature}[Stickiness]\label{feat:stickiness}
	A bounded acceptability  semantics \bs satisfies \emph{Stickiness} iff, for any \bwsa \defaultarggraph and $a, b$ in 
	\argset such that 
	\begin{itemize}
		\item $w(a) = w(b)$,
		\item $\supporters{}{a} \backslash \supporters{}{b} = \{ x \} $,
		\item $\supporters{}{b} \backslash \supporters{}{a} = \{ y \} $,
		\item $\accdegr{}{}{x} > \accdegr{}{}{y}$, 
	\end{itemize}
	if $\accdegr{}{}{b} = 1$, then $\accdegr{}{}{a} = 1$.
	
\end{feature}
\begin{theorem} Any bounded acceptability semantics \bs that satisfies Parent Monotony, Independence, Interchangeability, Anonymity  and Conservativity, satisfies Stickiness. \label{thm:stickiness} 
\end{theorem}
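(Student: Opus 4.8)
The plan is to reduce the within-graph comparison of $a$ and $b$ to a single, clean application of Parent Monotony, and then read off the maximum from boundedness. The whole difficulty is that Parent Monotony compares one \emph{fixed} argument across two graphs, whereas Stickiness compares the two \emph{distinct} arguments $a,b$ inside one \bwsa; moreover, the Parent Monotony hypothesis that the supporters' degrees do not decrease is only safe to verify when those supporters are insensitive to the change we make, which in a possibly cyclic support graph we cannot assume in general. My strategy is therefore to first rewrite the graph so that $a$ and $b$ are each supported only by fresh leaf arguments whose acceptability degrees match those of the original supporters, \emph{without} changing the degrees of $a$ and $b$ themselves. Write $\alpha=\accdegr{}{}{x}>\accdegr{}{}{y}=\beta$, let $\delta_1,\dots,\delta_k$ be the degrees of the shared supporters, and recall $w(a)=w(b)$.

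First I would carry out this reduction with Independence, Conservativity and Interchangeability. For each supporter $s$ of $a$ (and, with distinct fresh leaves, of $b$) I add via Independence an isolated argument $u_s$ with $w(u_s)=\accdegr{}{}{s}$; by Conservativity $\accdegr{}{}{u_s}=\accdegr{}{}{s}$, and by Independence all previous degrees are unchanged. Since $u_s$ now has the same degree as $s$, Interchangeability lets me swap $s$ and $u_s$ in the row of $a$ (resp. $b$), which leaves the \emph{entire} degree vector fixed. Iterating, I reach a \bwsa $\arggraph^\ast$ in which $a$ is supported by leaves of degrees $\delta_1,\dots,\delta_k,\alpha$ and $b$ by leaves of degrees $\delta_1,\dots,\delta_k,\beta$, with $\accdegr{}{\arggraph^\ast}{a}=\accdegr{}{}{a}$ and $\accdegr{}{\arggraph^\ast}{b}=1$.

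Next I would isolate $a$ and $b$ into their own star-shaped components so that Independence and Anonymity become applicable. Their incoming edges already come only from leaves; the remaining obstacle is their \emph{outgoing} edges. To cut those for $b$ I take, via Anonymity, a disjoint renamed copy of the whole graph (Independence keeping all degrees), producing a twin $b'$ of the same degree as $b$, and then use Interchangeability to swap $b$ for $b'$ at every argument $b$ supports; this severs $b$'s outgoing edges while preserving all degrees, leaving $\{b\}$ together with its leaves as a genuine disjoint component $C_b$, so by Independence $\accdegr{}{C_b}{b}=1$. The same device isolates $a$ into a component $C_a$ with $\accdegr{}{C_a}{a}=\accdegr{}{}{a}$; the essential point is that although $\accdegr{}{}{a}$ is unknown, cloning supplies a \emph{distinct} argument of exactly that degree, which is all Interchangeability needs.

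Finally I would run Parent Monotony in the clean component $C_b$. Raising the weight of $b$'s unique weak leaf from $\beta$ to $\alpha$ turns its degree into $\alpha$ by Conservativity, giving a graph $C_b^{+}$; since $b$'s supporter set is unchanged, $b$ has no attackers, the upgraded supporter's degree only increases, and the other (leaf) supporters are untouched, Parent Monotony yields $1=\accdegr{}{C_b}{b}\le\accdegr{}{C_b^{+}}{b}$, whence $\accdegr{}{C_b^{+}}{b}=\maxd=1$ as the semantics is $[0,1]$-bounded. The star $C_b^{+}$ is isomorphic to $C_a$ (same central weight $w(a)=w(b)$ and the same multiset $\delta_1,\dots,\delta_k,\alpha$ of leaf weights), so Anonymity gives $\accdegr{}{}{a}=\accdegr{}{C_a}{a}=1$. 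I expect the main obstacle to be exactly the two edge-severing isolations, in particular guaranteeing that altering the parents of $a$ and $b$ never silently changes a supporter's degree; this is why the detour through leaf-replacement and cloning, rather than a direct Parent Monotony step on the original graph, appears unavoidable.
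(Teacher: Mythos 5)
Your proof is correct, and its overall skeleton is the same as the paper's: first replace the supporters of $a$ and $b$ by fresh isolated leaves whose weights equal the supporters' degrees (Independence plus Conservativity to pin the leaves' degrees, Interchangeability to swap them in without disturbing the degree vector), then isolate $a$ and $b$ into star-shaped components, and finish with Parent Monotony, Anonymity and the $[0,1]$ bound. The one genuine divergence is the step that severs the \emph{outgoing} edges of $a$ and $b$. The paper does this by deleting those arcs and appealing to Directionality --- a characteristic that is \emph{not} among the five hypotheses listed in the theorem, so the paper's proof as written uses an unstated assumption. You instead manufacture, via a disjoint renamed copy of the whole graph (Anonymity plus Independence), a twin of $b$ with the same acceptability degree, and use Interchangeability to redirect each of $b$'s outgoing edges onto the twin; this keeps the entire degree vector fixed and stays strictly within the stated hypothesis set, so your route actually repairs that small mismatch between statement and proof. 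Your closing step is the mirror image of the paper's (you raise the weak leaf's weight in the $b$-star to get a graph isomorphic to the $a$-star, whereas the paper renames the $b$-star onto the $a$-star's skeleton and compares downward via Parent Monotony), but the two are logically equivalent. The only caveats are bookkeeping ones you already flag: the cloned twin must be re-identified after each swap (harmless, since Interchangeability preserves all degrees), and recognising the resulting block-diagonal graph as a union for Independence requires an Anonymity permutation --- an implicit step the paper's own proof also takes.
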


Analog theorems may be formulated for the other axioms in \cite{DBLP:conf/ijcai/AmgoudB16}:

\begin{theorem}\label{thm:French-characteristics}
 Any bounded acceptability semantics \bs that satisfies the mandatory characteristics in \ref{sec:mandatory}, satisfies 
 Anonymity, 
 Independence, 
 Non-Dilution, 
 Monotony, 
 Equivalence, 
 Dummy, 
 Minimality, 
 Strengthening, 
 Strengthening Soundness, 
 Coherence, 
 Counting, 
 Boundedness, and 
 Reinforcement as defined in \cite{DBLP:conf/ijcai/AmgoudB16}. 
\end{theorem}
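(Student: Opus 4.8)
The plan is to verify each of the listed axioms of \cite{DBLP:conf/ijcai/AmgoudB16} in turn, using the correspondence recorded in Table~\ref{tab:characteristics}. Two structural facts about the \bwsa setting drive almost everything. First, a \bwsa contains no attacks, so $\attackers{}{a}=\detractors{}{a}=\emptyset$ for every argument $a$; consequently every clause in our characteristics that quantifies over attackers or detractors is vacuous or drops out. Second, a bounded acceptability semantics assigns degrees in $[0,1]$ on every \bwsa, with $\neutrald=0$; the endpoint escape clauses ``$=\mind$'' and ``$=\maxd$'' appearing in Initial Monotony, Impact and Reinforcement then correspond to the bounds $0$ and $1$ used in \cite{DBLP:conf/ijcai/AmgoudB16}. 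Three of the target axioms are, moreover, already established: Dummy is Theorem~\ref{thm:Dummy}; the axiom that \cite{DBLP:conf/ijcai/AmgoudB16} calls Boundedness is precisely our Stickiness, which is Theorem~\ref{thm:stickiness}; and Strengthening Soundness is our Causality, obtained from Conservativity and Neutrality in Theorem~\ref{thm:Causality}.

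For the remaining axioms I would first dispatch those that are mere specialisations of our (more general) characteristics to the attack-free, $[0,1]$-bounded world. Anonymity and Independence are literal restrictions of our Anonymity and Independence. Equivalence follows from our Equivalence once the required bijection on attacker sets is taken to be the empty map. Minimality is the special case of Conservativity in which $\supporters{}{a}=\emptyset$, attackers being absent already. Coherence is Initial Monotony read on a \bwsa, while Non-Dilution and Monotony are instances of Directionality and Parent Monotony respectively, both of which the text has already noted to be strict generalisations of their \cite{DBLP:conf/ijcai/AmgoudB16} counterparts. In each case the only checking required is that the syntactic form of the hypothesis matches after the simplification, and that where our characteristic forces a strict inequality it still entails the weaker bounded version that permits both values to sit at an endpoint.

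Next I would derive Strengthening and Counting together from Impact, following the argument sketched in the footnote to the Impact characteristic. Conservativity fixes the acceptability degree of a supporterless argument at its initial plausibility; Impact then guarantees that isolating any supporter of positive degree strictly decreases the degree, equivalently that adding such a supporter strictly increases it and that $\accdegr{}{\arggraph|_{b} }{a}<\accdegr{}{}{a}$. Reading this with a single supporter and comparing against the isolated (hence conservative) graph yields Strengthening, namely that the supported argument exceeds its initial plausibility; reading it iteratively yields Counting, namely that each further support strictly raises the degree unless the ceiling $1$ has been reached.

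The main obstacle is Reinforcement. Our Reinforcement compares a single argument $a$ across two \wasa in which $a$ retains the same supporter set but those supporters carry different acceptability degrees, whereas the Reinforcement of \cite{DBLP:conf/ijcai/AmgoudB16} compares two distinct arguments $a,b$ of equal weight within one graph whose supporters are matched by strength. To convert the latter into the former I would reuse the construction behind Theorem~\ref{thm:stickiness}, which already handles a closely related within-graph support comparison using Parent Monotony, Independence, Interchangeability, Anonymity and Conservativity. Concretely, I would use Independence to discard everything irrelevant to $a$ and $b$, Anonymity to rename $b$'s supporter neighbourhood so that it matches $a$'s, and Interchangeability to rewire $a$ onto a copy whose supporters realise $b$'s strengths; a single application of the cross-graph Reinforcement to $a$ across the original and rewired graphs then yields the required comparison between $\accdegr{}{}{a}$ and $\accdegr{}{}{b}$. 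The delicate points --- checking that the construction never leaves \bwsa, that the matched-strength hypothesis survives the renaming, and that the strict-versus-ceiling dichotomy lines up with $\maxd=1$ --- are where the genuine effort sits; the remaining axioms need only routine specialisation.
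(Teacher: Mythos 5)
Your proposal is correct and follows essentially the same route as the paper's proof: specialise each generalised characteristic back to the attack-free $[0,1]$ setting, invoke Theorems~\ref{thm:Dummy}, \ref{thm:stickiness} and \ref{thm:Causality} for Dummy, Boundedness and Strengthening Soundness, obtain Strengthening from Conservativity plus repeated Impact, and reuse the construction from the proof of Theorem~\ref{thm:stickiness} to convert our cross-graph characteristics into the single-graph axioms of \cite{DBLP:conf/ijcai/AmgoudB16}. The only minor divergences are that the paper also routes Monotony and Counting (not just Reinforcement) through that single-graph-to-two-graph conversion, and that its treatment of Coherence splits off the degenerate case where the supporter's degree equals the neutral value, handling it via Conservativity and Parent Monotony rather than Initial Monotony alone.
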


Thm.~\ref{thm:French-characteristics}, together with
Thm.~\ref{thm:French-aggr-properties} below, shows that our axiomatic
approach is more general than that in
\cite{DBLP:conf/ijcai/AmgoudB16}.

\section{Direct Aggregation Semantics}	\label{sec:semantics}	

\renewcommand{\accdegr}[3]{\ensuremath{\texttt{Deg}^{\ifthenelse{\equal{#1}{}}{ad}{#1}}%
_{\ifthenelse{\equal{#2}{}}{\arggraph}{#2} }}(#3)\xspace}

The main intuition behind the aggregation semantics is that the strength of an argument in an argumentation
is based 
on its initial plausibility and it is strengthened by supports and weakened by attacks. 
If we understand "strengthening" as addition and "weakening" as subtraction, 
we get to a model where the acceptability degree of an argument is calculated by 
adding to its initial plausibility the sum of the acceptability degrees of its supporters and 
subtracting the sum of the acceptability degree of its attackers. 

One additional intuition behind 
the semantics is 
that the influence of arguments is the strongest on arguments that they directly attack or support 
and increasingly weaker on arguments that are only indirectly connected to them.   

\begin{example}\label{ex:dampening}
\begin{minipage}{0.3 \textwidth}
$$\xymatrix{
\underset{1}{a_3} \ar[r]   &  \underset{1}{a_2} \ar[r] &  \underset{1}{a_1}\\
\underset{1}{a_4} \ar[u]   &  &
}$$
\end{minipage}	
\begin{minipage}{0.5 \textwidth}	
$$\xymatrix{
\underset{1}{a_3^\prime} \ar[r]   &  \underset{1}{a_2^\prime} \ar[r] &  \underset{1}{a_1^\prime}\\
& & \underset{1}{a_4^\prime} \ar[u]   &  
}$$
\end{minipage}
\end{example}
The difference can be illustrated by Example \ref{ex:dampening}: both $a_1$ and $a_1^\prime$ are 
supported by 3 different arguments. However, while $a_4$ provides only indirect 
support for $a_1$, $a_4^\prime$ supports $a_1^\prime$ directly. For this reason, the acceptability 
degree of $a_1^\prime$ should be larger than the acceptability degree of $a_1$. We achieve this 
effect by introducing a so-called "dampening factor" $d\geq 1$ that mitigates the effect of arguments 
along the paths of a \wasa. 

\subsection{Definition of Direct Aggregation Semantics}

These two intuitions are formalised  by Definitions \ref{def:aggsemf} and  \ref{def:aggsem}. 

\begin{definition}\label{def:aggsemf}
Let \defaultarggraph be a \wasa. Let the damping factor $d$ be such that $d \geq 1$. 
 For $i\in\mathbb{N}$, let $\fsemdir_i$ be a function\footnote{$\fsemdir$ also depends on $G$ and $d$, which we omit here for readability. Since $\fsemdir$ is only used locally in this definition and the next one, $G$ and $d$ are clear from context.} from $\argset$ to 
$\mathbb{R}$ such that
for any 
$a\in \argset$, for $i \in \{0, 1, 2, \ldots  \} $, if $i = 0$, then $\fsemdir_i(a) = w(a)$, otherwise

		\[ \fsemdir_i(a) = w(a) + \frac{1}{d} \times \left (\sum_{b\in \supporters{}{a}} \fsemdir_{i-1}(b) - 
					\sum_{c\in \attackers{}{a}} \fsemdir_{i-1}(c)\right )						
	\]
or shorter in matrix notation	
		\[ \fsemdir_i = w + \frac{1}{d}G \fsemdir_{i-1}	
	\]
\end{definition}

\begin{definition}\label{def:propagation}
We call $G$ the \emph{incidence matrix} and 
$$\Pr^{G,d}=\sum_{i=0}^\infty \ (\frac{1}{d}G)^i$$ 
the \emph{propagation matrix}.
\end{definition}

\begin{definition}\label{def:aggsem}
The \emph{direct aggregation semantics} is a function $s^d$ transforming any \wasa 
\defaultarggraph into a weighting on \argset such that for any $a\in \argset $

	\[
	   \accdegr{\semdir}{\arggraph,d}{a} =  \lim_{i\rightarrow \infty} \ \fsemdir_{i}(a)
	\]
that is
	\[
	   \accdegrvec{\semdir}{\arggraph,d} =  \lim_{i\rightarrow \infty} \ \fsemdir_i = \sum_{i=0}^\infty \ (\frac{1}{d}G)^iw
	\]
or short
	\[
	   \accdegrvec{\semdir}{\arggraph,d} = \Pr^{G,d} w
	\]

\begin{center}
 (degree = propagation matrix $\times$ initial plausibility)
\end{center}
\end{definition}

This means that the degree can be computed from the initial
plausibility by a fixed linear transformation given by the propagation
matrix.

Let $\ind(G)$ be the maximal indegree, i.e.\ the maximum number of edges leading into an argument in \arggraph.  For $d\leq \ind(G)$, the direct aggregation semantics is not well-defined
in general. Example \ref{ex:direct-aggregation-convergence} provides an example where $\fsemdir$ 
does not converge. 

\begin{example}\label{ex:direct-aggregation-convergence}

Assume $d = \ind(G)$ and 
consider an argument with an initial plausibility $1$ that attacks itself. In this case $d = \ind(G) = 1$ and 

\begin{center}
$
\arggraph  =\langle
\matr{a  }, \matr{ -1}, \matr{1  }\rangle 
$
\qquad or, graphically: \qquad
$\arggraph  =\xymatrix{
  \underset{1}{a}  \ar@(r,u)@{-{*}}
}$	
\end{center}
\noindent In this case $\fsemdir_{2i}(a)=1$ and $\fsemdir_{2i+1}(a)=0$.
\end{example}

\begin{theorem}\label{thm:direct-aggregation-convergence}
For $d>\ind(G)$, the direct aggregation semantics is well-defined
and converges to $(I-\frac{1}{d}G)^{-1}w$.
\end{theorem}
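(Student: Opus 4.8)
The plan is to recognize the defining limit as a matrix geometric (Neumann) series and to prove its convergence by exhibiting $\tfrac{1}{d}G$ as a strict contraction in the maximum-absolute-row-sum norm $\rownorm{\cdot}$, i.e.\ the operator norm induced by the sup-norm on $\mathbb{R}^n$.

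First I would unroll the recurrence of Definition~\ref{def:aggsemf}. Since $\fsemdir_0 = w$ and $\fsemdir_i = w + \tfrac{1}{d}G\,\fsemdir_{i-1}$, a one-line induction gives
\[
   \fsemdir_i = \sum_{k=0}^{i} \Bigl(\tfrac{1}{d}G\Bigr)^{\!k} w .
\]
Thus the $\fsemdir_i$ are exactly the partial sums of the series defining $\accdegrvec{\semdir}{\arggraph,d}$, and the semantics is well-defined (the limit exists) precisely when the Neumann series $\sum_{k\ge 0}(\tfrac{1}{d}G)^k$ converges, in which case its value applied to $w$ is $\Pr^{G,d}w$.

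The key step is the norm estimate. The norm $\rownorm{M} = \max_i \sum_j |m_{ij}|$ is submultiplicative, being an induced operator norm. Because every entry $g_{ij}\in\{-1,0,1\}$, the absolute row sum $\sum_j |g_{ij}|$ counts exactly the nonzero entries of row $i$, which is the number of edges leading into $a_i$, i.e.\ its indegree; maximising over $i$ yields $\rownorm{G} = \ind(G)$. Hence $\rownorm{\tfrac{1}{d}G} = \ind(G)/d$, which is strictly below $1$ exactly when $d > \ind(G)$.

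With $\rownorm{\tfrac{1}{d}G} < 1$ in hand, the remainder is the standard Neumann argument: submultiplicativity gives $\rownorm{(\tfrac{1}{d}G)^k} \le \rownorm{\tfrac{1}{d}G}^k$, so the partial sums are Cauchy and the series converges; and from $(I-\tfrac{1}{d}G)\sum_{k=0}^{N}(\tfrac{1}{d}G)^k = I - (\tfrac{1}{d}G)^{N+1} \to I$ one reads off both that $I-\tfrac{1}{d}G$ is invertible and that $\sum_{k\ge0}(\tfrac{1}{d}G)^k = (I-\tfrac{1}{d}G)^{-1}$. Applying the convergent operator to $w$ gives $\accdegrvec{\semdir}{\arggraph,d} = (I-\tfrac{1}{d}G)^{-1}w$, as claimed. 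The only point requiring care — the genuine content beyond the textbook Neumann series — is the identification $\rownorm{G} = \ind(G)$, where one must use that each $\pm1$ entry contributes exactly $1$ to the absolute row sum so that the row sum coincides with the indegree; the convergence half is then entirely routine.
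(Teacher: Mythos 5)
Your proof is correct and follows essentially the same route as the paper: both arguments hinge on identifying the maximum row-sum norm $\rownorm{G}$ with $\ind(G)$ so that $\rownorm{\tfrac{1}{d}G}<1$ for $d>\ind(G)$, and then invoke the Neumann-series argument (which the paper cites from Horn and Johnson and proves by the same telescoping identity you use) to get convergence and $\Pr^{G,d}=(I-\tfrac{1}{d}G)^{-1}$. Your explicit unrolling of the recurrence into partial sums is a small extra detail the paper leaves implicit in its definition of the semantics, but it changes nothing substantive.
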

For the proof, we need the following from \cite{Horn:1985:MA:5509}, Corolloray 5.6.16:
\begin{fact}\label{fact:matrix-convergence}
If $\norm{\_}$ is a matrix norm, and if $\norm{A}<1$, then $I-A$ is invertible
and
$$(I-A)^{-1}=\sum_{i=0}^\infty  A^i$$
\end{fact}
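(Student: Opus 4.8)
The plan is to apply Fact~\ref{fact:matrix-convergence} as a black box to the matrix $A=\frac{1}{d}G$, so that the whole argument reduces to exhibiting a single matrix norm under which $\norm{\frac{1}{d}G}<1$. The natural candidate is the row-sum norm $\rownorm{\cdot}$, i.e.\ the operator norm induced by the $\ell_\infty$ vector norm, given by $\rownorm{A}=\max_i\sum_j|a_{ij}|$; this is a (submultiplicative) matrix norm in the sense required by the Fact.

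First I would compute $\rownorm{G}$. Because every entry $g_{ij}$ lies in $\{-1,0,1\}$, the absolute row sum $\sum_j|g_{ij}|$ merely counts the nonzero entries of row $i$, which is exactly the number of attackers plus supporters of $a_i$, i.e.\ the indegree of $a_i$. Taking the maximum over all rows gives $\rownorm{G}=\ind(G)$. By homogeneity of the norm, $\rownorm{\frac{1}{d}G}=\frac{1}{d}\rownorm{G}=\frac{\ind(G)}{d}$, and this quantity is strictly below $1$ precisely under the hypothesis $d>\ind(G)$. This is the only place where the hypothesis is used, and it is where I expect the real (if modest) content of the proof to sit.

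With $\norm{A}<1$ established for $A=\frac{1}{d}G$, Fact~\ref{fact:matrix-convergence} immediately yields both that $I-\frac{1}{d}G$ is invertible and that the Neumann series converges to its inverse, that is $\Pr^{G,d}=\sum_{i=0}^\infty(\frac{1}{d}G)^i=(I-\frac{1}{d}G)^{-1}$. In particular the propagation matrix is well-defined, and hence so is $\accdegrvec{\semdir}{\arggraph,d}=\Pr^{G,d}w=(I-\frac{1}{d}G)^{-1}w$, which is the claimed closed form.

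It then remains to tie this to the iterative definition of $\fsemdir$. Unrolling the recurrence $\fsemdir_i=w+\frac{1}{d}G\,\fsemdir_{i-1}$ from the base case $\fsemdir_0=w$ gives, by a one-line induction, the partial sum $\fsemdir_i=\sum_{k=0}^i(\frac{1}{d}G)^k w$; since the matrix series converges, its partial sums applied to $w$ converge to $(I-\frac{1}{d}G)^{-1}w$, which establishes both convergence of $\fsemdir_i$ and the asserted limit. I do not anticipate any genuine obstacle beyond the norm identity $\rownorm{G}=\ind(G)$, since everything downstream is a direct invocation of the cited Fact.
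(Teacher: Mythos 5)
There is a genuine gap, and it is structural: you have not proved the statement at all. The statement in question is the Neumann-series fact itself --- that for any matrix norm with $\norm{A}<1$ the matrix $I-A$ is invertible and $(I-A)^{-1}=\sum_{i=0}^\infty A^i$. Your proposal opens by declaring that you will ``apply Fact~\ref{fact:matrix-convergence} as a black box to $A=\frac{1}{d}G$'', and everything that follows (the identity $\rownorm{G}=\ind(G)$, the bound $\rownorm{\frac{1}{d}G}<1$ for $d>\ind(G)$, the unrolling of $\fsemdir_i=w+\frac{1}{d}G\,\fsemdir_{i-1}$) is a proof of the downstream result, Theorem~\ref{thm:direct-aggregation-convergence} --- indeed it is essentially the paper's own proof of that theorem. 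But as a proof of the Fact it is empty: the single claim that needed justification is exactly the claim you took for granted, so nothing in your text addresses why $\norm{A}<1$ forces invertibility of $I-A$ or why the series sums to the inverse.

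What is missing is a short two-step argument. First, convergence: by submultiplicativity, $\norm{A^i}\leq\norm{A}^i$, and since $\norm{A}<1$ the scalar geometric series $\sum_i \norm{A}^i$ converges; hence $\sum_{i=0}^\infty A^i$ converges absolutely in the (finite-dimensional, hence complete) space of matrices. Second, identification of the limit: from the telescoping identity
$$(I-A)\left(\sum_{i=0}^{N} A^i\right)=I-A^{N+1},$$
and $A^{N+1}\to 0$ (again because $\norm{A^{N+1}}\leq\norm{A}^{N+1}\to 0$), one gets $(I-A)\left(\sum_{i=0}^\infty A^i\right)=I$, and symmetrically $\left(\sum_{i=0}^\infty A^i\right)(I-A)=I$, so $I-A$ is invertible with the series as its inverse. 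This is precisely the paper's two-line proof (``the series converges because its norm does; but then $(I-A)\sum_{i\geq 0}A^i=\sum_{i\geq 0}A^i-\sum_{i\geq 1}A^i=A^0=I$''), and none of it appears in your proposal.
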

\begin{proof}{ (of Fact~\ref{fact:matrix-convergence})}
$\sum_{i=0}^\infty A^i$ converges because its norm does. But then
$(I-A)(\sum_{i=0}^\infty A^i) = \sum_{i=0}^\infty  A^i - \sum_{i=1}^\infty  A^i = A^0 = I$.
\end{proof}

\begin{proof}{ (of Thm.~\ref{thm:direct-aggregation-convergence})}
We need to show that
$$\Pr^{G,d}=\sum_{i=0}^\infty \ (\frac{1}{d}G)^i$$
converges. 
Note that the maximum row sum norm $\rownorm{\_}$ defined by
$$\rownorm{G}=\max_{i=1,\ldots,n}\sum_{j=1,\ldots,n}|g_{ij}|$$
coincides with the maximal indegree, i.e.\ $\rownorm{G}=\ind(G)$.
Hence, we have
$$\rownorm{\frac{1}{d}G}\leq \frac{\ind(G)}{d} < 1$$
By Fact~\ref{fact:matrix-convergence}, this implies that
$I-\frac{1}{d}G$ is invertible and $\Pr^{G,d}=(I-\frac{1}{d}G)^{-1}$, thus
$$\accdegrvec{\semdir}{\arggraph,d}=\Pr^{G,d} w=\sum_{i=0}^\infty \ (\frac{1}{d}G)^iw=(I-\frac{1}{d}G)^{-1}w\vspace*{-0.4cm}$$
\end{proof}

\subsection{Application of Direct Aggregation Semantics}
To illustrate the use of the direct aggregation semantics we revisit an example from the literature 
 \cite{cayrol2013, BGTV10b}. Assume it is the last weekend of the football season and 
there is a close title race between Liverpool and Manchester United. Liverpool will win the title if it either wins its last game or Manchester does not win its last game. The question is: Will Liverpool win the Premiere League? (lpl).  There are two arguments supporting Liverpool's title ambitions:  Liverpool will win, because Manchester United will not win its last game against Manchester City (mnw). And Liverpool will win the tile, because it will win its match against Arsenal (wlm). However, there is a counterargument: Liverpool will not win against Arsenal, because Liverpool's best player is injured (bpi). 

This situation may be represented similarly as in  \cite{cayrol2013}:
\begin{example}
\label{ex:liverpool}
\hspace{3em}
	\begin{minipage}{0.5\textwidth}
	$$\arggraph =  \xymatrix{
	\underset{8}{mnw} \ar[r]   &  \underset{0}{lpl}   &     \underset{5}{wlm}\ar[l] &   \underset{2}{bpi}\ar@{-{*}}[l]  \\
	}$$
	\end{minipage}	
	
\end{example}
However, since \wasa{} are weighted, they allow us to not just represent the relationships between the arguments, but also the initial plausibility of the arguments involved. Since the question whether Liverpool will win is the topic 
under discussion we assign it an initial value of 0 (the neutral value). 
Let us assume that we polled a panel of eight experts on the plausibility of the arguments. 
5 experts believe Liverpool will win its game. Let's further assume that 
3 experts believe that Manchester City will win and 5 experts believe in a draw in the Manchester derby. Hence, ``Manchester will not win'' (mnw) is assigned a value of 8. Further, only 2 experts believe that the loss of Liverpool's star player will have a significant impact on the game.

Let's assume a dampening factor of 2 for the remainder of this subsection. In this case it follows that in Example \ref{ex:liverpool} $\accdegr{\semdir}{}{bpi} = 2$, 
$\accdegr{\semdir}{}{wlm} = 4$, 
$\accdegr{\semdir}{}{mnw} = 8$,
$\accdegr{\semdir}{}{lpl} = 6$. Thus, given the argumentation  \arggraph in Example \ref{ex:liverpool}, Liverpool fans should be quite optimistic. 

This can be seen in more detail as follows. With $\argset=\matr{mnw\\lpl\\wlm\\bpi}$,
the matrix of the graph is
$$G =\matr{0&0&0&0\\1&0&1&0\\0&0&0&-1\\0&0&0&0}$$
and using damping factor $d=2$, the propagation matrix is
$$\Pr^{G,2} = (I-\frac12G)^{-1} = \matr{1&0&0&0\\0.5&1&0.5&-0.25\\0&0&1&-0.5\\0&0&0&1}$$
The computed acceptability degrees are $\Pr^{G,2}\matr{8\\0\\5\\2}=\matr{8\\6\\4\\2}$.

Note that in the example the first argument (mnw) may be considered as a summary of two different arguments: an argument that Manchester City wins (mcw) or that the Manchester derby is a draw (mdd). If we split these arguments up, we get the \wasa \arggraphprime in Example \ref{ex:liverpool2}. 

\begin{example}
\label{ex:liverpool2}
\hspace{3em}
\arggraphprime =
	\begin{minipage}{0.5\textwidth}
	$$  \xymatrix{
	\underset{5}{mdd} \ar[r]   &  \underset{0}{lpl}   &     \underset{5}{wlm}\ar[l] &   \underset{2}{bpi}\ar@{-{*}}[l]  \\
	& \underset{3}{mcw} \ar[u] &  &  
	}$$
	\end{minipage}	
\end{example}

With $\argset=\matr{mdd\\lpl\\wlm\\bpi\\mcw}$, the matrix of the graph is
$$G' =\matr{0&0&0&0&0\\1&0&1&0&1\\0&0&0&-1&0\\0&0&0&0&0\\0&0&0&0&0}$$
and using damping factor $d=2$, the propagation matrix is
$$\Pr^{G',2} = (I-\frac12G')^{-1} = \matr{1&0&0&0&0\\0.5&1&0.5&-0.25&0.5\\0&0&1&-0.5&0\\0&0&0&1&0\\0&0&0&0&1}$$
The computed acceptability degrees are $\Pr^{G',2}\matr{5\\0\\5\\2\\3}
=\matr{5\\6\\4\\2\\3}$.

Because the effects of supporting arguments in the direct aggregation semantics is additive, it does not matter whether one chooses the representation in Example \ref{ex:liverpool} or in Example \ref{ex:liverpool2}. The acceptability degree of ``Liverpool wins the Premiere League" (lpl) does not change.  

A third alternative of representing the situation is to consider Manchester United's perspective (see Example \ref{ex:liverpool3}). ``Manchester United wins the Premiere League'' (mpl) is under the attack by (mdd), (mcw), and (wlm). As result, $\accdegr{\semdir}{\arggraph^{\prime\prime}}{mpl} = -6$.


\begin{example}
\label{ex:liverpool3}
\hspace{3em}
	\begin{minipage}{0.5\textwidth}
	$$\arggraph^{\prime\prime} =  \xymatrix{
	\underset{5}{mdd} \ar@{-{*}}[r]   &  \underset{0}{mpl}   &     \underset{5}{wlm}\ar@{-{*}}[l] &   \underset{2}{bpi}\ar@{-{*}}[l]  \\
	& \underset{3}{mcw} \ar@{-{*}}[u] &  &  
	}$$
	\end{minipage}	
	
\end{example}

Another example concern two pupils Alice and Bob that accuse each
other of lying about a certain circumstance. The teachers Miller
and Smith support the views of Alice and Bob, respectively, because they know
their pupils well.  Of course, both pupils also accuse the other teacher to be
wrong, while the teachers are wise enough not to attack each others'
views, but even support them because they have known each other for a
long time.  From the files, the director derives some judgement about
credibilities:

\begin{example}
\label{ex:school}
\hspace{3em}
$ \overline{\arggraph} =$
	\begin{minipage}{0.4\textwidth}
	$$  \xymatrix{
	\underset{6}{Miller} \ar[d]\ar@{<->}[r]   &  \underset{4}{Smith}\ar[d] \\
	\underset{1}{Alice} \ar@{-{*}}[ur]\ar@{{*}-{*}}[r] &
	\underset{1.5}{Bob} \ar@{-{*}}[ul]
	}$$
	\end{minipage}	
	
\end{example}
With $\overline{\argset}=\matr{Miller\\Smith\\Alice\\Bob}$, the matrix of the graph is
$$\overline{G} = \matr{0&1&0&-1\\1&0&-1&0\\1&0&0&-1\\0&1&-1&0}$$
and using damping factor $d=3$, the propagation matrix is
$$\Pr^{\overline{G},3} = (I-\frac13\overline{G})^{-1} = \frac{1}{21}\matr{23&5&1&-8\\5&23&-8&1\\8&-1&25&-11\\-1&8&-11&25}$$
The computed acceptability degrees are $\Pr^{\overline{G},3}\matr{6\\4\\1\\1.5}=\matr{7\\5.5\\2.5\\2.5}$.
This means that Alice's view is valued equally with Bob's, even
though her initial assessment has been slightly worse than Bob's.
Also note that the all values have increased, because  support
has been in general stronger than attack.

\subsection{Properties of Direct Aggregation Semantics}

\begin{theorem}\label{thm:direct-aggregation-equation}
$D=\accdegrvec{\semdir}{\arggraph,d}$ is the unique solution of the equation
$$D = w+\frac{1}{d}GD$$
\centering(degree = initial plausibility + damped influence)
\end{theorem}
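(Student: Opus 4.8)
The plan is to build everything on the closed form already produced in Theorem~\ref{thm:direct-aggregation-convergence}, namely $D=\accdegrvec{\semdir}{\arggraph,d}=(I-\frac{1}{d}G)^{-1}w$, and to verify separately that this $D$ solves the fixed-point equation and that nothing else does. Throughout I would keep the standing assumption $d>\ind(G)$ in force, since this is exactly what Theorem~\ref{thm:direct-aggregation-convergence} needs to guarantee both convergence and the invertibility of $I-\frac{1}{d}G$.

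For existence I see two routes and would present the cleaner one. The direct route starts from $D=(I-\frac{1}{d}G)^{-1}w$ and multiplies on the left by $I-\frac{1}{d}G$, giving $(I-\frac{1}{d}G)D=w$, which rearranges immediately to $D=w+\frac{1}{d}GD$. The alternative, more ``semantic'' route uses the defining recursion $\fsemdir_i=w+\frac{1}{d}G\fsemdir_{i-1}$, valid for every $i\geq 1$: since Theorem~\ref{thm:direct-aggregation-convergence} gives $\fsemdir_i\to D$ and hence also $\fsemdir_{i-1}\to D$, I would pass to the limit on both sides, using that the affine map $v\mapsto w+\frac{1}{d}Gv$ is continuous, to land again on $D=w+\frac{1}{d}GD$. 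Either way the equation holds.

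For uniqueness I would let $D'$ be any vector with $D'=w+\frac{1}{d}GD'$, subtract the two fixed-point equations, and obtain $D-D'=\frac{1}{d}G(D-D')$, equivalently $(I-\frac{1}{d}G)(D-D')=0$. Now I invoke the invertibility of $I-\frac{1}{d}G$, which holds because $\rownorm{\frac{1}{d}G}\leq \ind(G)/d<1$ by Fact~\ref{fact:matrix-convergence}; left-multiplying by its inverse forces $D-D'=0$, so $D=D'$.

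The main point to be careful about, rather than a genuine obstacle, is simply that the whole argument is powered by the single fact that $I-\frac{1}{d}G$ is invertible under $d>\ind(G)$; with that in hand both existence and uniqueness are one-line matrix manipulations. I would therefore state the hypothesis explicitly at the outset and cite Fact~\ref{fact:matrix-convergence} and Theorem~\ref{thm:direct-aggregation-convergence} for it, so that the reader sees the uniqueness does not hold in the degenerate regime $d\leq\ind(G)$ where, as Example~\ref{ex:direct-aggregation-convergence} shows, the semantics need not even be well-defined.
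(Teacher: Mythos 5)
Your proposal is correct, and the existence half coincides exactly with the paper's: rewrite the equation as $(I-\frac{1}{d}G)D=w$ and invoke invertibility of $I-\frac{1}{d}G$ from Theorem~\ref{thm:direct-aggregation-convergence}. Where you diverge is in the uniqueness half. The paper argues by \emph{successively unfolding} the fixed-point equation: from $D=w+\frac{1}{d}GD$ one gets $D=\sum_{i=0}^{k}(\frac{1}{d}G)^iw+(\frac{1}{d}G)^{k+1}D$, and letting $k\to\infty$ recovers the series $\sum_{i=0}^\infty(\frac{1}{d}G)^iw$ directly (implicitly using that the remainder term vanishes because $\rownorm{\frac{1}{d}G}<1$). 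You instead subtract two putative solutions and conclude $D-D'=0$ from the injectivity of $I-\frac{1}{d}G$. Both arguments rest on the same underlying fact, but yours is arguably cleaner: it avoids having to justify the convergence of the remainder term in the unfolding, reducing uniqueness to a one-line kernel argument. The paper's unfolding has the minor expository advantage of exhibiting the solution as the propagation series again, tying the fixed-point characterisation back to Definition~\ref{def:aggsem}. Your insistence on stating the standing hypothesis $d>\ind(G)$ explicitly, and your remark that uniqueness fails in the degenerate regime illustrated by Example~\ref{ex:direct-aggregation-convergence}, are both sound and would improve the paper's rather terse statement.
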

\begin{proof}{}
  The equation can be rewritten as $(I-\frac{1}{d}G)D=w$, which shows that
  $D=(I-\frac{1}{d}G)^{-1}w=\accdegrvec{\semdir}{\arggraph,d}$ is a solution. Moreover, if $D =
  w+\frac{1}{d}GD$, by successively unfolding, we get
$$D=\sum_{i=0}^\infty \ (\frac{1}{d}G)^iw=\accdegrvec{\semdir}{\arggraph,d}\vspace*{-0.8cm}
$$
\end{proof}

A node $a$ in an argumentation graph is called \emph{circular}, if
there is a non-empty path of parent relations (which can be attack or
support relations) that starts and ends in $a$. A node is called
\emph{hereditarily circular}, if one of its backers or detractors
is circular.

\begin{theorem}\label{thm:direct-aggregation-convergence2}
  Convergence as in Thm.~\ref{thm:direct-aggregation-convergence}
  holds already if $d$ is greater than the maximum indegree for the
  subgraph induced by the hereditarily circular nodes; or $d>0$ if
  there are no hereditarily circular nodes, i.e. if the argumentation
  graph is acyclic.

\end{theorem}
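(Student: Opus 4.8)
The plan is to reduce everything to the convergence of the Neumann series $\sum_{i=0}^\infty(\frac{1}{d}G)^i$ for the propagation matrix, exactly as in the proof of Thm.~\ref{thm:direct-aggregation-convergence}: once this series converges, the argument of Fact~\ref{fact:matrix-convergence} shows that $I-\frac{1}{d}G$ is invertible with inverse equal to the series, so the semantics is well-defined and again equals $(I-\frac{1}{d}G)^{-1}w$. The new ingredient is that, under the weaker hypothesis on $d$, the uniform bound $\rownorm{\frac{1}{d}G}<1$ need no longer hold, so I cannot apply Fact~\ref{fact:matrix-convergence} to $G$ directly. Instead I would split the arguments into the set $C$ of hereditarily circular nodes and the set $N$ of the remaining ones, and reorder $\argset$ as $[N,C]$.

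The key structural step is to show that, in this ordering, $G$ is block lower triangular,
$$G=\begin{pmatrix}G_{NN}&0\\ G_{CN}&G_{CC}\end{pmatrix},$$
i.e.\ that no parent edge runs from a node of $C$ into a node of $N$. This follows from the transitivity of the backer/detractor relation: if $a_j\in C$ is a parent of $a_i$, then $a_j$ has some circular backer or detractor $c$, and since every element of $\backers{}{a_j}\cup\detractors{}{a_j}$ lies in $\backers{}{a_i}\cup\detractors{}{a_i}$, the node $c$ is a circular backer or detractor of $a_i$, forcing $a_i\in C$. The same transitivity argument shows that the subgraph induced on $N$ is acyclic — a cycle inside $N$ would make its nodes lie on a parent-path to themselves, hence circular and so in $C$ — whence $G_{NN}$ has zero diagonal and is nilpotent, with $(G_{NN})^q=0$ for $q\geq|N|$. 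Meanwhile $G_{CC}$ is precisely the subgraph induced by the hereditarily circular nodes, whose maximal indegree is the bound $m$ appearing in the statement; hence for $d>m$ we obtain $\rownorm{\frac{1}{d}G_{CC}}\leq m/d<1$, so Fact~\ref{fact:matrix-convergence} applies to $A_{CC}:=\frac{1}{d}G_{CC}$.

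It then remains to assemble the blocks. Writing $A=\frac{1}{d}G$, block triangularity gives that $A^i$ is block lower triangular with diagonal blocks $(A_{NN})^i,(A_{CC})^i$ and off-diagonal block $(A^i)_{CN}=\sum_{p+q=i-1}(A_{CC})^p A_{CN}(A_{NN})^q$. The series $\sum_i(A_{NN})^i$ is a finite sum by nilpotency, $\sum_i(A_{CC})^i$ converges by the previous paragraph, and in the off-diagonal series only the finitely many terms with $q<|N|$ survive, each multiplied by the convergent factor $\sum_p(A_{CC})^p$; hence $\sum_i A^i$ converges. The acyclic case is the special case $C=\varnothing$, where $G=G_{NN}$ is nilpotent and the series is finite for every $d>0$, matching the bound ``$d>0$'' in the statement. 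I expect the main obstacle to be this structural step: making ``no $C\to N$ edge'' and ``$G_{NN}$ nilpotent'' fully rigorous requires careful bookkeeping with the recursive definitions of $\backers{}{a}$ and $\detractors{}{a}$ and with the observation that lying on a parent-cycle is the same as being one's own backer or detractor. Once this block decomposition is in place, the analytic part is routine.
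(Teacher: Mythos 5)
Your proposal is correct and follows essentially the same idea as the paper's (very terse) proof: the non-hereditarily-circular nodes contribute only finitely many nonzero terms to the Neumann series, so convergence reduces to the subgraph of hereditarily circular nodes, where the row-sum-norm bound of Thm.~\ref{thm:direct-aggregation-convergence} applies. Your block-lower-triangular decomposition, the nilpotency of $G_{NN}$, and the explicit handling of the $(C,N)$ off-diagonal block merely spell out rigorously what the paper compresses into the remark that the rows of $G^k$ for non-hereditarily-circular nodes vanish for $k\geq n$.
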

\begin{proof}{}
  If $a_i$ is not hereditarily circular and $k\geq n$, the $i$-th row
  of $G^k$ consists of zeros only. Hence, for convergence, it suffices
  to consider the hereditarily circular nodes only.
\end{proof}

In the sequel, we will examine direct aggregation semantics w.r.t.\
the characteristics introduced in Sect.~\ref{sec:characteristics}.
Generally, we have two options of choosing the damping factor $d$:
\begin{enumerate}
\item $d$ is chosen such that $d>\ind(G)$. To be definite, let us
  chose $d=\ind(G)+1$. This means that $d$ depends on the argument
  matrix (graph).
\item $d$ is chosen globally $d$, independently of the matrix (graph).
  This means that for each $d$, we get a separate semantics, which is
  guaranteed to be defined only for graphs $G$ with $\ind(G)<d$.
\end{enumerate}
We will indicate these cases.

\begin{example}
If  $d=\ind(G)+1$ (or generally, $d$ does depend on $\ind(G)$),
direct aggregation semantics does not satisfy  Independence.
Consider 
$\arggraph  =\langle
\matr{a },
\matr{
1 
},
\matr{1}\rangle $
and
$\arggraph'  =\langle
\matr{a  \\b  \\c},
\matr{
-1 & 0 & 0 \\
0 & -1 & -1 \\
0 & -1 & -1 \\
},
\matr{1  \\1  \\1}\rangle$.

Then $\accdegrvec{\semdir}{\arggraph,2}=\matr{\frac{2}{3}}$,
but
$\accdegrvec{\semdir}{\arggraph',3}=\matr{\frac{3}{4} & \frac{3}{5} & \frac{3}{5} }^T$.
\end{example}

\begin{theorem}
  If $d$ is chosen globally (which implies that only \mbox{\wasa\!\!s}
  of maximal indegree $<d$ are considered), direct aggregation semantics
  satisfies  Independence.
\end{theorem}
\begin{proof}{}
  $\accdegr{\semdir}{\arggraph,d}{a} = \left(\sum_{i=0}^\infty \
    (\frac{1}{d}G)^iw\right)(a)\footnote {Given a vector $v$, we use the
    notation $v(a_j)$ for the value $v_j$. This is particularly
    convenient if we do not have the index $j$ at hand.}  =
  \left(\sum_{i=0}^\infty \ (\frac{1}{d}\matr{G^i & 0 \\0 & (G^\prime)^i})
    \matr{w \\w'}\right)(a) =$ \\ 
	\hspace*{\fill}$ \left(\sum_{i=0}^\infty \ (\frac{1}{d}\matr{G & 0
      \\0 & G^\prime})^iw^{\dagger}\right)(a) =\left(\sum_{i=0}^\infty
    \ (\frac{1}{d}G^{\dagger})^iw^{\dagger}\right)(a)
  =\accdegr{\semdir}{\arggraph\oplus \arggraph ^\prime,d}{a}$.
\end{proof}

Our semantics enjoys all other desirable properties (the proof
has been relegated to the appendix):
\begin{theorem}\label{thm:direct-aggr-properties}
  Direct aggregation semantics satisfies Anonymity, Equivalence,
  Directionality, Conservativity, Initial Monotony, Neutrality,
  Parent Monotony, Impact, Reinforcement, Causality, Neutralisation and Continuity, Linearity and Reverse impact.
\end{theorem}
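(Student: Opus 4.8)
The two workhorses are the closed form $\accdegrvec{\semdir}{\arggraph,d}=(I-\frac1d G)^{-1}w=\Pr^{G,d}w$ from Thm~\ref{thm:direct-aggregation-convergence} and the fixed-point characterisation $D=w+\frac1d GD$ from Thm~\ref{thm:direct-aggregation-equation}, whose solution is \emph{unique}. The plan is to split the listed characteristics into three families and dispatch each with one of these tools. For the ``invariance'' characteristics (Neutrality, Neutralisation) the strategy is always the same: take the degree vector $D$ of \arggraph, and verify that $D$ itself solves the fixed-point equation $D=w'+\frac1d G'D$ of the modified graph \arggraphprime; by uniqueness $D$ is then also the degree vector of \arggraphprime, which is exactly what these characteristics assert. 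For the ``local'' characteristics (Conservativity, Equivalence, Initial Monotony, Causality) the strategy is instead to read off the single row of $D=w+\frac1d GD$ belonging to the argument(s) in question.

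Concretely, I would proceed as follows. \textbf{Anonymity} follows because an isomorphism is a permutation $P$ with $G'=PGP^{-1}$ and $w'=Pw$, whence $(I-\frac1d G')^{-1}w'=P(I-\frac1d G)^{-1}w$, i.e.\ the degrees are permuted accordingly (note that \ind, and hence any graph-dependent $d$, is preserved). \textbf{Conservativity} is the $a$-row of the fixed point when that row of $G$ vanishes. \textbf{Equivalence} and \textbf{Initial Monotony} compare the $a$- and $b$-rows, which by hypothesis coincide as rows of $G$, so their influence terms $\parent{}{a}D$ and $\parent{}{b}D$ agree and the degree difference equals the weight difference (and since the semantics is unbounded, $\maxd,\mind$ are undefined and the escape clauses never apply). \textbf{Causality} is the contrapositive of the $a$-row, and also follows from Thm~\ref{thm:Causality} once Conservativity and Neutrality are in hand. \textbf{Neutrality} and \textbf{Neutralisation} are handled by the uniqueness trick: zeroing the row/column of a \neutrald-degree argument, resp.\ deleting a matched attack/support pair whose endpoints have equal degree, changes no row of $GD$ because the removed terms contribute $0$. \textbf{Directionality} is cleanest from the power series $\Pr^{G,d}=\sum_i(\frac1d G)^i$: the $x$-component aggregates only weighted paths ending at $x$, and a path using the new edge into $a_j$ must pass through $a_j$, so it affects $x$ only if $x=a_j$ or $a_j\in\backers{}{x}\cup\detractors{}{x}$. \textbf{Continuity} and \textbf{Linearity} are immediate since, for fixed $\argset,G,d$, the map $w\mapsto\Pr^{G,d}w$ is a fixed linear (hence continuous) transformation, and its $a$-component is affine in $w(a)$ with slope $\Pr^{G,d}_{aa}$.

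The genuinely delicate family is the monotonicity characteristics \textbf{Parent Monotony}, \textbf{Impact}, \textbf{Reinforcement} and \textbf{Reverse impact}, which I would again attack by comparing the $a$-rows of the two fixed points termwise. The hard part is that, unlike the $[0,1]$-bounded setting of \cite{DBLP:conf/ijcai/AmgoudB16,amgoudevaluation}, degrees here range over all of $\mathbb{R}$, and the additive aggregation exhibits \emph{reverse impact}: an attacker with negative degree behaves like a supporter. Hence the sign of each parent's degree matters, and a naive ``removing attackers / adding supporters raises the degree'' argument is not valid termwise. For Impact, Reinforcement and Reverse impact this is controlled by the sign hypotheses built into their statements (e.g.\ $\accdegr{}{}{b}>\neutrald$, or the prescribed direction of each degree change), so after substituting the closed form the required strict or weak inequality should drop out. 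Parent Monotony is where I expect the real work: its hypotheses constrain only the \emph{common} parents (the conditions on $\attackers{\arggraph^\prime}{a}$ and $\supporters{}{a}$), leaving the added supporters and removed attackers unconstrained in sign. I would therefore scrutinise most carefully whether the stated hypotheses genuinely suffice or whether a non-negativity assumption on the affected parents is implicitly required, before declaring that part of the proof complete.
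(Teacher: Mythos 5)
Your overall strategy is the paper's own: everything is driven by the unique fixed point $D=w+\frac{1}{d}GD$ of Thm.~\ref{thm:direct-aggregation-equation}, with permutation matrices for Anonymity and Equivalence, an induction on the powers $(\frac{1}{d}G)^i$ for Directionality (your path-counting reading of the power series is the same argument in different clothing), the uniqueness clause for Neutrality and Neutralisation, and the explicit affine form with slope $\Pr^{G,d}_{ii}$ for Linearity and Continuity. One small correction: for Equivalence the hypothesis is \emph{not} that the rows $\parent{}{a}$ and $\parent{}{b}$ of $G$ coincide (that is Initial Monotony's hypothesis), only that there are degree-preserving bijections between the parent sets; the paper packages these into a permutation matrix $P$ with $PD=D$ and $\parent{}{a}P=\parent{}{b}$, which still yields the conclusion $\parent{}{a}D=\parent{}{b}D$ that you need.

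Your hesitation about Parent Monotony is justified, and in fact it points at a gap in the paper's own proof. There, the inequality steps justified by condition 2 of Parent Monotony drop the terms $-\frac{1}{d}D(c)$ for the removed attackers $c\in\attackers{}{a}\setminus\attackers{\arggraph'}{a}$ and add the terms $+\frac{1}{d}D'(b)$ for the new supporters $b\in\supporters{\arggraph'}{a}\setminus\supporters{}{a}$; these inequalities point the right way only if those degrees are nonnegative, which nothing in the hypotheses guarantees once weights range over all of $\mathbb{R}$. Concretely, let $a$ with $w(a)=0$ be attacked by a single argument $b$ with $w(b)=-1$, and take $d=2$: then $D(a)=0+\frac{1}{2}\cdot(-(-1))=\frac{1}{2}$, whereas removing the attack (which satisfies every hypothesis of Parent Monotony, mostly vacuously) gives degree $0<\frac{1}{2}$. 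This is precisely the Reverse impact phenomenon the paper advertises as a feature, and it is incompatible with Parent Monotony as stated; the characteristic (and hence this part of the theorem) needs a sign restriction on the parents in the symmetric difference, exactly the non-negativity hypothesis you suspected. By contrast, for Impact, Reinforcement and Reverse impact the sign hypotheses built into the statements (e.g.\ $D(b)>\neutrald$, or the prescribed direction of each degree change) do supply what the termwise comparison needs, so your plan goes through there, modulo the auxiliary induction on backers and detractors that the paper carries out for Impact.
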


\subsection{Behaviour of the Propagation Matrix}

\newcommand{\di}{d} We will now examine the behaviour of direct
aggregation semantics for specific cases. As noted above, the degree
is computed from the initial plausibility by multiplication with the
propagation matrix. Hence, it suffices to examine propagation
matrices, which are independent of the initial plausibility. Indeed,
propagation matrices can be computed easily by matrix inversion, see
Thm.~\ref{thm:direct-aggregation-convergence}.  We list propagation
matrices for a number of small argument graphs, in terms of the
damping factor $d$. We begin with acyclic graphs. An
isolated argument will be not affected at all:

$$\begin{array}{cllcl}
G & \Pr^{G,d}\\
a & (1)
\end{array}$$

Edges in the argument graph propagate (positive or negative) influence
with strength $\frac{1}{\di}$:
$$\begin{array}{cllcc}
G & \Pr^{G,d} && G & \Pr^{G,d}\\[0.5em]
\xymatrix{a \ar[r]& b} & \matr{1 & 0\\\frac{1}{\di} & 1} &&
\xymatrix{a \ar@{-{*}}[r]& b} & \matr{1 & 0\\-\frac{1}{\di} & 1}
\end{array}$$

Influence of arguments can also propagated along $k$ edges,
resulting in a factor $\frac{1}{\di^k}$. For $k=2$, we get:

$$\begin{array}{cclcc}
G & \Pr^{G,d} && G & \Pr^{G,d}\\[0.5em]
\xymatrixcolsep{15pt}
\xymatrix{a \ar[r]& b\ar[r]&c} & \frac{1}{\di^2}\matr{\di^2 & 0 & 0\\\di & \di^2 & 0\\1 & \di & \di^2} &&
\xymatrixcolsep{15pt}
\xymatrix{a \ar@{-{*}}[r]& b\ar@{-{*}}[r]&c} & \frac{1}{\di^2}\matr{\di^2 & 0 & 0\\-\di & \di^2 & 0\\1 & -\di & \di^2}\\[1.5em] 
\xymatrixcolsep{15pt}
\xymatrix{a \ar[r]& b\ar@{-{*}}[r]&c} & \frac{1}{\di^2}\matr{\di^2 & 0 & 0\\\di & \di^2 & 0\\-1 & -\di & \di^2} &&
\xymatrixcolsep{15pt}
\xymatrix{a \ar@{-{*}}[r]& b\ar[r]&c} & \frac{1}{\di^2}\matr{\di^2 & 0 & 0\\-\di & \di^2 & 0\\-1 & \di & \di^2}
\end{array}$$

Now to cyclic graphs. The simplest case are self-support and \ self-attack,
resulting respectively in a slight strengthening and weakening  of the weight:\medskip

$$\begin{array}{cllllcll}
G && \Pr^{G,d} && G && \Pr^{G,d}\\[1.3em]
\xymatrix{a\ar@(r,u)} && \matr{\frac{\di}{\di-1}}&&
\xymatrix{a\ar@(r,u)@{-{*}}} && \matr{\frac{\di}{\di+1}}
\end{array}$$

Since a chain of two attacks is a support, both mutual attack and
support lead to a slight self-support as well, while an attack-support
pair (called a vicious circle in \cite{Betz05}) leads to a slight
self-attack of both arguments.  We also show the combination with explicits
self-supports and self-attacks:
$$\begin{array}{cclcc}
G & \Pr^{G,d} && G & \Pr^{G,d}\\[0.5em]
\xymatrix{a \ar@{<->}[r]& b} & \frac{1}{\di^2-1}
   \matr{\di^2 & \di\\\di & \di^2} &&
\xymatrix{a \ar@{{*}-{*}}[r]& b} & \frac{1}{\di^2-1}
   \matr{\di^2 & -\di\\-\di & \di^2}\\[1.5em]   
\xymatrix{a \ar[r]<0.5ex>& b\ar@{-{*}}[l]<0.5ex>} & \frac{1}{\di^2+1}
   \matr{\di^2 & -\di\\\di & \di^2}&&
\xymatrix{a \ar[r]<0.5ex>& b\ar@(r,u)\ar@{-{*}}[l]<0.5ex>} & \frac{1}{\di^2-r+1}
   \matr{\di^2-\di & -\di\\\di & \di^2}\\[1.5em]   
\xymatrix{a \ar@{<->}[r] & b\ar@(r,u)} & \frac{1}{\di^2-\di-1}
   \matr{\di^2-\di & \di\\\di & \di^2}&&
\xymatrix{a \ar@{{*}-{*}}[r]& b\ar@(r,u)@{-{*}}} & \frac{1}{\di^2+\di-1}
   \matr{\di^2+\di & -\di\\-\di & \di^2} \\[1.5em]
\xymatrix{a\ar@(l,u) \ar@{<->}[r] & b\ar@(r,u)} & \frac{1}{\di-2}
   \matr{\di-1 & 1\\1 & \di-1} &&
\xymatrix{a \ar@(l,u)@{-{*}}\ar@{{*}-{*}}[r]& b\ar@(r,u)@{-{*}}} & 
   \frac{1}{\di+2}\matr{\di+1 & -1\\-1 & \di+1}
\end{array}$$

For all arguments in a cycle of length three, we get a slight
self-support (left hand-side, non-vicious circles) or self-attack of
the argument (right hand-side, vicious circle in the sense of
\cite{Betz05}):
$$\begin{array}{cccc}
\xymatrixcolsep{15pt}
G & \Pr^{G,d} & G & \Pr^{G,d}\\[0.5em]
\xymatrix{a \ar[rr]&& b\ar[dl]\\&c\ar[ul]} & \frac{1}{\di^3-1}\matr{\di^3 & \di & \di^2\\\di^2 & \di^3 & \di\\\di & \di^2 & \di^3} &
\xymatrixcolsep{15pt}
\xymatrix{a \ar@{-{*}}[rr]&& b\ar@{-{*}}[dl]\\&c\ar@{-{*}}[ul]} & \frac{1}{\di^3+1}\matr{\di^3 & \di & -\di^2\\-\di^2 & \di^3 & \di\\\di & -\di^2 & \di^3} \\[1.5em] 
\xymatrixcolsep{15pt}
\xymatrix{a \ar@{-{*}}[rr]&& b\ar@{-{*}}[dl]\\&c\ar[ul]} & \frac{1}{\di^3-1}\matr{\di^3 & -\di & \di^2\\-\di^2 & \di^3 & -\di\\\di & -\di^2 & \di^3} &
\xymatrixcolsep{15pt}
\xymatrix{a \ar[rr]&& b\ar[dl]\\&c\ar@{-{*}}[ul]} & \frac{1}{\di^3+1}\matr{\di^3 & -\di & -\di^2\\\di^2 & \di^3 & -\di\\\di & \di^2 & \di^3} \\[1.5em] 
\end{array}$$


Finally, we show various cycles of length four, all of which
show a slight self-support. The numbers $2$ and $-2$ in the
last three examples are caused by the fact that between diagonally
opposite arguments, there are always two distinct paths:
$$\begin{array}{cccc}
G & \Pr^{G,d} & G & \Pr^{G,d}\\[0.5em]
\xymatrixcolsep{15pt}
\xymatrixrowsep{15pt}
\xymatrix{a \ar[r]& b\ar[d]\\d\ar[u]&c\ar[l]} & \frac{1}{\di^4-1}\matr{\di^4 & \di & \di^2 & \di^3\\\di^3 & \di^4 & \di & \di^2\\\di^2 & \di^3 & \di^4& \di\\\di & \di^2 & \di^3 & \di^4} &
\xymatrixcolsep{15pt}
\xymatrixrowsep{15pt}
\xymatrix{a \ar@{-{*}}[r]& b\ar@{-{*}}[d]\\d\ar@{-{*}}[u]&c\ar@{-{*}}[l]} & \frac{1}{\di^4-1}\matr{\di^4 & -\di & \di^2 & -\di^3\\-\di^3 & \di^4 & -\di & \di^2\\\di^2 & -\di^3 & \di^4& -\di\\-\di & \di^2 & -\di^3 & \di^4} \\[1.5em] 
\xymatrixcolsep{15pt}
\xymatrixrowsep{15pt}
\xymatrix{a \ar@{-{*}}[r]& b\ar[d]\\d\ar[u]&c\ar@{-{*}}[l]} & \frac{1}{\di^4-1}\matr{\di^4 & -\di & -\di^2 & \di^3\\-\di^3 & \di^4 & \di & -\di^2\\-\di^2 & \di^3 & \di^4& -\di\\\di & -\di^2 & -\di^3 & \di^4} &
\xymatrixcolsep{15pt}
\xymatrixrowsep{15pt}
\xymatrix{a \ar@{-{*}}[r]& b\ar@{-{*}}[d]\\d\ar[u]&c\ar[l]} & \frac{1}{\di^4-1}\matr{\di^4 & -\di & \di^2 & \di^3\\-\di^3 & \di^4 & -\di & -\di^2\\\di^2 & -\di^3 & \di^4& \di\\\di & -\di^2 & \di^3 & \di^4} \\[1.5em] 
\xymatrixcolsep{15pt}
\xymatrixrowsep{15pt}
\xymatrix{a \ar@{<->}[r]& b\ar@{<->}[d]\\d\ar@{<->}[u]&c\ar@{<->}[l]} & \frac{1}{\di^2-\di}\matr{\di^2-2 & \di & 2 & \di\\\di & \di^2-2 & \di & 2\\2 & \di & \di^2-2& \di\\\di & 2 & \di & \di^2-2} &
\xymatrixcolsep{15pt}
\xymatrixrowsep{15pt}
\xymatrix{a \ar@{{*}-{*}}[r]& b\ar@{{*}-{*}}[d]\\d\ar@{{*}-{*}}[u]&c\ar@{{*}-{*}}[l]} & \frac{1}{\di^2-\di}\matr{\di^2-2 & -\di & 2 & -\di\\-\di & \di^2-2 & -\di & 2\\2 & -\di & \di^2-2& -\di\\-\di & 2 & -\di & \di^2-2} \\[1.5em] 
\xymatrixcolsep{15pt}
\xymatrixrowsep{15pt}
\xymatrix{a \ar@{<->}[r]& b\ar@{{*}-{*}}[d]\\d\ar@{{*}-{*}}[u]&c\ar@{<->}[l]} & \frac{1}{\di^2-\di}\matr{\di^2-2 & \di & -2 & -\di\\\di & \di^2-2 & -\di & -2\\-2 & -\di & \di^2-2& \di\\-\di & -2 & \di & \di^2-2} &
\end{array}$$

\subsection{Sigmoid Direct Aggregation Semantics}
\label{sec:sigmoid-direct}

The direct aggregation semantics allows for the real numbers as values for the initial plausibility 
and for the acceptability degree. As discussed on page \pageref{br:sigmoiddirect}, this approach 
deviates from other approaches like \cite{DBLP:conf/ijcai/AmgoudB16, amgoudevaluation}, where 
only a subset of $\mathbb{R}$ is considered. 
In this subsection we illustrate  how the real-valued direct aggregation semantics can be
cast into the more traditional framework of weighting in $[0,1]$.
Actually, for technical reasons, we restrict ourselves to the
interval $(0,1)$. A similar approach could be chosen for other intervals.

To constrain our semantics to $(0,1)$, we need a sigmoid function, that is, a bijection $\sigma:\mathbb{R}\to(0,1)$
that is continuous and strictly increasing such that $\sigma(0)=0.5$.
For example, the logistic function 
$$\sigma(x)=\frac{1}{1+e^{-x}}$$
the suitably normalised arcus tangens function
$$\sigma(x)=\frac{\arctan(x)}{\pi}+\frac{1}{2}$$
or the simple fraction
$$\sigma(x)=\frac{1+|x|+x}{2(1+|x|)}$$
will do. 

\begin{center}
\includegraphics[width=0.6\textwidth]{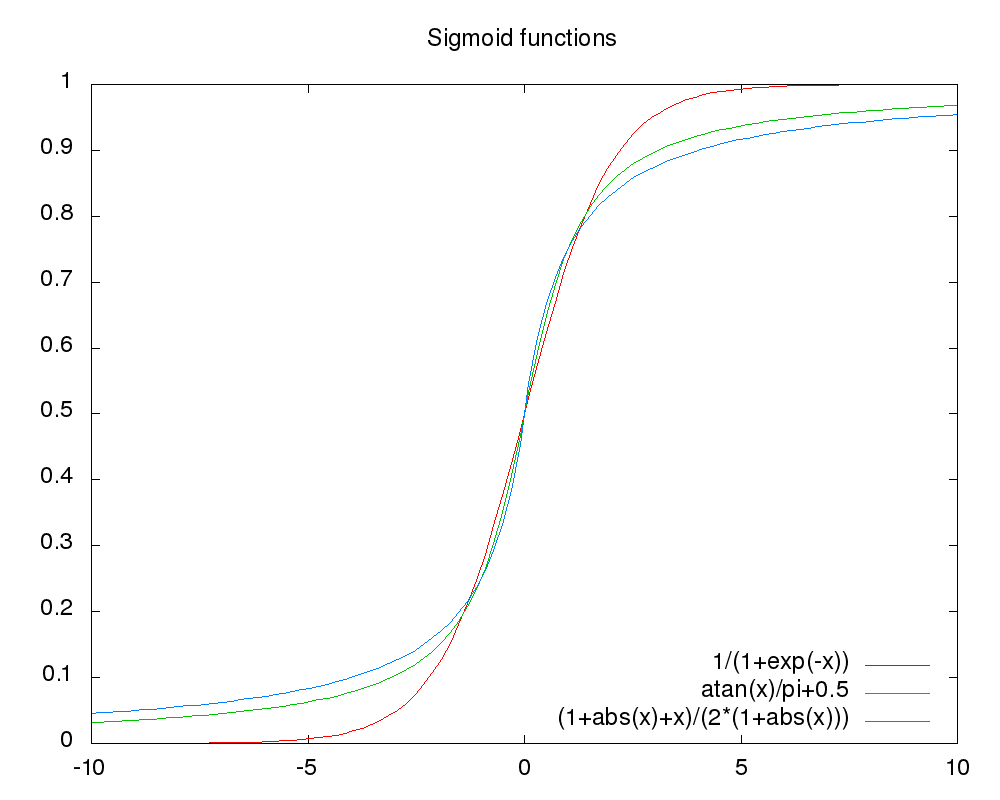}
\end{center}

Being such prepared, we now can define  
\begin{definition}\label{def:saggsemf}
Let \defaultarggraph be a \wasa such that $w:\mathcal{A}\to(0,1)$. 
Let the damping factor $d$ be such that $d \geq 1$. 
The sigmoid direct aggregation semantics is defined as

\[ \accdegrvec{\semsdir,d}{} =  
   \sigma\left(\Pr^{G,d}\sigma^{-1}(w)\right)
	\]
\end{definition}
We have a fixed-point theorem similar to Thm.~\ref{thm:direct-aggregation-equation}:
\begin{theorem}\label{thm:sigmoid-direct-aggregation-equation}
$D=\accdegrvec{\semsdir,d}{}$ is the unique solution of the equation
$$D = \sigma(\sigma^{-1}(w)+\frac{1}{d}G\sigma^{-1}(D))$$
\end{theorem}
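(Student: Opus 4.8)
The plan is to reduce the non-linear fixed-point equation to the linear one already characterised in Thm.~\ref{thm:direct-aggregation-equation} by a change of variables. Since $\sigma:\mathbb{R}\to(0,1)$ is a continuous strictly increasing bijection applied componentwise, it induces a bijection $\sigma:\mathbb{R}^n\to(0,1)^n$ with inverse $\sigma^{-1}$. I would set $v:=\sigma^{-1}(w)$ and, for a candidate solution $D$, write $E:=\sigma^{-1}(D)$; note that any $D$ satisfying the stated equation automatically lies in $(0,1)^n$, since its right-hand side is a value of $\sigma$, so $E$ is well defined. Throughout I assume $d>\ind(G)$ so that $\Pr^{G,d}$ exists by Thm.~\ref{thm:direct-aggregation-convergence}.

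First I would establish the equivalence of the two equations under this substitution. Applying the injection $\sigma^{-1}$ to both sides of $D=\sigma(\sigma^{-1}(w)+\frac{1}{d}G\sigma^{-1}(D))$ yields $\sigma^{-1}(D)=\sigma^{-1}(w)+\frac{1}{d}G\sigma^{-1}(D)$, that is $E=v+\frac{1}{d}GE$; conversely, applying $\sigma$ to $E=v+\frac{1}{d}GE$ recovers the original equation. Hence $D$ solves the sigmoid equation if and only if $E=\sigma^{-1}(D)$ solves the linear equation $E=v+\frac{1}{d}GE$.

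Next I would invoke Thm.~\ref{thm:direct-aggregation-equation} applied to the \wasa $\langle\argset,G,v\rangle$, i.e.\ with initial plausibility $v$ in place of $w$: it gives that $E=v+\frac{1}{d}GE$ has the unique solution $E=\Pr^{G,d}v=\Pr^{G,d}\sigma^{-1}(w)$. Transporting this back through the bijection, the sigmoid equation therefore has exactly one solution, namely $D=\sigma(E)=\sigma(\Pr^{G,d}\sigma^{-1}(w))$, which is precisely $\accdegrvec{\semsdir,d}{}$ by Definition~\ref{def:saggsemf}. This simultaneously delivers existence (the defined semantics is a solution) and uniqueness.

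The argument is essentially routine once the change of variables is in place; the only point requiring care is that the passage between the two equations uses both directions of the bijectivity of $\sigma$ — injectivity to cancel $\sigma$ on the left when deriving the linear equation, and the fact that $\sigma^{-1}$ is defined on all of $(0,1)^n$ so that every solution $D$ can be pulled back to some $E$. I expect no genuine analytic obstacle here, since the substantive work (convergence of the propagation matrix and the linear fixed-point characterisation) has already been carried out in Thms.~\ref{thm:direct-aggregation-convergence} and~\ref{thm:direct-aggregation-equation}.
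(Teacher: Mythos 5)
Your proposal is correct and follows essentially the same route as the paper: the paper's proof likewise rewrites the equation as $\sigma^{-1}(D)=\sigma^{-1}(w)+\frac{1}{d}G\sigma^{-1}(D)$ and appeals to Thm.~\ref{thm:direct-aggregation-equation}. Your version merely spells out the details the paper leaves implicit (that any solution lies in $(0,1)^n$ so the pullback is well defined, and that the linear theorem is applied with $\sigma^{-1}(w)$ as the initial plausibility).
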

\begin{proof}{}
Rewrite the equation to
\begin{eqnarray}
\sigma^{-1}(D) = \sigma^{-1}(w)+\frac{1}{d}G\sigma^{-1}(D)
\label{eq:sigmoid-fixpoint}
\end{eqnarray}
and use Thm.~\ref{thm:direct-aggregation-equation}.
\end{proof}

In the appendix, we prove the desirable properties of our semantics,
with the neutral value taken to be $\frac{1}{2}$ instead of $0$:
\begin{theorem}\label{thm:sigmoid-direct-aggr-properties}
  Using $\neutrald=\frac{1}{2}$,
  sigmoid direct aggregation semantics satisfies Anonymity,
  Equivalence, Directionality, Conservativity, Initial Monotony,
  Parent Monotony, Neutrality, Impact,
  Reinforcement, Causality, Neutralisation, Continuity
  and Reverse impact.
  Independence is satisfied if $d$ is globally fixed.
  Since sigmoid functions are nonlinear, Linearity cannot
  be satisfied; instead only the weaker Initial Monotony holds.
\end{theorem}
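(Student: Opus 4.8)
\noindent The plan is to exploit the fact that sigmoid direct aggregation is simply direct aggregation \emph{conjugated} by the order isomorphism $\sigma$. First I would record the basic identity: writing $\widehat w:=\sigma^{-1}(w)$ for the pointwise transform of the weight vector, Definition~\ref{def:saggsemf} unfolds to
$$\accdegrvec{\semsdir,d}{}=\sigma\bigl(\Pr^{G,d}\,\widehat w\bigr),$$
where, by Definition~\ref{def:aggsem}, the inner vector $\Pr^{G,d}\widehat w$ is exactly the direct aggregation degree vector of the graph $\widehat\arggraph:=\langle\argset,G,\widehat w\rangle$ obtained by keeping $G$ and replacing $w$ by $\widehat w$. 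Thus every sigmoid degree is the $\sigma$-image of the corresponding direct degree of $\widehat\arggraph$, and since Thm.~\ref{thm:direct-aggr-properties} already grants all the relevant characteristics to the direct semantics, the whole task becomes transporting each characteristic across $\sigma$, using only that $\sigma$ is a continuous, strictly increasing bijection $\mathbb{R}\to(0,1)$ with $\sigma(0)=\tfrac12$, equivalently $\sigma^{-1}(\tfrac12)=0$.

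Next I would establish three observations that make every transfer uniform. First, the passage $\arggraph\mapsto\widehat\arggraph$ leaves $G$ untouched, so $\attackers{}{a}$, $\supporters{}{a}$, $\backers{}{a}$, $\detractors{}{a}$, isomorphisms, unions, and the edge-addition and edge-deletion operations appearing in Directionality and Neutralisation all coincide in $\arggraph$ and $\widehat\arggraph$. Second, isolation commutes with conjugation: isolating $a$ resets its incident matrix entries to the no-edge value $0$ (forced by $g_{ij}\in\{-1,0,1\}$) and its weight to the neutral value $\tfrac12$; pulling this back through $\sigma^{-1}$ fixes the zeroed edges and sends $\tfrac12$ to $\sigma^{-1}(\tfrac12)=0$, which is precisely direct-aggregation isolation on $\widehat\arggraph$, so $\widehat{(\arggraph|_{a})}=(\widehat\arggraph)|_{a}$. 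Third, because $(0,1)$ is open and $\sigma$ is onto, the sigmoid degree space has neither a maximum nor a minimum, so $\maxd$ and $\mind$ are undefined and every disjunct of the form ``$\dots=\maxd$'' or ``$\dots=\mind$'' is vacuously false, exactly as for the direct semantics.

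With these in place I would transfer the characteristics in three families. The \emph{order-based} ones (Initial Monotony, Parent Monotony, Impact, Reinforcement, Reverse impact) follow from strict monotonicity of $\sigma$ and $\sigma^{-1}$: each hypothesis comparing weights or degrees is equivalent to the same comparison after $\sigma^{-1}$, direct aggregation on $\widehat\arggraph$ yields the required \emph{strict} inequality (the degenerate $\maxd/\mind$ alternatives never occurring), and $\sigma$ reproduces it for the sigmoid degrees. The \emph{identity-based} ones (Anonymity, Equivalence, Conservativity, Neutrality, Neutralisation, Causality) follow from bijectivity of $\sigma$: an equality of sigmoid degrees is equivalent to the corresponding equality of direct degrees, the condition $\accdegr{\semsdir}{}{a}=\tfrac12$ is equivalent to the direct degree of $a$ in $\widehat\arggraph$ being $0$, and for Conservativity $\sigma(\sigma^{-1}(w(a)))=w(a)$ restores the initial plausibility verbatim; Neutrality and the isolation-based clauses additionally use the commutation above. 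The \emph{structural} ones (Directionality, Independence) transfer because their operations act identically on $\arggraph$ and $\widehat\arggraph$ (for the disjoint union $\widehat{w^\dagger}$ is the concatenation of the transformed weights, so $\widehat{\arggraph\oplus\arggraphprime}=\widehat\arggraph\oplus\widehat{\arggraphprime}$), with Independence again requiring a globally fixed $d$. Continuity is immediate, being the composition $w\mapsto\widehat w\mapsto\Pr^{G,d}\widehat w\mapsto\sigma(\cdot)$ of continuous maps.

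Finally I would show that Linearity \emph{fails}. By Linearity of the direct semantics, fixing all weights except $w(a)$ makes the direct degree of $a$ in $\widehat\arggraph$ an affine function $c_1+c_2\,\sigma^{-1}(w(a))$, so the sigmoid degree equals $\sigma\bigl(c_1+c_2\,\sigma^{-1}(w(a))\bigr)$, which is in general not affine in $w(a)$. A single self-supporting argument already gives $c_2=\tfrac{d}{d-1}\neq1$ (read off from the self-support propagation matrix $\matr{\frac{d}{d-1}}$) and hence a genuinely nonlinear dependence, so only the weaker Initial Monotony survives. The hard part, I expect, is not any single transfer but the bookkeeping behind the second and third observations: verifying that isolation really commutes with conjugation (which hinges on $\sigma^{-1}(\tfrac12)=0$ and on edges being reset to $0$ rather than to the acceptability-neutral value), and that the absence of an attained maximum and minimum keeps the \emph{strict} inequalities, rather than the degenerate equalities, as the operative content of Impact, Reinforcement and Reverse impact. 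Once these are pinned down, each individual characteristic reduces to a one-line transport across $\sigma$.
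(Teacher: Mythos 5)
Your proposal is correct, but it reaches the result by a genuinely different route than the paper. The paper's proof \emph{re-runs} each sub-proof of Thm.~\ref{thm:direct-aggr-properties}, substituting the sigmoid fixed-point equation (Equation~\ref{eq:sigmoid-fixpoint}, i.e.\ $\sigma^{-1}(D)=\sigma^{-1}(w)+\frac{1}{d}G\sigma^{-1}(D)$) for Thm.~\ref{thm:direct-aggregation-equation} and invoking, inside each argument, that $\sigma$ and $\sigma^{-1}$ are mutually inverse, strictly monotone, continuous and permutation-equivariant, with $\sigma(0)=\frac12$ handling Neutrality and Impact. You instead record once that the sigmoid degree vector of $\arggraph$ is $\sigma$ applied to the direct degree vector of the conjugated graph $\langle\argset,G,\sigma^{-1}(w)\rangle$, and then transport each characteristic of Thm.~\ref{thm:direct-aggr-properties} across the order isomorphism $\sigma$, using that theorem as a black box rather than as a template. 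Your version buys modularity --- the inductions buried in the paper's proofs of Directionality and Impact never have to be repeated --- at the cost of the bookkeeping you yourself flag: that isolation, union and the edge-editing operations commute with the conjugation (where you rightly read the isolation clause $g'_{jk}=\neutrald$ as resetting edges to $0$, the only value consistent with $g_{jk}\in\{-1,0,1\}$, while the weight goes to $\frac12=\sigma(0)$), and that $\maxd$ and $\mind$ are undefined for the degree space $(0,1)$, so the degenerate disjuncts in Impact, Reinforcement and Reverse impact remain vacuous exactly as in the direct case. Your non-Linearity witness (a self-supporting argument, yielding $\sigma\bigl(\tfrac{d}{d-1}\sigma^{-1}(t)\bigr)$, which cannot be affine on $(0,1)$ because its one-sided limits would force it to be the identity while $\tfrac{d}{d-1}\neq 1$) is in fact more explicit than the paper's one-sentence remark. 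Both arguments are sound; the paper's is shorter on the page only because it delegates the work to the phrase ``parallels that of Thm.~\ref{thm:direct-aggr-properties}''.
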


It is straightforward to modify sigmoid direct aggregation semantics
such that it works with the interval $(-1,1)$\footnote{Also other
  intervals are possible, but seem less natural.} instead of $(0,1)$.
This might even be considered as more natural, because then $0$ is the
neutral value, and not $\frac{1}{2}$. We here have chosen $(0,1)$ because this
interval is used more frequently in the literature\footnote{More precisely, the
literature used the \emph{closed} interval $[0,1]$.}, and $0$ and $1$
can roughly be thought of as false and true.  

\eat{
\subsection{Compact Direct Aggregation Semantics}

Sometimes arguments have such a strong plausibility (or
implausibility) that one wants to assign a maximal (or minimal) weight
to them. Our direct aggregation semantics (and also its sigmoid form)
work on non-compact spaces that feature neither a maximum nor a
minimum. In this section, we are going study a suitable
``compactification'' of these semantics, i.e.\ extend them with a
maximum and a minimum.

The simplest compactification of the real line is the extended real
line $$\overline{\mathbb{R}}=\mathbb{R}\cup\{-\infty,\infty\}.$$
Our goal is to extended direct aggregation semantics such
that it works with values in $\overline{\mathbb{R}}^n$.
Unfortunately, this is not possible in a continuous way,
as the following example shows:
\begin{example}
Consider the \wasa 
$\arggraph=\langle\argset=\matr{a\\b}, 
                  G=\matr{0&0\\1&0}, 
                  \matr{\infty\\-\infty}\rangle$ having 
weights in $\overline{\mathbb{R}}^2$. The initial plausibility
$\matr{\infty\\-\infty}$ is the limit of the following
sequences ($i$ ranges over the natural numbers):
\begin{itemize}
\item $w^i=\matr{i\\-i}$, with $\lim_{i\to\infty}\accdegrvec{dir}{\langle {\argset, G, w^i}\rangle,1}\ =  \matr{\infty\\0}$
\item $u^i=\matr{2i\\-i}$, with $\lim_{i\to\infty}\accdegrvec{dir}{\langle {\argset, G, u^i}\rangle,1}\ =  \matr{\infty\\\infty}$
\item $v^i=\matr{i\\-2i}$, with $\lim_{i\to\infty}\accdegrvec{dir}{\langle {\argset, G, v^i}\rangle,1}\ =  \matr{\infty\\-\infty}$
\end{itemize}
\end{example}
Since continuous functions need to preserve limits, a continuous
extension of the degree function $w\mapsto\accdegrvec{dir}{\langle
  {\argset, G, w}\rangle,d}$ to $\overline{\mathbb{R}}^n$ is not
possible.  Still, when considering the first sequence
$(w^i)_{i\in\mathbb{N}}$ as the canonical one, we can use this to
obtain a non-continuous extension of the degree function to
$\overline{\mathbb{R}}^n$:
\begin{definition}\label{def:compact-aggsemf}
  Let \defaultarggraph be a \wasa with
  $w\in\overline{\mathbb{R}}^n$. Let the damping factor $d$ be such
  that $d \geq 1$.  Let $w^i$ be the following sequence with limit $w$:
$$w^i_k=\begin{cases}
i & \mathit{if }w_k=\infty\\
-i & \mathit{if }w_k=-\infty\\
w_k & \mathit{if }w_k\in\mathbb{R}\\
\end{cases}
$$
Then $$\accdegrvec{cdir}{\langle {\argset, G, w}\rangle,d}\ := 
\lim_{i\to\infty}\accdegrvec{dir}{\langle {\argset, G, w^i}\rangle,d}$$
\end{definition}

We can describe $\accdegrvec{cdir}{\langle {\argset, G, w}\rangle,d}$
more directly:
\begin{proposition}
$$\accdegrvec{cdir}{\langle {\argset, G, w}\rangle,d}(a_j) =
 \begin{cases}
 \infty & \text{if }\sum_{w_k=\infty}\Pr^{G,d}_{jk}>\sum_{w_k=-\infty}\Pr^{G,d}_{jk}\\
 -\infty & \text{if }\sum_{w_k=\infty}\Pr^{G,d}_{jk}<\sum_{w_k=-\infty}\Pr^{G,d}_{jk}\\
 \sum_{w_k\in\mathbb{R}} \Pr^{G,d}_{jk}w_k & \text{if }\sum_{w_k=\infty}\Pr^{G,d}_{jk}=\sum_{w_k=-\infty}\Pr^{G,d}_{jk}\\
 \end{cases}$$
\end{proposition}
That is, multiples of $\infty$ and $-\infty$ are counted and they can
cancel out each other.

We can now prove the desirable characteristics (for the proof, see
the appendix):
\begin{theorem}\label{thm:compact-aggr-properties}
  Compact direct aggregation semantics satisfies Anonymity,
  Equivalence, Directionality, Conservativity, Initial Monotony,
  Parent Monotony,
  Reinforcement, Causality, Neutralisation and Boundedness.
  Independence is satisfied if $d$ is globally fixed.
  Linearity and continuity are satisfied only
  for initial plausibilities not involving minimal or maximal values.
  Equivalence, Parent Monotony, Neutralisation, Interchangeability, 
  Reinforcement do not hold.
  Stickiness???
\end{theorem}

This semantics can easily transformed into one over the closed
unit interval $[0,1]$.
Given a sigmoid function $\sigma$ as in Sect.~\ref{sec:sigmoid-direct}, 
we extend it to 
$$\bar{\sigma}:\overline{\mathbb{R}}\to[0,1],\ 
 \bar{\sigma}(-\infty)=0,\ \bar{\sigma}(\infty)=1
$$
Note that like $\sigma$, also $\bar{\sigma}$ is bijective.

\begin{definition}\label{def:scaggsemf}
Let \defaultarggraph be a \wasa such that $w:\mathcal{A}\to[0,1]$. 
Let the damping factor $d$ be such that $d \geq 1$. 
The sigmoid compact direct aggregation semantics is defined as

\[ \accdegrvec{scd}{\arggraph,d} =  
   \bar\sigma\left(\accdegrvec{cdir}{\langle {\argset, G, \bar\sigma^{-1}(w)}\rangle,d}\right)
	\]
\end{definition}

Again we have the desirable characteristics, but slightly modified as
in Thm.~\ref{thm:sigmoid-aggr-properties} (for the proof, see
the appendix):
\begin{theorem}\label{thm:sigmoid-compact-aggr-properties}
  Sigmoid compact direct aggregation semantics satisfies Anonymity,
  Directionality, Conservativity, Initial Monotony,
  Causality and Boundedness.
  Independence is satisfied if $d$ is globally fixed.
  Continuity is satisfied only
  for initial plausibilities not involving minimal or maximal values.
  Neutrality and Impact are only satisfied if the neutral
  value is taken to be $\frac{1}{2}$, not $0$. 
  Since sigmoid functions are nonlinear, Linearity cannot
  be satisfied; instead only the weaker Initial Monotony holds.
  Equivalence, Parent Monotony, Neutralisation, Interchangeability, 
  Reinforcement do not hold. 
  Stickiness???
\end{theorem}
}

\section{Comparison to previous work}\label{sec:comparison}

Generalising the classical work by Dung \cite{DBLP:journals/ai/Dung95}
to rank-based argumentation, Amgoud et al.\ \cite{amgoudevaluation}
have introduced weighted argumentation graphs. Moreover, they restrict
weightings to $[0,1]$. Thus they consider a non-empty finite set
$\mathcal{A}$ of arguments, a weighting $w:\mathcal{A}\to[0,1]$, and
an attack relation $\attack \subseteq \argset \times \argset$.  Amgoud
et al.\ have also considered support argumentation graphs
\cite{DBLP:conf/ijcai/AmgoudB16}, which are similar, except that
$\attack$ is replaced by a support relation $\support \subseteq \argset
\times \argset$.

It is straightforward to organise $\mathcal{A}$ into a vector.
Then $\attack$ and $\support$ can be organised as incidence
matrices $G_a$ and $G_s$. A combined attack/support graph then
leads to a \wasa in our sense by setting $G:=G_s-G_a$.

\subsection{Recursive Sigmoid Aggregation Semantics}
\label{sec:rsig}

Both the h-categorizer semantics of \cite{amgoudevaluation} (for attack
relations) and the aggregation based semantics of
 \cite{DBLP:conf/ijcai/AmgoudB16} (for support relations) work with
a summation of the attacks and supports respectively. They can be combined
into one semantics for \wasa\!\!\!s as follows (note that weights
are restricted to $[0,1]$):
\begin{definition}[Recursive Sigmoid Aggregation Function]
Let \defaultarggraph be a \wasa such that $w:\mathcal{A}\to[0,1]$. 
For $i\in\mathbb{N}$, the recursive sigmoid aggregation function $\fsemrsig_i$ from \argset to $[0, 1]$ is defined as follows: for any 
$a\in \argset$, for $i \in \{0, 1, 2, \ldots  \} $, if $i = 0$, then $\fsemrsig_i(a) = w(a)$, otherwise

\[
 \fsemrsig_i(a) =
\begin{cases}
\frac{w(a)}{1-s^i(a)} & \textit{if } s^i(a) \leq 0\\
\frac{w(a)+ s^i(a)}{1 + s^i(a)}&      \textit{if } s^i(a) \geq 0\\
\end{cases}
\]

where, for any $a\in \argset$:  
	
	\[ s^i(a) = \sum_{b\in \supporters{}{a}} \fsemrsig_{i-1}(b) - 
				\sum_{b\in \attackers{}{a}} \fsemrsig_{i-1}(b)	
\]
\end{definition}

\begin{definition}[Recursive Sigmoid Aggregation Semantics]
	The \emph{recursive sigmoid aggregation semantics} is a function \semrsig transforming any \wasa \defaultarggraph 
	 into a weighting on \argset such that for any $a\in \argset $
	\[ \accdegr{\semrsig}{}{a} = \lim_{i\rightarrow \infty}\fsemrsig_i(a)
	\]

\end{definition}

\begin{example}
	The function $\fsemrsig_i$ does not converge in general. 
Consider the \wasa
{
\xymatrixcolsep{2pt}
$$\xymatrix{
\frac{3}{4}&a\ar@{<->}[rrrr]\ar@{{*}-{*}}[d] &&&&
b \ar@{{*}-{*}}[d]&\frac{1}{4}\\
\frac{3}{4}&c\ar@{<->}[rrrr] &&&& d&\frac{1}{4}
}$$
}
Then $\fsemrsig_{2i}=\matr{\frac{3}{4}&\frac{1}{4}&\frac{3}{4}&\frac{1}{4}}^T$
and $\fsemrsig_{2i+1}=\matr{\frac{1}{2}&\frac{1}{2}&\frac{1}{2}&\frac{1}{2}}^T$.
\end{example}
This shows that a na\"ive combination of the 
semantics of attacks from \cite{amgoudevaluation}
with the semantics of supports from \cite{DBLP:conf/ijcai/AmgoudB16}
is not possible.

In the next two subsections, we will study two modifications of this
semantics that hopefully will lead to convergence.

However, as shown in \cite{DBLP:conf/ijcai/AmgoudB16}, $\fsemrsig_i$
\emph{does} converge when graphs are restricted to support relations
only.  This is called aggregation-based semantics in
\cite{DBLP:conf/ijcai/AmgoudB16}.
\begin{theorem}\label{thm:French-aggr-properties}
Aggregation-based semantics \cite{DBLP:conf/ijcai/AmgoudB16}, 
which is $\accdegrvec{\semrsig}{}$ 
restricted to graphs with only support relations,
satisfies all of our mandatory characteristics.
\end{theorem}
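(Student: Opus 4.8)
The plan is to exploit the closed form of the limit for support-only graphs. Since $g_{ij}\in\{0,1\}$ we have $\attackers{}{a}=\detractors{}{a}=\emptyset$ and $s^i(a)=\sum_{b\in\supporters{}{a}}\fsemrsig_{i-1}(b)\ge 0$ for every $a$, so only the second branch of $\fsemrsig$ is ever active and $\fsemrsig_i(a)=\frac{w(a)+s^i(a)}{1+s^i(a)}$. Convergence on such graphs is exactly the aggregation-based semantics of \cite{DBLP:conf/ijcai/AmgoudB16}, which I may assume. Writing $D:=\accdegrvec{\semrsig}{}$ and $h(w,S):=\frac{w+S}{1+S}$, the limit is the unique fixed point of $D(a)=h\bigl(w(a),S_a\bigr)$ with $S_a=\sum_{b\in\supporters{}{a}}D(b)$. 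I would first record the elementary facts that drive almost everything: $D(a)\in[0,1]$, so that $\maxd=1$, $\mind=0$ and $\neutrald=0$; $h$ is strictly increasing in its first argument ($\partial_w h=\tfrac{1}{1+S}>0$); and $h$ is increasing in its second argument ($\partial_S h=\tfrac{1-w}{(1+S)^2}\ge 0$), strictly so whenever $w<1$.

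From this the structural characteristics are immediate. Conservativity holds because $\supporters{}{a}=\emptyset$ forces $S_a=0$ and $h(w(a),0)=w(a)$. Initial Monotony follows from strict $w$-monotonicity of $h$, since $\parent{}{a}=\parent{}{b}$ gives $\supporters{}{a}=\supporters{}{b}$ and hence $S_a=S_b$. Equivalence and Interchangeability hold because $S_a$ depends only on the multiset of supporter degrees, so a degree-preserving bijection of supporters (Equivalence) or a swap of two equal-degree supporters (Interchangeability) leaves every $S_a$, and therefore the whole fixed point, unchanged; in each case I would check that the candidate vector solves the modified system and invoke uniqueness. Anonymity is the same argument applied to an isomorphism, which carries the fixed-point equations of \arggraph to those of \arggraphprime. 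Neutrality is again a fixed-point check: $D(a)=\neutrald=0$ forces $w(a)=0$ and $S_a=0$, so $a$ contributes $0$ to every argument it supports and receives no support, whence isolating $a$ preserves the fixed point. Neutralisation is vacuous, as a support-only graph has no attackers. Finally I would prove a \emph{locality lemma} — that $D(a)$ depends only on the restriction of \arggraph to $\{a\}\cup\backers{}{a}$, because the supporters of a backer are again backers, so this set induces a closed subsystem whose unique fixed point is the restriction of $D$ — which yields Independence (the backers of $a$ are unchanged under disjoint union) and Directionality (an argument $x$ with $a_j\notin\backers{}{x}$ has an unaffected backer-subgraph when an edge into $a_j$ is added).

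The monotone characteristics reduce to the $S$-monotonicity of $h$ together with the hypotheses, which directly compare supporter degrees. For Parent Monotony, $w(a)=w'(a)$, $\supporters{}{a}\subseteq\supporters{\arggraphprime}{a}$ and $D(x)\le D'(x)$ on the supporters give $S_a\le S_a'$, hence $D(a)=h(w(a),S_a)\le h(w(a),S_a')=D'(a)$. Reinforcement is the strict version: equal supporter sets with one strictly larger supporter degree give $S_a<S_a'$, hence a strict increase unless $w(a)=1$, in which case both degrees equal $\maxd=1$. Impact asserts a strict change under isolation of a positive-degree supporter; here the one-step formula is not enough, so I would argue by monotone induction on $\fsemrsig_i$, showing that the iterates on $\arggraph|_{b}$ stay pointwise below those on \arggraph and that the strict deficit at the removed edge propagates to $a$, with the boundary case absorbed by $\mind$ and $\maxd$. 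Causality then follows for free from Conservativity and Neutrality by Theorem~\ref{thm:Causality}, and Stickiness from Parent Monotony, Independence, Interchangeability, Anonymity and Conservativity by Theorem~\ref{thm:stickiness}.

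The one genuinely analytic point, and the step I expect to be hardest, is Continuity. Each iterate $\fsemrsig_i$ is continuous in $w$ because every denominator $1+s^i(a)\ge 1$ stays bounded away from $0$, but continuity of the pointwise limit does not follow automatically. The plan is to upgrade the convergence from \cite{DBLP:conf/ijcai/AmgoudB16} to \emph{locally uniform} convergence on the compact cube $[0,1]^n$, so that the limit inherits continuity; alternatively, to use uniqueness of the fixed point together with an implicit-function argument, solving $D(a)\bigl(1+S_a\bigr)=w(a)+S_a$ for $D$ as a function of $w$. The delicate part of either route is the degenerate locus — where $w(a)=1$, or where a supporter degree vanishes — at which strict $S$-monotonicity fails and the Jacobian of the fixed-point map can degenerate; these cases must be isolated and treated by hand (for instance by a direct sandwiching of the iterates) to conclude continuity everywhere.
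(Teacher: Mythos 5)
Your proposal is correct and follows essentially the same skeleton as the paper's proof: both rest on the fixed-point characterisation $\accdegr{\semrsig}{}{a}=w(a)+(1-w(a))\frac{S_a}{1+S_a}=\frac{w(a)+S_a}{1+S_a}$ (Theorem~9 of \cite{DBLP:conf/ijcai/AmgoudB16}, restated as Thm.~\ref{thm:French-fixpoint}), exploit monotonicity of this map in $w$ and in $S$ for the order-theoretic characteristics, fall back on an induction over the iterates $\fsemrsig_i$ for Directionality and Impact, and obtain Causality and Stickiness from Thms.~\ref{thm:Causality} and \ref{thm:stickiness}. Three local differences are worth recording. First, you dispose of Neutralisation by observing it is vacuous on support-only graphs; the paper instead runs it through the fixed-point argument and explicitly notes that this requires uniqueness of the fixed point (which, like you, it asserts rather than proves in detail, deferring to the convergence proof). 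Second, for Independence and Directionality you propose a locality lemma (the backers of $a$ form a closed subsystem) plus uniqueness, where the paper proves Directionality by induction on $i$ showing $\fsemrsig_{\arggraph,i}(y)=\fsemrsig_{\arggraph',i}(y)$ for all $y\in\backers{}{x}\cup\{x\}$ and simply cites \cite{DBLP:conf/ijcai/AmgoudB16} for Independence; the two routes are interchangeable, but yours leans more heavily on uniqueness, which should then be established once and for all rather than assumed. Third, and to your credit, you treat Continuity with the care it deserves: the paper's entire argument is the one-liner that ``all the operations involved in the definition are continuous,'' which as stated does not address the fact that $\accdegrvec{\semrsig}{}$ is a \emph{pointwise limit} of continuous functions; your plan to upgrade to locally uniform convergence on $[0,1]^n$ (or to argue via the fixed-point equation and uniqueness) is exactly what is needed to close that gap, and the degenerate cases you flag ($w(a)=1$, vanishing supporter degrees) are the right ones to watch.
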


In \cite{DBLP:conf/ijcai/AmgoudB16}, two more semantics are defined.
One is top-based semantics. It differs from aggregation-based
semantics in that multiple supports for a given argument are not
summed up, but only the maximum support is considered. Hence, the
number of supported is ignored, only their quality matters. By
contrast, reward-based semantics ``favours the number of supporters
over their quality'' \cite{DBLP:conf/ijcai/AmgoudB16}. This is
achieved by defining $f_i$ as, in binary representation, $0.111\ldots 1$
($m$ ones, where $m$ is the number of supporters), plus the strength
of the supporters which is normalised in a way such that it has effect
only in the subsequents bits.  Both semantics fulfil specialised
optional axioms. The principles of these semantics can easily be
carried over to our (sigmoid) direct aggregation semantics. However,
these semantics are not so interesting for our use case. Moreover,
top-based semantics does not even satisfy all the axioms which are
said to be mandatory in \cite{DBLP:conf/ijcai/AmgoudB16}. Therefore,
we refrain from developing these semantics in detail here.

\subsection{Recursive Damped Aggregation Semantics}
Our first attempt to modify the sigmoid aggregation semantics in order to make it convergent removes the sigmoid character of the functions and uses linear functions instead. Convergence is ensured by dividing
 $s^i$ by a damping factor $d>\ind(G)$, such that its components range between -1 and 1.
This allows us to get rid of $s^i$ in the denominator and use 
functions linear in $s^i$ instead of a sigmoid function.
\begin{definition}[Recursive damped aggregation function]
Let \defaultarggraph be a \wasa such that $w:\mathcal{A}\to[0,1]$. Let the damping factor $d$ be such that $d \geq 1$. 
For $i\in\mathbb{N}$, the recursive damped aggregation function $\fsemrd_i$ from \argset to $[0, 1]$ is defined as follows: for any 
$a\in \argset$, for $i \in \{0, 1, 2, \ldots  \} $, if $i = 0$, then $\fsemrd_i(a) = w(a)$, otherwise

\[
 \fsemrd_i(a) =
\begin{cases}
w(a)(1+s^i(a)) & \textit{if } s^i(a) \leq 0\\
w(a)+(1-w(a))s^i(a)   & \textit{if } s^i(a) \geq 0\\
\end{cases}
\]

where, for any $a\in \argset$:  
	
	\[ s^i(a) =\frac{1}{d}\times\left (\sum_{b\in \supporters{}{a}} \fsemrd{i-1}(b) - 
				\sum_{b\in \attackers{}{a}} \fsemrd{i-1}(b)\right )
\]
An short matrix notation is

$$\begin{array}{l}
\fsemrd_i = w+Diag(p(w,s^i))s^i\\
\text{where }s^i = \frac{1}{d}G\fsemrd_{i-1}\\
\qquad~~ p(w,s)(x)=\begin{cases}
                       w(x) & \textit{if }s(x)<0\\
                       1-w(x) & \textit{if }s(x)\geq 0
 		    \end{cases}
\end{array}$$
Here, $Diag$ uses a vector to fill the diagonal of a matrix, which is
otherwise zero.
\end{definition}

\begin{conjecture}
For $d>\ind(G)$, $\lim_{i\to\infty}\fsemrd_i$ converges.
\end{conjecture}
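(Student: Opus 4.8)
The plan is to recognise the recursion $\fsemrd_i = F(\fsemrd_{i-1})$ with $\fsemrd_0 = w$ as the orbit of a single map $F$ on the cube $[0,1]^n$, and to prove that $F$ is a contraction with respect to the maximum norm $\rownorm{\_}$ (here applied to vectors, so that $\rownorm{x}=\max_a|x(a)|$, which is exactly the row-sum norm of a column vector). Concretely, write $s(x)=\frac{1}{d}Gx$ for the damped influence, and for each argument $a$ let $\phi_a$ be the piecewise-linear map $\phi_a(t)=w(a)(1+t)$ for $t\leq 0$ and $\phi_a(t)=w(a)+(1-w(a))t$ for $t\geq 0$, so that $F(x)(a)=\phi_a(s(x)(a))$. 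Banach's fixed point theorem on the complete metric space $([0,1]^n,\rownorm{\_})$ then yields both a unique fixed point and convergence of the iterates from the starting value $w$.

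Three things must be checked. First, \emph{invariance}: $F$ maps $[0,1]^n$ into itself. Since $d>\ind(G)$ and the components of $x$ lie in $[0,1]$, the identity $\rownorm{G}=\ind(G)$ established in the proof of Thm.~\ref{thm:direct-aggregation-convergence} gives $|s(x)(a)|\leq \ind(G)/d<1$; a short case distinction on the sign of $s(x)(a)$ then shows $\phi_a(s(x)(a))\in[0,w(a)]$ when $s(x)(a)\leq 0$ and $\phi_a(s(x)(a))\in[w(a),1]$ when $s(x)(a)\geq 0$, so $F(x)\in[0,1]^n$. Second, the \emph{linear part} is a strict contraction: $\rownorm{s(x)-s(y)}=\frac{1}{d}\rownorm{G(x-y)}\leq \frac{\ind(G)}{d}\rownorm{x-y}$ with $\ind(G)/d<1$. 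Third, each $\phi_a$ is \emph{nonexpansive}: it is continuous at the kink $t=0$ (both branches return $w(a)$) and has slopes $w(a)$ and $1-w(a)$, both in $[0,1]$, hence is globally Lipschitz with constant $\max(w(a),1-w(a))\leq 1$, even across $t=0$.

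Combining these, for any $x,y\in[0,1]^n$,
$$\rownorm{F(x)-F(y)}=\max_a\bigl|\phi_a(s(x)(a))-\phi_a(s(y)(a))\bigr|\leq \Bigl(\max_a \max(w(a),1-w(a))\Bigr)\rownorm{s(x)-s(y)}\leq \frac{\ind(G)}{d}\rownorm{x-y},$$
so $F$ is a contraction with factor $\ind(G)/d<1$. Banach's theorem then gives the conjectured convergence, and in fact identifies $\lim_{i\to\infty}\fsemrd_i$ as the unique solution of the fixed-point equation $D(a)=\phi_a\!\left(\frac{1}{d}(GD)(a)\right)$, in analogy with Thm.~\ref{thm:direct-aggregation-equation}.

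The step I expect to require the most care is the global Lipschitz estimate on $\phi_a$. Because the active branch depends on the sign of $s(x)(a)$, and this sign may differ between $x$ and $y$, one cannot argue branch by branch but must verify nonexpansiveness \emph{across} the kink. This reduces to checking that for $t<0<t'$ one has $\phi_a(t')-\phi_a(t)=(1-w(a))t'+w(a)(-t)\leq \max(w(a),1-w(a))(t'-t)$, which holds because both coefficients are bounded by $\max(w(a),1-w(a))$ and $t',-t\geq 0$. This is the estimate that makes the contraction argument go through; everything else is routine.
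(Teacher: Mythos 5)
The paper offers no proof of this statement: it is explicitly left as a \emph{conjecture}, so there is no argument of the authors' to compare yours against. Your contraction-mapping proof appears to be correct and would settle the conjecture affirmatively. All three ingredients check out: (i) invariance of $[0,1]^n$ under $F$ follows from $|s(x)(a)|\leq \ind(G)/d<1$ together with your sign-wise case analysis; (ii) the estimate $\rownorm{\frac{1}{d}G(x-y)}\leq\frac{\ind(G)}{d}\rownorm{x-y}$ is exactly the operator-norm bound already used in the proof of Thm.~\ref{thm:direct-aggregation-convergence}; and (iii) the global $1$-Lipschitz property of each $\phi_a$ is the one genuinely new point, and your explicit cross-kink computation $(1-w(a))t'+w(a)(-t)\leq\max(w(a),1-w(a))(t'-t)$ for $t<0<t'$ is valid --- note that $\phi_a$ is nondecreasing since both slopes $w(a)$ and $1-w(a)$ are nonnegative, so the absolute value causes no trouble. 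Banach's theorem on the complete space $([0,1]^n,\rownorm{\_})$ then gives geometric convergence with rate $\ind(G)/d$, and as a bonus identifies the limit as the unique solution of the fixed-point equation $D=w+Diag(p(w,\frac{1}{d}GD))\frac{1}{d}GD$, an analogue of Thm.~\ref{thm:direct-aggregation-equation} that the paper does not state and which would be needed anyway to transfer the characteristic proofs of Thm.~\ref{thm:direct-aggr-properties} to this semantics. Two minor presentational points, neither affecting correctness: the degenerate case $\ind(G)=0$ gives contraction factor $0$ (the map is constant after one step), and it is worth saying explicitly that $d>\ind(G)\geq 1$ is only needed when the graph has edges.
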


\subsection{Damped Dogged Semantics}
\label{sec:dogged}

Our second modification of the sigmoid aggregation semantics that shall
reach convergence keeps the sigmoid character but avoids the case distinction
of the previous subsections.
\begin{definition}[Dogged Function]
Let $\sigma$ be any sigmoid function, see Sect.~\ref{sec:sigmoid-direct}
for examples.
Let \defaultarggraph be a \wasa such that $w:\mathcal{A}\to[0,1]$. 
Let the damping factor $d$ be such that $d \geq 1$.  
	The dogged function $f^{\sigma}_i$ from \argset to $[0, 1]$ is defined as follows: for any 
	$a\in \argset$, for $i \in \{0, 1, 2, \ldots  \} $, if $i = 0$, then $f^{\sigma}_i(a) = w(a)$, otherwise

	\[
	 f_i^{\sigma}(a) =
	\begin{cases}
	1 & \textit{if } w(a) = 1 \\
	\sigma(s^i(a)+ \sigma^{-1}(w(a)))&      \textit{if } 0  <  w(a)< 1\\
	0 & \textit{if } w(a) = 0 \\ 
	\end{cases}
	\]

	where, for any $a\in \argset$:  
	
		\[ s^i(a) = \frac{1}{d}\times\left (\sum_{b\in \supporters{}{a}} f^{\sigma}_{i-1}(b) - 
					\sum_{b\in \attackers{}{a}} f^{\sigma}_{i-1}(b)	\right )
	\]

	\end{definition}

\begin{example}
For $d=1$ and $\sigma_1(x)=\frac{1}{1+e^{-x}}$ or
$\sigma_2(x)=\frac{\arctan(x)}{\pi}+\frac{1}{2}$
or $\sigma_3(x)=\frac{1+|x|+x}{2(1+|x|)}$, 
$\lim_{i\to\infty}f_i^{\sigma_j}$ does not converge in general.
Consider the following graph
$$\xymatrix{
a \ar@{{*}-{*}}[rrr] \ar@{{*}-{*}}[dd] \ar@{{*}-{*}}[dr] \ar@{{*}-{*}}[drr] \ar@{-{*}}@(l,dl)[ddrrr] &&&
b \ar@{-{*}}[dll] \ar@{{*}-{*}}[dl] \ar@{-{*}}[dd] \ar@{{*}-{*}}@(r,dr)[ddlll] \\
& d \ar@{{*}-{*}}[r] \ar@{{*}-{*}}[dl] \ar@{{*}-{*}}[drr] &
e \ar@{{*}-{*}}[dll] \ar@{{*}-{*}}[dr] &\\
f \ar@{{*}-}[rrr] &&& c
}$$
with initial plausibility $0.85$ for every node. For large enough $i$, we have
$$
\begin{array}{l}
f_{2i}^{\sigma_1}\approx\matr{0.386435 & 0.529751 & 0.357394 &0.236454 & 0.236454 & 0.236454}^T\\
f_{2i+1}^{\sigma_1}\approx\matr{0.621398 & 0.705838 & 0.585527 & 0.497027 & 0.497027 & 0.4970277}^T
\end{array}$$
$\sigma_2$ and $\sigma_3$ exhibit a similar behaviour (with the same graph
and initial plausibilities).

\end{example}

\begin{conjecture}
For $d>\ind(G)$, $\lim_{i\to\infty}f_i^{\sigma}$ converges.
\end{conjecture}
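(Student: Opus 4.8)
The plan is to recast the dogged iteration as a fixed-point iteration of a single map $H$ and to show that $H$ is a contraction, so that the Banach fixed-point theorem yields convergence. First I would split the arguments into \emph{saturated} nodes, where $w(a)\in\{0,1\}$, and \emph{active} nodes, where $0<w(a)<1$. By the case distinction in the definition, $f_i^{\sigma}(a)=w(a)$ for every saturated node and every $i\ge 0$; these nodes therefore contribute only fixed constants to the sums $s^i(a)$ of the active nodes, so it suffices to prove convergence of the active coordinates. The active iterates stay in $(0,1)$ because $\sigma$ maps into $(0,1)$ and the active weights lie in $(0,1)$, so $\sigma^{-1}$ is applicable throughout.

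The key move is to pass to the transformed coordinates $y_i(a)=\sigma^{-1}(f_i^{\sigma}(a))$. Substituting the definitions of $f_i^{\sigma}$ and $s^i$, the recursion on the active nodes becomes $y_i=H(y_{i-1})$, where
\[
 H(\mathbf{y})(a)=\sigma^{-1}(w(a))+\frac{1}{d}\Big(\sum_{b \text{ active}} g_{ab}\,\sigma(y_b)+C_a\Big),\qquad C_a=\sum_{b\text{ saturated}} g_{ab}\,w(b).
\]
The advantage is that $H$ is now a self-map of the \emph{complete} space $\mathbb{R}^k$ ($k$ the number of active nodes), rather than of the non-complete open cube $(0,1)^k$, so that once a contraction estimate is in hand the Banach theorem applies directly.

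Next I would bound the Lipschitz constant of $H$ in the maximum norm. Writing $L_\sigma$ for the Lipschitz constant of $\sigma$ (equal to $\sup_x\sigma'(x)$ when $\sigma$ is differentiable), we get $|\sigma(y_b)-\sigma(y_b')|\le L_\sigma|y_b-y_b'|$, hence
\[
 |H(\mathbf{y})(a)-H(\mathbf{y}')(a)|\le \frac{L_\sigma}{d}\sum_{b}|g_{ab}|\,\|\mathbf{y}-\mathbf{y}'\|_\infty .
\]
Taking the maximum over $a$ and using that $\max_a\sum_b|g_{ab}|$ is bounded by the maximum row-sum norm $\rownorm{G}=\ind(G)$ (exactly as in the proof of Thm.~\ref{thm:direct-aggregation-convergence}), the contraction factor is at most $L_\sigma\,\ind(G)/d$. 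For the logistic, arctangent, and simple-fraction sigmoids one checks that $L_\sigma=\sigma'(0)$ equals $\tfrac14$, $\tfrac1\pi$, $\tfrac12$ respectively, all $\le 1$; hence $d>\ind(G)$ forces the factor below $1$ and $H$ is a contraction. Banach then gives $y_i\to y^\ast$, and continuity of $\sigma$ yields $f_i^{\sigma}(a)=\sigma(y_i(a))\to\sigma(y^\ast(a))$ on active nodes and trivially on saturated ones, establishing convergence (and uniqueness of the limit, since $y\mapsto\sigma(y)$ is a homeomorphism).

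The step I expect to be the real obstacle is the passage from the three worked examples to an \emph{arbitrary} sigmoid. The definition in Sect.~\ref{sec:sigmoid-direct} requires only continuity, strict monotonicity and $\sigma(0)=\tfrac12$, and says nothing about the slope; a sigmoid with $L_\sigma>1$ would only yield the weaker sufficient condition $d>L_\sigma\,\ind(G)$. Thus either the statement should carry the extra hypothesis $\sup_x\sigma'(x)\le 1$ (which all intended examples satisfy), or one must replace $\ind(G)$ by $L_\sigma\,\ind(G)$ in the bound. A secondary subtlety is that the row-sum estimate is not tight when $\sigma'$ is small away from $0$, so a sharper weighted-norm or spectral-radius argument might recover $d>\ind(G)$ for a wider class of sigmoids; but the contraction estimate above is the cleanest route to the result as conjectured for the concrete semantics actually in use.
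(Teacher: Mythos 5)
The paper does not actually prove this statement: it is left as a conjecture, and the conclusion explicitly lists the convergence of the recursively defined bipolar semantics as open. So there is no proof of record to compare against, and your proposal has to be judged on its own. On that basis it is essentially sound, and it would in fact settle the conjecture for the sigmoids the paper actually uses. The change of variables $y_i(a)=\sigma^{-1}(f_i^{\sigma}(a))$ on the non-saturated nodes is the right move: it turns the dogged recursion into $y_i=H(y_{i-1})$ with $H$ a self-map of the complete space $\mathbb{R}^k$, the saturated nodes correctly enter only through the constants $C_a$, and the row-sum estimate $\max_a\sum_b|g_{ab}|=\rownorm{G}=\ind(G)$ is exactly the one used in the proof of Theorem~\ref{thm:direct-aggregation-convergence}, giving the contraction factor $L_\sigma\,\ind(G)/d$. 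Your computations $L_\sigma=\tfrac14,\tfrac1\pi,\tfrac12$ for the three listed sigmoids are correct, so for each of them $d>\ind(G)$ (indeed already $d>\tfrac12\ind(G)$) forces a contraction and Banach's theorem yields convergence to the unique fixed point.

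The caveat you raise is not a cosmetic one, and you are right to insist on it: the definition of a sigmoid in Section~\ref{sec:sigmoid-direct} requires only a continuous strictly increasing bijection with $\sigma(0)=\tfrac12$, which need not be Lipschitz with constant $\le 1$ (or Lipschitz at all). For such a $\sigma$ the conjecture as literally stated can fail: take a single self-attacking node with $w(a)=\tfrac12$ and $d=2>\ind(G)=1$; the recursion becomes $y_i=-\tfrac12\sigma(y_{i-1})$ on $(-\tfrac12,0)$, and if $\sigma$ is made steep enough near the fixed point that $\sigma'(y^\ast)>2$, the fixed point is repelling for this decreasing map and the orbit started at $y_0=0$ is driven to a $2$-cycle, so $f_i^{\sigma}$ oscillates. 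Hence the honest conclusion of your argument is that the conjecture holds under the additional hypothesis $\sup_x\sigma'(x)\le 1$ (satisfied by all three intended sigmoids), and in general for $d>L_\sigma\,\ind(G)$; as stated, quantified over arbitrary sigmoids, it is false. It would also be worth noting that your contraction argument localises in the same way as Theorem~\ref{thm:direct-aggregation-convergence2}, so $d$ need only dominate $L_\sigma$ times the maximal indegree of the subgraph induced by the hereditarily circular nodes.
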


\section{Conclusion and Future work} \label{sec:future}

We have shown that bipolar argumentation graphs can be equipped with a
weighting (ranked-based) semantics, both for weights ranging over real
numbers as well as for weights in the range $(0,1)$. The neutral value
is $0$ in the former case and $\frac{1}{2}$ in the latter case.  Both
semantics fulfil suitable characteristics. E.g.\ the computed
acceptability degree of an argument is monotonic both in the initial
plausibility and in the set of supporting arguments. These
characteristics have been taken from the literature and suitably
generalised and strengthened.

The comparison of our semantics to related work in the literature
(see also Table~\ref{fig:semantics-overview})
naturally lead to further (recursively defined) bipolar semantics (see
section~\ref{sec:comparison}), the convergence of which is still
open. Note that these semantics are defined over $[0,1]$ but still use
$0$ as the neutral value. We think that it is conceptually more
convincing to use $\frac{1}{2}$ as neutral value for bipolar semantics
with weights in $[0,1]$, as we did for Sigmoid direct aggregation
semantics in section~\ref{sec:sigmoid-direct}.

\begin{figure}
\begin{tabular}{|p{3cm}|c|c|c|c|c|}\hline
Semantics                     & weight range & neutral value & convergent & bounded &reverse impact  \\\hline\hline
Direct aggregation            & $\mathbb{R}$ & 0             & yes & no & yes \\\hline
Sigmoid direct aggregation    & $(0,1)$      & $\frac{1}{2}$ & yes & no & yes\\\hline
Recursive sigmoid aggregation & $[0,1]$      & 0             & no  & yes & no \\\hline
Recursive damped aggregation  & $[0,1]$      & 0             & ?   & yes & no\\\hline
Damped dogged                 & $[0,1]$      & 0             & ?   & yes & no\\\hline
\end{tabular}
\caption{\label{fig:semantics-overview}Overview of the different semantics.}
\end{figure}

Future work should consider the questions whether a bipolar semantics
for compact intervals like $[0,1]$ or
$\mathbb{R}\cup\{-\infty,\infty\}$ are possible. One way would be to
prove the bipolar recursive semantics developed in in
section~\ref{sec:comparison} to be convergent.  Another way would be
to extend sigmoid direct aggregation semantics (see
section~\ref{sec:sigmoid-direct}) from $(0,1)$ to $[0,1]$. An obvious
solution would define $\sigma(-\infty)=0$ and
$\sigma(\infty)=1$. However, then a major difficulty is the
development of a suitable arithmetics for the extended real line that
keeps Thm.~\ref{thm:direct-aggr-properties} true. 

Another future direction is to equip attack and support relations with
weights, e.g.\ in the interval $[-1,1]$. See
\cite{DBLP:journals/ai/DunneHMPW11,DBLP:conf/kr/Coste-MarquisKMO12}
for work in this direction, but in a different context: only attacks
are equipped with weights, not the arguments themselves.

Also the study of characteristics leaves some open questions. For
example, is it possible to generalise Counting in a way that one does
not consider exactly the same set of attackers, but a set of
comparable attackers?

Also, we would like to use our framework to define a semantics for
the Argument Interchange Format (AIF, \cite{RahwanReed09}) that is
simpler and more direct than the one given in the literature
\cite{DBLP:journals/logcom/BexMPR13}.

Finally, large argumentation graphs will benefit from a modular
design; e.g.\ in \cite{BetzCacean2012} they are often divided into
subgraphs, e.g.\ by drawing boxes around some groups of arguments.
The characteristics of our semantics suggest that modularity
can be obtained by substituting suitable subgraphs with discrete
graphs whose arguments are initially weighted with their degrees
in the original graph.

\bibliographystyle{unsrt}
\bibliography{../bibliography/argTheo}

\appendix
\section*{Omitted proofs}
\renewcommand{\accdegr}[3]%
	{\ensuremath{\texttt{Deg}
	^{\ifthenelse{\equal{#1}{}}{\bs}{#1}}%
	_{\ifthenelse{\equal{#2}{}}{\arggraph}{#2}}
	{\ifthenelse{\equal{#3}{}}{ }{(#3)}}
	\xspace}}

\begin{proof}{ of Thm. \ref{thm:Causality}}
Let \defaultarggraph be a \wasa such that $a$ in \argset. Further assume that there is no $b$ in
\argset such that  	$\accdegr{}{}{b} \neq \neutrald$ and $b \in \attackers{}{a} \cup \supporters{}{a}$.  
Let $c_1, \ldots, c_n$ be an enumeration of all arguments in $\attackers{}{a} \cup 
 \supporters{}{a}$. Neutrality entails $\accdegr{}{\arggraph|_{c_1, \ldots c_n}}{} = 
 \accdegr{}{}{}$. Since $\attackers{\arggraph|_{c_1, \ldots, c_n}}{a} = 
 \supporters{\arggraph|_{c_1, \ldots ,c_n}}{a} = \emptyset$, Conservativity entails 
$\accdegr{}{\arggraph|_{c_1, \ldots , c_n}}{a} = w(a)$.
Thus, $\accdegr{}{}{a} = w(a)$. 
\end{proof}

\begin{proof}{ of Thm.~\ref{thm:Dummy}}
Assume \bwsa \defaultarggraph , such that
		 $w(a) = w(b)$,
		 $\supporters{}{a} = \supporters{}{b} \cup \{ x \} $ and 
		 $\accdegr{}{}{x} = \neutrald$.
	Note that
	 $\attackers{\arggraph |_{x}}{a} = \attackers{\arggraph |_{x}}{b}$,
	 $\supporters{\arggraph |_{x}}{a} = \supporters{\arggraph |_{x}}{b} $.
		 By definition 		 $\attackers{}{a}= \attackers{\arggraph |_{x}}{a} = \attackers{}{b} =\attackers{\arggraph |_{x}}{b} = \emptyset$.	 
	 Thus, Parent Monotony entails 
	 $\accdegr{}{\arggraph |_{x}}{a} = \accdegr{}{\arggraph |_{x}}{b} $. 
	  Because of Neutrality $\accdegr{}{}{a} = \accdegr{}{\arggraph |_{x}}{a}$
	  and
	   $\accdegr{}{\arggraph |_{x}}{b} = \accdegr{}{}{b}$.
	   Therefore,   $\accdegr{}{}{a} = 	 \accdegr{}{}{b}$.
\end{proof}

\begin{proof}{ of Thm.~\ref{thm:stickiness}}
	Assume a \bwsa \defaultarggraph such that 
	 $w(a) = w(b)$,
	 $\supporters{}{a} \backslash \supporters{}{b} = \{ x \} $,
	 $\supporters{}{b} \backslash \supporters{}{a} = \{ y \} $,
	 $\accdegr{}{}{x} > \accdegr{}{}{y}$,  and
	 $\accdegr{}{}{b} = 1$.
	Let $\supporters{}{a} 
	\backslash   \{ x \}  = \supporters{}{b} \backslash  \{ y \}  = \{ c_1, \ldots, c_n \}$. 
	Let $\argset ^\dagger = \{d_0, \ldots, d_n, e_0, \ldots, e_n  \} $ be a set of arguments that 
	 is disjoint with $\argset$.
	 $\arggraph ^\dagger$ is defined as following: 
	   \[
		\argset^{\dagger} = 
		\left(\begin{smallmatrix}
		\argset \\
		d_0 \\
		d_1 \\
		\vdots \\
		d_n \\
		e_0 \\
		e_1 \\
		\vdots \\
		e_n
		\end{smallmatrix}\right),
	    G 	^{\dagger}=
	  	\left(  \begin{smallmatrix}
			G & 0 \\
			0 & 0
		    \end{smallmatrix}\right) , 	
	 	w^{\dagger} = 
	 	\left(\begin{smallmatrix}
	 	w \\
		 \accdegr{}{}{x}\\
	 	 \accdegr{}{}{c_1}\\
		 \vdots \\
		  \accdegr{}{}{c_n} \\
		 \accdegr{}{}{y}\\
 	 	 \accdegr{}{}{c_1}\\
 		 \vdots \\
 		  \accdegr{}{}{c_n}		  
	 	\end{smallmatrix}\right)
	   \] 
It follows from Independence that for any $z\in \argset$, 
$ \accdegr{}{\arggraph ^{\dagger}}{z} = \accdegr{}{}{z}$. Now we apply Interchangeability $2n+2$ times in order to disconnect $a$ and $b$ from $x, c_1, \dots, c_n$ and $y, c_1, \dots, c_n$, respectively. In the resulting \bwsa $\arggraph ^\ddagger$ it holds that
$\supporters{\arggraph ^\ddagger}{a} = 	\{ d_0, d_1, \dots, d_n \} $ and
 $\supporters{\arggraph ^\ddagger}{b} = \{e_0, e_1, \ldots, e_n\}$, while 
 $\accdegr{}{\arggraph ^\ddagger}{a} = \accdegr{}{\arggraph ^\dagger}{a}$  and 
 $\accdegr{}{\arggraph ^\ddagger}{b} = \accdegr{}{\arggraph ^\dagger}{b}$.
  
 Now consider the  \bwsa $\arggraph ^ \prime$ that is the result of removing all outgoing arcs from $a$ and $b$ in  $\arggraph ^\ddagger$. 
 $\arggraph ^\prime$ consists of three disconnected subgraphs: (i) $a$ and its supporters $d_0, d_1, \ldots, d_n$; (ii) $b$ and its supporters $e_0, e_1, \ldots, e_n$; and (iii) the remainder of $\arggraph$ without $a, b$.
 Since there is no argument $ z \in \argset$ such that  $z\in \backers{\arggraph ^\ddagger}{a}$ or $z\in \detractors{\arggraph ^\ddagger}{a}$, it follows from Directionality that $\accdegr{}{\arggraph ^\prime}{a} = \accdegr{}{\arggraph ^\ddagger}{a}$.
For the same reason $\accdegr{}{\arggraph ^\prime}{b} = \accdegr{}{\arggraph ^\ddagger}{b}$.

	 $\arggraph ^a = \langle \argset ^{a}, G ^{a} , w^a\rangle$  and  $\arggraph ^b = \langle 
  \argset ^{b}, G ^{b} , w^b\rangle$ are defined to match two of the subgraphs  in $\arggraph ^\prime$. Note that by 
 assumption $w(a) = w(b)$. Further, $G 	^{a}= G 	^{b}$, since both graphs consists just of one 
	 node that is supported by $n+1$ other nodes.
	   \[
		\argset^{a} = 
		\left(\begin{smallmatrix}
		a \\
		d_0 \\
		d_1 \\
		\vdots \\
		d_n \\
		\end{smallmatrix}\right),
	    G 	^{a}=
	  	\left(  \begin{smallmatrix}
			0 & 1 & \cdots & 1  \\
			0 & 0 & \cdots & 0   \\
			\vdots & & \ddots & \vdots \\ 
			0 & 0 & \cdots & 0   		
		    \end{smallmatrix}\right) , 	
	 	w^{a} = 
	 	\left(\begin{smallmatrix}
	 	w(a) \\
		 \accdegr{}{}{x}\\
	 	 \accdegr{}{}{c_1}\\
		 \vdots \\
		  \accdegr{}{}{c_n} \\
	 	\end{smallmatrix}\right)
	   \] 

	   \[
		\argset^{b} = 
		\left(\begin{smallmatrix}
		b \\
		e_0 \\
		e_1 \\
		\vdots \\
		e_n \\
		\end{smallmatrix}\right),
	    G 	^{b}= G 	^{a} , 	
	 	w^{b} = 
	 	\left(\begin{smallmatrix}
	 	w(a) \\
		 \accdegr{}{}{y}\\
	 	 \accdegr{}{}{c_1}\\
		 \vdots \\
		  \accdegr{}{}{c_n} \\
	 	\end{smallmatrix}\right)
	   \] 
 Because of Independence and Anonymity it follows 
	 $ \accdegr{}{\arggraph ^a}{a} = \accdegr{}{\arggraph ^\prime}{a} $	 
	 and
	 $ \accdegr{}{\arggraph ^b}{b} = \accdegr{}{\arggraph ^\prime}{b} $.
	 In a last step we rename the arguments in $\arggraph ^b$ and define 
%
	$\arggraph ^{r}$ such that 
	$\argset^{r}  = \argset^{a},  
	  G^{r} = G^{a},
	   w^{r} = w^{b}$.

Note that $w^a(a) = w^r(a), \attackers{\arggraph ^{a}}{a} =   \attackers{\arggraph ^{r}}{a} = \emptyset$ and $\supporters{\arggraph ^{a}}{a} =   \supporters{\arggraph ^{r}}{a}$. Further, 
since by assumption 	 $\accdegr{}{}{y} < \accdegr{}{}{x}$, it follows, for any argument $z$ 
 in $\supporters{\arggraph ^{r} }{a} $, that $w^r(z)\leq w^a(z)$, and, thus, 
$  \accdegr{}{\arggraph ^{r}}{z} \leq   \accdegr{}{\arggraph ^{a}}{z}$ (because of Conservativity). Hence, Parent Monotony entails 
$ \accdegr{}{\arggraph ^{r}}{a} \leq   \accdegr{}{\arggraph ^{a}}{a}$. 
 Anonymity gives us that
 	 $ \accdegr{}{\arggraph ^{r} }{} = \accdegr{}{\arggraph ^b}{} $. Thus,
 $ \accdegr{}{\arggraph ^{r}}{a} =
 	 \accdegr{}{\arggraph ^b}{b} =
 	 \accdegr{}{\arggraph ^\prime}{b} =
 	 \accdegr{}{\arggraph ^\ddagger}{b}	=
 	 \accdegr{}{\arggraph ^\dagger}{b}	=
 	 \accdegr{}{}{b} = 1
	$.
 	 Further,
	 $ 	 \accdegr{}{\arggraph ^a}{a} =
	 	 \accdegr{}{\arggraph ^\prime}{a} =
	 	 \accdegr{}{\arggraph ^\ddagger}{a}	=
	 	 \accdegr{}{\arggraph ^\dagger}{a}	=
	 	 \accdegr{}{}{a}
	 $.
 Therefore, $1 \leq 	 	 \accdegr{}{}{a}$.
  Since $\accdegr{}{}{a}\in [0,1]$, $\accdegr{}{}{a} = 1$.
\end{proof}

\begin{proof}{ of Thm.~\ref{thm:French-characteristics}}
\noindent Anonymity, Independence, Equivalence: clear, since our
  definition (when restricted to graphs with supports relations only)
  is the same definition as in \cite{DBLP:conf/ijcai/AmgoudB16}.

\medskip\noindent Non-Dilution: is easily seen to be a special case of
Directionality.

\medskip\noindent Monotony and Counting and Reinforcement: this
follows respectively from Parent Monotony and Impact and Reinforcement, using
Independence, Interchangeability, Anonymity and Conservativity, as in
the proof of Thm. \ref{thm:stickiness} in order to switch from a
single-graph property to a two-graph property.

\medskip\noindent Dummy: see Thm.~\ref{thm:Dummy}.

\medskip\noindent Minimality: same as our Conservativity.

\medskip\noindent Strengthening: given arguments $a$, $b$ with
$w(a)<1$ and $b \in \supporters{}{a}$, we need to show that
$\accdegr{}{}{a}>w(a)$.  Now let
$\supporters{}{a}=\{b_1,\ldots,b_n\}$. Without loss of generality, we
can assume that $a$ is not among the $\{b_1,\ldots,b_n\}$. If not, use
Independence and Interchangeability as in the proof of
Thm. \ref{thm:stickiness} in order to replace $a$ in
$\supporters{}{a}$ by an equivalently weighted node.  Now by
Conservativity,
$\accdegr{}{\arggraph|_{b_1,\ldots,b_n}}{a}=w(a)$. Repeated
application of Impact (noting that there are no attacks, hence no
detractors) gives $\accdegr{}{}{a}>w(a)$, unless $w(a)=1$ (which
however we have excluded).

\medskip\noindent Strengthening Soundness: this follows easily from
Causality.

\medskip\noindent Coherence: in case of $\accdegr{}{}{b}>0$, this
follows easily from Initial Monotony. If $\accdegr{}{}{b}=0$,
use Conservativity and Parent Monotony to show that $\accdegr{}{}{a}>0$.

\medskip\noindent Boundedness: we have called this stickiness (in order
to distinguish it from our boundedness), and stickiness is proven in
Thm.~\ref{thm:stickiness}.
\end{proof}

\renewcommand{\accdegr}[3]{\ensuremath{\texttt{Deg}^{\ifthenelse{\equal{#1}{}}{ad}{#1}}%
_{\ifthenelse{\equal{#2}{}}{\arggraph}{#2} }}(#3)\xspace}

\begin{proof}{ of Thm.~\ref{thm:direct-aggr-properties}}
\noindent Anonymity: The isomorphism $f$ can be organised into a permutation
  matrix $P$ such that $w'=Pw$ and $G'=PGP^{-1}$.  These equations
  transform $\accdegrvec{\semdir}{\arggraph,d} = w+\frac{1}{d}G\accdegrvec{\semdir}{\arggraph,d}$ (holding by
  Thm.~\ref{thm:direct-aggregation-equation}) into
  $P\accdegrvec{\semdir}{\arggraph,d}=w'+\frac{1}{d}G'P\accdegrvec{\semdir}{\arggraph,d}$. Hence, again by by
  Thm.~\ref{thm:direct-aggregation-equation},
  $P\accdegrvec{\semdir}{\arggraph,d}=\accdegrvec{\semdir}{\arggraph^\prime,d}$.

\medskip\noindent Equivalence:
Since the bijective functions $f$ and $g$ must have disjoint
domains and images, they can be combined into a
permutation matrix $P$ that behaves as identity outside the union of 
$f$' and $g$'s domains. The assumptions then
can be written more compactly as 
\begin{eqnarray}
w(a)=w(b)\label{ass:w}\\
P\accdegrvec{\semdir}{\arggraph,d}=\accdegrvec{\semdir}{\arggraph,d}\label{ass:P}\\
\parent{}{a}P=\parent{}{b}\label{ass:infl}
\end{eqnarray}
Now by Thm.~\ref{thm:direct-aggregation-equation},
$\accdegr{\semdir}{\arggraph,d}{b}=w(b)+\frac{1}{d}G\accdegrvec{\semdir}{\arggraph,d}(b)$. This is
$w(b)+\frac{1}{d}\parent{}{b}\accdegrvec{\semdir}{\arggraph,d}$, which by (\ref{ass:w}) and
(\ref{ass:infl}) is $w(a)+\frac{1}{d}\parent{}{a}P\accdegrvec{\semdir}{\arggraph,d}$.  By
(\ref{ass:P}), we arrive at $w(a)+\frac{1}{d}\parent{}{a}\accdegrvec{\semdir}{\arggraph,d}$, which
is $w(a)+\frac{1}{d}G\accdegrvec{\semdir}{\arggraph,d}(a)$, and again by 
Thm.~\ref{thm:direct-aggregation-equation}, this is $\accdegr{\semdir}{\arggraph,d}{a}$.

\medskip\noindent Directionality: We prove the following lemma
$$\forall y\in\backers{}{x}\cup\detractors{}{x}\cup\{x\},\ G^iw(y)=G'^iw(y)$$
by induction over $i$. For $i=0$, both sides of the equation are $w(y)$.
Now let us prove the statement for $i+1$: $G^{i+1}w(y)=GG^iw(y)=
\parent{}{y}G^iw$. Since all for $y'\in\parent{}{y}$, we have
$y'\in\backers{}{x}\cup\detractors{}{x}\cup\{x\}$, by induction
hypothesis, $G^iw(y')=G'^iw(y')$ for such $y'$, and hence
$\parent{}{y}G^iw=\parent{}{y}G'^iw$.  Now by the assumption, $y\not=
a_j$, and hence $\parent{}{y}=\parent{\arggraph^\prime}{y}$.  Thus
$\parent{}{y}G'^iw=\parent{\arggraph^\prime}{y}G'^iw=G'G'^iw(y)=G'^{i+1}w(y)$.
Hence altogether, $G^{i+1}w(y)=G'^{i+1}w(y)$, and the lemma is proved.
Since the lemma applies in particular to $y=x$, we get
$\accdegr{\semdir}{\arggraph,d}{x} =\sum_{i=0}^\infty \ (\frac{1}{d}G)^iw(x)=\sum_{i=0}^\infty \ (\frac{1}{d}G')^iw(x)
=\accdegr{\semdir}{\arggraph^\prime,d}{x}$.

\medskip\noindent Conservativity: By Thm.~\ref{thm:direct-aggregation-equation},
$\accdegr{\semdir}{\arggraph,d}{a}=w(a)+\frac{1}{d}\parent{}{a}\accdegrvec{\semdir}{\arggraph,d}$.  By the
assumption, $\parent{}{a}=\matr{0\ldots 0}$, so
$\accdegr{\semdir}{\arggraph,d}{a}=w(a)$.

\medskip\noindent Initial Monotony:
Using the assumptions and Thm.~\ref{thm:direct-aggregation-equation},
we have
$\accdegr{\semdir}{\arggraph,d}{a}=w(a)+\frac{1}{d}\parent{}{a}\accdegrvec{\semdir}{\arggraph,d}
>w(b)+\frac{1}{d}\parent{}{a}\accdegrvec{\semdir}{\arggraph,d}=w(b)+\frac{1}{d}\parent{}{b}\accdegrvec{\semdir}{\arggraph,d}=\accdegr{\semdir}{\arggraph,d}{b}$.

\medskip\noindent Neutrality: By Thm.~\ref{thm:direct-aggregation-equation},
$\accdegrvec{\semdir}{\arggraph,d} = w+\frac{1}{d}G\accdegrvec{\semdir}{\arggraph,d}$.
Since $\accdegr{\semdir}{}{a}= 0$, this is $w+\frac{1}{d}\overline{G}\accdegrvec{\semdir}{\arggraph,d}$,
where $\overline{G}$ is $G$ with the column for $a$ replaced by zeros.
Replacing (in $\overline{G}$) also the row for $a$ with zeros leads to $G'$,
where $\arggraph |_{a} =\langle {\argset ^\prime, G^\prime, w^\prime}\rangle $.
Therefore, $w+\frac{1}{d}\overline{G}\accdegrvec{\semdir}{\arggraph,d}$
and $w'+\frac{1}{d}G'\accdegrvec{\semdir}{\arggraph,d}$ are equal except
possibly for $a$. However, on $a$ they both are $0$.
Thus $\accdegrvec{}{}{} = \accdegrvec{}{\arggraph |_{a}}{}$.

\medskip\noindent Parent Monotony:
We have 
$$\begin{array}{cl}
  & \accdegr{\semdir}{\arggraph,d}{a}\\
\stackrel{\text{Thm.~\ref{thm:direct-aggregation-equation}}}= & w(a)+\frac{1}{d}\parent{}{a}\accdegrvec{\semdir}{\arggraph,d}\\
\stackrel{\ref{monotony:w}}{=} & w'(a)+\frac{1}{d}\parent{}{a}\accdegrvec{\semdir}{\arggraph,d}\\
= & w'(a) + \frac{1}{d} \left 
       (\sum_{b\in \supporters{}{a}} \accdegr{\semdir}{\arggraph,d}{b} - 
	\sum_{c\in \attackers{}{a}} \accdegr{\semdir}{\arggraph,d}{c}\right)\\
\stackrel{\ref{monotony:subset}}{\leq} &
w'(a) + \frac{1}{d} \left 
       (\sum_{b\in \supporters{}{a}} \accdegr{\semdir}{\arggraph,d}{b} - 
	\sum_{c\in \attackers{\arggraph'}{a}} \accdegr{\semdir}{\arggraph,d}{c}\right)\\
\stackrel{\ref{monotony:att},\ref{monotony:deg}}{\leq} &
w'(a) + \frac{1}{d} \left 
       (\sum_{b\in \supporters{}{a}} \accdegr{\semdir}{\arggraph',d}{b} - 
	\sum_{c\in \attackers{\arggraph'}{a}} \accdegr{\semdir}{\arggraph',d}{c}\right)\\
\stackrel{\ref{monotony:subset}}{\leq} &
w'(a) + \frac{1}{d} \left 
       (\sum_{b\in \supporters{\arggraph'}{a}} \accdegr{\semdir}{\arggraph',d}{b} - 
	\sum_{c\in \attackers{\arggraph'}{a}} \accdegr{\semdir}{\arggraph',d}{c}\right)\\
= & w(a)+\frac{1}{d}\parent{\arggraph'}{a}\accdegrvec{\semdir}{\arggraph',d}\\
\stackrel{\text{Thm.~\ref{thm:direct-aggregation-equation}}}=  & \accdegr{\semdir}{\arggraph',d}{a}
\end{array}$$

\medskip\noindent Impact: We only show the first half, the second half
being dual.
Let $\arggraph|_{b}=\langle {\argset, G', w'}\rangle$.
We show the following lemma:
$$\begin{array}{l}
x\not= b\wedge b\not\in\backers{}{x} \Rightarrow \accdegr{\semdir}{\arggraph,d}{x}\leq\accdegr{\semdir}{\arggraph|_{b},d}{x}\\
b\not\in\detractors{}{x} \Rightarrow \accdegr{\semdir}{\arggraph,d}{x}\geq\accdegr{\semdir}{\arggraph|_{b},d}{x}
\end{array}$$
This follows from
\begin{eqnarray}
x\not= b\wedge b\not\in\backers{}{x} \Rightarrow G^iw(x)\leq G'^iw'(x)
\label{back-leq}\\
b\not\in\detractors{}{x} \Rightarrow G^iw(x)\geq G'^iw'(x)
\label{detr-geq}
\end{eqnarray}
which we show simultaneously by induction over $i$. The induction base 
$i=0$ follows since $w$ and $w'$ only differ in that $w'(b)=0$.
Concerning (\ref{back-leq}) for $i+1$, 
$G^{i+1}w(x)=GG^iw(x)=\parent{}{x}G^iw\stackrel{(*)}{\leq}\parent{}{x}G'^iw'\stackrel{(**)}{=}\parent{\arggraph|_{b}}{x}G'^iw'= G'^iw'(x)=G'G'^iw'(x)=G'^{i+1}w'(x)$.
Now for $(*)$, we need to show 
\begin{itemize}
\item $y\in\supporters{}{x}$ implies $G^iw(y)\leq G'^iw(y)$. This
  follows from the induction hypothesis since $y\not= b\wedge
  b\not\in\backers{}{y}$.  The latter follows since $y=b$ as well as
  $b\in\backers{}{y}$ both would imply $b\in\backers{}{x}$.
\item $y\in\attackers{}{x}$ implies $G^iw(y)\geq G'^iw(y)$. This
  follows from the induction hypothesis since
  $b\not\in\detractors{}{y}$.  The latter follows since
  $b\in\detractors{}{y}$ would imply $b\in\backers{}{x}$.
\end{itemize}
$(**)$ holds because $\parent{}{x}$ and $\parent{\arggraph|_{b}}{x}$
can only differ in the column for $b$ --- but $G'^iw$ has a zero
row for $b$.

\medskip\noindent (\ref{detr-geq}) is shown similarly.

\medskip\noindent Finally,
$\accdegr{\semdir}{\arggraph,d}{a}=\parent{}{a}\accdegrvec{\semdir}{\arggraph,d}\stackrel{(***)}<
\parent{\arggraph|_{b}}{a}\accdegrvec{\semdir}{\arggraph|_{b},d}=\accdegr{\semdir}{\arggraph|_{b},d}{a}$. $(***)$
holds because
\begin{itemize}
\item for $y\in\supporters{}{a}$, we have $y\not= b$ (because $b$
  attacks $a$) and $b\not\in\backers{}{y}$. By the lemma,
  $\accdegr{\semdir}{\arggraph,d}{y}\leq\accdegr{\semdir}{\arggraph|_{b},d}{y}$.
\item for $y\in\attackers{}{a}$, we have $b\not\in\detractors{}{y}$. By
  the lemma, $\accdegr{\semdir}{\arggraph,d}{y}\leq\accdegr{\semdir}{\arggraph|_{b},d}{y}$.
\end{itemize}
Since $\parent{\arggraph|_{b}}{a}$ is $\parent{}{a}$ with the
column for $b$ set to $0$, but $\parent{}{a}(b)<0$ and
$\accdegr{\semdir}{\arggraph,d}{b}>0$, $(***)$ follows.

\medskip\noindent Reinforcement: Concerning 1, the assumptions imply
$\parent{\arggraph}{a}\accdegrvec{\semdir}{\arggraph,d}>\parent{\arggraph'}{a}\accdegrvec{\semdir}{\arggraph,d}$.
 Then $\accdegr{\semdir}{\arggraph,d}{a}=\parent{\arggraph}{a}\accdegrvec{\semdir}{\arggraph,d}>\parent{}{a}\accdegrvec{\semdir}{\arggraph',d}\stackrel{(*)}=\parent{\arggraph'}{a}\accdegrvec{\semdir}{\arggraph',d}=\accdegr{\semdir}{\arggraph',d}{a}$, where $(*)$ holds because the attackers
and supports of $a$ agree for $\arggraph$ and $\arggraph'$.
The proof of 2 is analogous.

\medskip\noindent Causality: follows from Thm.~\ref{thm:Causality}.

\medskip\noindent Neutralisation: Using Thm.~\ref{thm:direct-aggregation-equation}, we
get
$\accdegrvec{\semdir}{\arggraph,d}=w+\frac{1}{d}G\accdegrvec{\semdir}{\arggraph,d}\stackrel{(*)}=w+\frac{1}{d}G'\accdegrvec{\semdir}{\arggraph,d}$,
where $(*)$ follows from the assumptions. By the uniqueness part of
Thm.~\ref{thm:direct-aggregation-equation},
$\accdegrvec{\semdir}{\arggraph,d}=\accdegrvec{\semdir}{\arggraph',d}$.

\medskip\noindent Continuity: Matrix multiplication is continuous.

\medskip\noindent Linearity: Let $i$ be such that $a=a_i$.
Then $c_1=\sum_{1\leq j\leq n, j\not=i}\Pr^{G,d}_{ij}w_j$ and $c_2=\Pr^{G,d}_{ii}$.

\medskip\noindent Interchangeability: since $a_j$ and $a_k$ have the
same degree, the equation in
Thm.~\ref{thm:direct-aggregation-equation} is not affected by the
interchange.

\medskip\noindent Reverse impact: choose
$\accdegr{\semdir}{\arggraph,d}{b}=-1$ and proceed similar to the proof
of Impact.
\end{proof}

\bigskip
\begin{proof}{ of Thm.~\ref{thm:sigmoid-direct-aggr-properties}}
The proof parallels that of Thm.~\ref{thm:direct-aggr-properties},
using Equation~\ref{eq:sigmoid-fixpoint} instead of Thm.~\ref{thm:direct-aggregation-equation} and
noting that both $\sigma$ and $\sigma^{-1}$ are inverses of each other,
and each of them is strictly
monotone, continuous and commutes with permutation matrix multiplication.
For Neutrality and Impact use that $\sigma(0)=\frac{1}{2}$.
\end{proof}

In the proof of Thm.~\ref{thm:French-aggr-properties} below,
we essentially rely on the following fixed-point property, which
is Theorem~9 in \cite{DBLP:conf/ijcai/AmgoudB16}:
\begin{theorem}\label{thm:French-fixpoint}
$$\accdegr{\semrsig}{}{a}=w(a)+(1-w(a))\frac{\sum_{b\in \supporters{}{a}}\accdegr{\semrsig}{}{b}}{1+\sum_{b\in \supporters{}{a}}\accdegr{\semrsig}{}{b}}$$
or recast in matrix notation
$$\accdegrvec{\semrsig}{}=w+(I-Diag(w))f(G\accdegrvec{\semrsig}{})$$
where $f(x)=\frac{x}{1+x}$ is applied point-wise to a vector, and
$Diag$ uses a vector to fill the diagonal of a matrix, which is
otherwise zero.
\end{theorem}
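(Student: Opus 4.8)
The plan is to obtain the fixed-point identity simply by passing to the limit in the recursion defining $\fsemrsig_i$. The first step is to exploit that the graph carries support relations only: every argument then has an empty attacker set, so for each $a\in\argset$ the aggregate $s^i(a)=\sum_{b\in\supporters{}{a}}\fsemrsig_{i-1}(b)$ is a finite sum of values in $[0,1]$ and is therefore non-negative. Consequently only the second branch of the case distinction in the definition of $\fsemrsig_i$ can ever be taken, so that $\fsemrsig_i(a)=\frac{w(a)+s^i(a)}{1+s^i(a)}$ for all $i\geq 1$.

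The second step is to take the limit $i\to\infty$. Convergence of the sequences $\fsemrsig_i(a)$ for support-only graphs is established in \cite{DBLP:conf/ijcai/AmgoudB16}, so I may set $D(a):=\accdegr{\semrsig}{}{a}=\lim_{i\to\infty}\fsemrsig_i(a)$ and abbreviate $S(a):=\sum_{b\in\supporters{}{a}}D(b)$. Since $s^i(a)$ is a finite sum of pointwise-convergent sequences, $s^i(a)\to S(a)$, and since $x\mapsto\frac{w(a)+x}{1+x}$ is continuous for $x\geq 0$, taking limits on both sides of the displayed recursion gives $D(a)=\frac{w(a)+S(a)}{1+S(a)}$. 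A one-line rearrangement, $\frac{w(a)+S(a)}{1+S(a)}=\frac{w(a)(1+S(a))+(1-w(a))S(a)}{1+S(a)}=w(a)+(1-w(a))\frac{S(a)}{1+S(a)}$, then produces exactly the scalar form claimed in the theorem.

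The third step is to check that the matrix version asserts the same thing coordinatewise. For a support-only graph, $g_{ab}=1$ precisely when $b$ supports $a$ and is $0$ otherwise, so the $a$-component of $G\accdegrvec{\semrsig}{}$ equals $S(a)$; applying $f(x)=\frac{x}{1+x}$ pointwise turns it into $\frac{S(a)}{1+S(a)}$, and left-multiplication by the diagonal matrix $I-Diag(w)$ scales that entry by $1-w(a)$, after which adding $w$ reproduces the scalar identity in every coordinate, yielding $\accdegrvec{\semrsig}{}=w+(I-Diag(w))f(G\accdegrvec{\semrsig}{})$. I do not anticipate a serious obstacle: the only point requiring genuine care is the interchange of the limit with the nonlinear map, which is justified by continuity together with the externally supplied convergence, and there is no circularity with Thm.~\ref{thm:French-aggr-properties}, since both the convergence and this fixed-point characterisation are imported from \cite{DBLP:conf/ijcai/AmgoudB16} rather than derived from the very characteristics we are about to verify.
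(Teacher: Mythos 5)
Your derivation is correct, but it is worth knowing that the paper itself offers no proof of this statement: Theorem~\ref{thm:French-fixpoint} is introduced with the words ``which is Theorem~9 in \cite{DBLP:conf/ijcai/AmgoudB16}'' and is imported wholesale from that reference, exactly as the convergence result is. So you have supplied an argument where the paper supplies a citation. Your argument itself is sound: on a support-only graph $\attackers{}{a}=\emptyset$, and since $\fsemrsig_{i-1}$ takes values in $[0,1]$ (which for support-only graphs follows by an easy induction, as the branch $x\mapsto\frac{w(a)+x}{1+x}$ maps $[0,\infty)$ into $[w(a),1]$), the aggregate $s^i(a)$ is non-negative, so only the second branch of the case distinction is ever active; the map $x\mapsto\frac{w(a)+x}{1+x}$ is continuous on $[0,\infty)$, so the externally supplied convergence lets you pass to the limit despite the index shift between $\fsemrsig_i$ and $s^i$; the rearrangement $\frac{w(a)+S(a)}{1+S(a)}=w(a)+(1-w(a))\frac{S(a)}{1+S(a)}$ is exact; and the matrix form is indeed just the coordinatewise restatement, since for a support-only graph the $a$-th entry of $G\accdegrvec{\semrsig}{}$ is $\sum_{b\in\supporters{}{a}}\accdegr{\semrsig}{}{b}$. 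You are also right that there is no circularity with Theorem~\ref{thm:French-aggr-properties}. What your route buys is a self-contained verification (modulo convergence) that the combined bipolar definition in this paper really does specialise to the Amgoud--Ben-Naim fixed-point equation when attacks are absent --- a compatibility check the paper asserts implicitly but never carries out; what it costs is nothing, since the only external ingredient you use (convergence for support-only graphs) is one the paper must import anyway.
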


\begin{proof}{ of Thm.~\ref{thm:French-aggr-properties}}
\medskip\noindent Anonymity, Independence, Equivalence: clear, since our
  definition (when restricted to graphs with supports relations only)
  is the same definition as in \cite{DBLP:conf/ijcai/AmgoudB16}.
  The same holds for Conservativity, which however is called Minimality
  in \cite{DBLP:conf/ijcai/AmgoudB16}.

\medskip\noindent Neutrality, Initial Monotony, Parent Monotony,
Reinforcement, Interchangeability, Neutralisation:
analogous to the proof of Thm.~\ref{thm:direct-aggr-properties}, where
now Thm.~\ref{thm:French-fixpoint} plays the role of
Thm.~\ref{thm:direct-aggregation-equation}, and noting that the
function $f$ is strictly monotonic. Neutralisation additionally
needs uniqueness for Thm.~\ref{thm:French-fixpoint}, which can
be shown using the convergence proof.

\medskip\noindent Directionality: 
Since there are no attack relations, we can ignore detractors.
The proof is analogous to the proof of Thm.~\ref{thm:direct-aggr-properties},
we still detail it here to indicate the necessary modifications.
We prove the following lemma (where we need to make $\arggraph$
explicit as parameter of $\fsemrsig$):
$$\forall y\in\backers{}{x}\cup\{x\},\ \fsemrsig_{\arggraph,i}(y)=\fsemrsig_{\arggraph',i}(y)$$
by induction over $i$. For $i=0$, both sides of the equation are $w(y)$.
Now let us prove the statement for $i+1$:  Since all for $y'\in\parent{}{y}$, we have
$y'\in\backers{}{x}\cup\{x\}$, by induction
hypothesis, $\fsemrsig_{\arggraph,i}(y')=\fsemrsig_{\arggraph',i}(y')$ for such $y'$, and hence
$\parent{}{y}\fsemrsig_{\arggraph,i}=\parent{}{y}\fsemrsig_{\arggraph',i}$.  
Now by the assumption, $y\not=
a_j$, and hence $\parent{}{y}=\parent{\arggraph^\prime}{y}$.  Thus
$\parent{}{y}\fsemrsig_{\arggraph',i}=\parent{\arggraph^\prime}{y}\fsemrsig_{\arggraph',i}$.
Hence altogether, $\fsemrsig_{\arggraph,i+1}(y)=
w(y)+(1-w(y))f(\parent{}{y}\fsemrsig_{\arggraph,i}(y))=
w(y)+(1-w(y))f(\parent{\arggraph^\prime}{y}\fsemrsig_{\arggraph^\prime,i}(y))=
\fsemrsig_{\arggraph',i}(y)$, and the lemma is proved.
Since the lemma applies in particular to $y=x$, we get
$\accdegr{\semrsig}{\arggraph}{x} =\lim_{i\to\infty} \fsemrsig_{\arggraph,i}(x) =\lim_{i\to\infty} \fsemrsig_{\arggraph',i}(y)
=\accdegr{\semrsig}{\arggraph^\prime}{x}$.

\medskip\noindent Impact: Modify the proof of Impact in
Thm.~\ref{thm:direct-aggr-properties} in the same way as we did for
Directionality above.

\medskip\noindent Causality: follows from Thm.~\ref{thm:Causality}.

\medskip\noindent Continuity: all the operations involved in the
definition of $\accdegrvec{\semrsig}{}$ are continuous.
\end{proof}

\eat{
\bigskip
\begin{proof}{ of Thm.~\ref{thm:compact-aggr-properties}}

We state a well-known fact about limits in $\overline{\mathbb{R}}$:
\begin{fact}\label{fact:leq_limits}
$$\text{If }\forall i\in\mathbb{N}, x_i\leq y_i,
  \text{ then }\lim_{i\to\infty}x_i \leq \lim_{i\to\infty}y_i$$
\end{fact}

\noindent Anonymity, Independence,  Directionality,
Conservativity, Neutrality: for
compact direct aggregation semantics, the degree is computed using a
certain limit of degrees for direct aggregation semantics. By applying
Fact.~\ref{fact:leq_limits} twice, equality of the individual degrees
implies equality of their limits.  Hence, the
characteristics follow from Thm.~\ref{thm:direct-aggr-properties}.

\medskip\noindent Initial Monotony, Impact: Here, a
similar argument applies to strict inequalities, noting that the
resulting inequality is not required to be strict in case that the
minimum or maximum value has been reached.\ednote{add more details}

\medskip\noindent Equivalence, Parent Monotony, Neutralisation, Interchangeability, Reinforcement: Here is a counterexample:

G: e->b->a, c->a, d->a, w(a)=0, w(b)=\infty, w(c)=-\infty, w(d)=0, w(e)=\infty

G': b->a, c->a, d->a, w(a)=0, w(b)=\infty, w(c)=-\infty, w(d)=1

Es gilt Deg_({a,b,c,d},G',w')(d) > Deg_({a,b,c,d,e},G,w)(d), und Deg_({a,b,c,d},G',w')(x) = Deg_({a,b,c,d,e},G,w)(x) für x=b bzw. x=c.
 
Nach Reinforcement müsste Deg_({a,b,c,d},G',w')(a) > Deg_({a,b,c,d,e},G,w)(a) sein (oder beide max oder beide min). Tatsächlich ist aber Deg_({a,b,c,d,e},G,w)(a) = \infty > 1 = Deg_({a,b,c,d},G',w')(a). 

\medskip\noindent Linearity: since minimal or maximal values are
excluded, this follows from Thm.~\ref{thm:direct-aggr-properties}.

\medskip\noindent Causality: follows from Thm.~\ref{thm:Causality}.

\medskip\noindent Stickiness: follows from Thm.~\ref{thm:xxx}.

\medskip\noindent Boundedness: obvious. 

\medskip\noindent Continuity: 
 For initial plausibilities not involving minimal or maximal values,
 the degree is obtained by matrix multiplication, which is continuous.
\end{proof}

\bigskip
\begin{proof}{ of Thm.~\ref{thm:sigmoid-compact-aggr-properties}}
  Combine the arguments from the proofs of
  Thm.~\ref{thm:direct-aggr-properties} and
  Thm.~\ref{thm:compact-aggr-properties}.	
\end{proof}
}

\end{document}